\theoremstyle{plain}
\newenvironment{myitemize}{\begin{list}{$\bullet$}
		{\setlength{\topsep}{1mm}
			\setlength{\itemsep}{0.25mm}
			\setlength{\parsep}{0.25mm}
			\setlength{\itemindent}{0mm}
			\setlength{\partopsep}{0mm}
			\setlength{\labelwidth}{15mm}
			\setlength{\leftmargin}{4mm}}}{\end{list}}
\newcommand{\mc}{\mathcal}
\newcommand{\E}{\mb{E}}
\newcommand{\m}[1]{{\bm{#1}}}
\renewcommand{\mc}[1]{\ensuremath{\mathcal{#1}}} 
\newcommand{\g}[1]{\mbox{\boldmath $#1$}}
\newcommand{\mb}[1]{{\mathbb{#1}}}
\DeclarePairedDelimiterX{\norm}[1]{\lVert}{\rVert}{#1}
\definecolor{darkred}{RGB}{150,0,0}
\definecolor{darkgreen}{RGB}{0,150,0}
\definecolor{darkblue}{RGB}{0,0,200}
\let\phi\varphi
\newtheorem{theorem}{Theorem}[section] %[section]
\newtheorem{lemma}{Lemma}[section] %[section]
\newtheorem{corollary}{Corollary}[section] %[section]
\newtheorem{definition}{Definition}[section] %[section]
\newtheorem{assumption}{Assumption}
\newcommand{\nn}{\nonumber}
\newcommand{\fedavg}{\text{FedAvg}\xspace}
\newcommand{\scaffold}{\text{SCAFFOLD}\xspace}
\newcommand{\fedmsa}{\text{FedMSA}\xspace}
\newcommand{\fednest}{\text{FedNest}\xspace}
\definecolor{Gray}{gray}{0.85}
\definecolor{LightCyan}{rgb}{0.88,1,1}
\newcolumntype{a}{>{\columncolor{Gray}}c}
\newcolumntype{b}{>{\columncolor{white}}c}
\newtcolorbox{mybox}[3][]
{
  colframe = #2!15,
  colback  = #2!10,
  coltitle = #2!10!black,  
  title    = {#3},
  boxsep   = 0.25pt,
  left     = 0.5pt,
  right    = 0.5pt,
  top      = 0pt,
  bottom   = 0pt,
  width=\linewidth+13pt,
  #1,
}
\title{Federated Multi-Sequence Stochastic Approximation with\\Local Hypergradient Estimation}
\author{{Davoud Ataee Tarzanagh}\thanks{University of Michigan and University of Pennsylvania, Email: \texttt{tarzanaq@upenn.edu}}
\and{Mingchen Li }\thanks{University of California, Riverside, Email: \texttt{mli176@.ucr.edu}
}
\and{Pranay Sharma}\thanks{Carnegie Mellon University, Email: \texttt{pranaysh@andrew.cmu.edu}
}
\and{{Samet Oymak}\thanks{University of Michigan and University of California, Riverside, Email: \texttt{oymak@umich.edu}}
}
}
\date{}
\begin{document}
%################################################
\doparttoc % Tell to minitoc to generate a toc for the parts
\faketableofcontents % Run a fake tableofcontents command for the partocs
%################################################

\maketitle

\begin{abstract}
Stochastic approximation with multiple coupled sequences (MSA) has found broad applications in machine learning as it encompasses a rich class of problems including bilevel optimization (BLO), multi-level compositional optimization (MCO), and reinforcement learning (specifically, actor-critic methods). However, designing provably-efficient federated algorithms for MSA has been an elusive question even for the special case of double sequence approximation (DSA). Towards this goal, we develop FedMSA which is the first federated algorithm for MSA, and establish its near-optimal communication complexity. As core novelties, (i) FedMSA enables the provable estimation of hypergradients in BLO and MCO via local client updates, which has been a notable bottleneck in prior theory, and (ii) our convergence guarantees are sensitive to the heterogeneity-level of the problem. We also incorporate momentum and variance reduction techniques to achieve further acceleration leading to near-optimal rates. Finally, we provide experiments that support our theory and demonstrate the empirical benefits of FedMSA. As an example, FedMSA enables order-of-magnitude savings in communication rounds compared to prior federated BLO schemes. Code is available at \url{https://github.com/ucr-optml/FedMSA}.
\end{abstract}
\section{Introduction}\label{sec:fedavg:full}
Stochastic approximation (SA) methods \cite{robbins1951stochastic} are iterative techniques widely used in machine learning (ML) to estimate zeros of functions when only noisy function value estimates are available. Initially, SA focused on asymptotic convergence for simple problems, such as finding solutions to $\m{g}(\m{x}) = 0$ or minimizing $f(\m{x})$. However, recent years have seen increased interest in more complex applications, including bilevel and multi-level stochastic optimization problems, leading to the development of double-sequence \cite{borkar1997stochastic}  and multi-sequence SA \cite{shen2022single} techniques to address these challenges. For example, the bilevel problem \eqref{main:blo} can be effectively tackled using double-sequence stochastic approximation (DSA). By imposing appropriate smoothness conditions, such as the strong convexity of $g$ and the differentiability of $f$ and $g$, we are able to derive the first-order optimality conditions: if $(\m x, \m w)$ is a local minimum of \eqref{main:blo}, there exists a unique $\m{v}$ such that
\begin{equation}\label{eqn:optima}
    \left.\begin{matrix}
    \hspace{-77pt} & \nabla_{\m x} f(\m x, \m w) + \nabla^2_{\m x, \m w} g(\m x, \m w) \m v= \m 0, \vspace{4pt}\\ 
& \nabla_{\m w}^2 g(\m x, \m w)   \m v + \nabla_{\m w} f(\m x, \m w)=\m 0,\\
&\nabla_{\m w} g(\m x, \m w) = \m 0.
    \end{matrix}\right\}
\end{equation}
\tcbset{colback=white!5!white,colframe=violet!75!black,fonttitle=\bfseries}
\begin{tcolorbox}[title=\qquad \qquad \qquad \qquad \qquad BLO  and its Mappings,height=4cm, sidebyside,righthand width=7cm]
%\begin{small}
\begin{align}\tag{BLO}\label{main:blo}
&\underset{\m{x} \in \mb{R}^{{d}_0}}{\min}  
%\begin{array}{c}
 f\left(\m{x},\m{w}^\star(\m{x})\right), 
%\end{array}\\
\\
\nonumber
\hspace{-.8cm}&\text{s.t.}~~
%\begin{array}[t]{l} 
\m{w}^\star(\m{\m{x}})
%=
\in \underset{ \m{w}\in \mb{R}^{{d}_1}}{\textnormal{argmin}}~g\left(\m{x},\m{w}\right) 
%\end{array}
%\end{array}     
%   \end{split} 
\end{align}
%\end{small}
% \begin{align} \tag{BLO}\label{fedblo_main}
% %\begin{array}{ll}
% \underset{\m{x} \in \mb{R}^{{d}_1}}{\min}  &
% %\begin{array}{c}
% \frac{1}{M} \sum_{m=1}^{M} f^m\left(\m{x},\m{w}^\star(\m{x})\right) 
% %\end{array}\\
% \\
% \nonumber 
% \text{s.t.} 
% %\begin{array}[t]{l} 
% \m{w}^\star(\m{\m{x}})
% %=
% &\in \underset{ \m{w}\in \mb{R}^{{d}_2}}{\textnormal{argmin}}~\frac{1}{M}\sum_{m=1}^{M} g^m\left(\m{x},\m{w}\right) 
% %\end{array}
% %\end{array}
% \end{align}
\tcblower
%\begin{small}
%\begin{subequations}\label{eq:map:bilevel}
\begin{align}\label{eq:map:bl}
\nonumber
\hspace{-.3cm}
\mb{S}(\m{x},\m{z})&=
\begin{bmatrix}
 \nabla_\m{w} g(\m{x}, \m{w})
\\
    \nabla_{\m{w}}^2 g(\m{x}, \m{w}) \m{v} - \nabla_{\m{w}} f(\m{x},\m{w})
\end{bmatrix}    
\nonumber \\
\hspace{-.5cm}
\nonumber
\mb{P}(\m{x},\m{z}) &= \nabla_\m{x} f(\m{x},\m{w}) -\nabla_{\m{xw}}^2 g(\m{x},\m{w})\m{v},  \\
\text{with}~~\m{z}&=[\m{w},\m{v}].
\end{align}
%\end{subequations}
%\end{small}
\end{tcolorbox}
\vspace{-.48cm}
\tcbset{colback=white!5!white,colframe=violet!75!black,fonttitle=\bfseries}
\begin{tcolorbox}[title= \qquad \qquad \qquad \qquad \qquad MCO and its Mappings,height=3.5cm, sidebyside,righthand width=6.4cm]
\begin{small}
\begin{align*}\tag{MCO}\label{fedmco_main}
\hspace{-.4cm}\min_{\m{x} \in \mb{R}^{{d}_0}} f^{N}\circ f^{N-1}\circ \cdots \circ  f^{0}(\m{x})
\end{align*}
\end{small}
\tcblower
%\begin{subequations}\label{eq:map:mcp}
\begin{small}
\begin{align}\label{eq:map:mcp}
\nonumber
&\mb{S}(\m{z}^{n-1},\m{z}^n)=
\m{z}^{n}- f^{n-1} (\m{z}^{n-1}),~~\forall n \in [N] \\
\nonumber
&\mb{P}(\m{z}^0, \ldots, \m{z}^N) =  \nabla f^{0} (\m{z}^0) \cdots  \nabla f^{N}(\m{z}^N)\\
%& \text{with}~~\m{Z}=[\m{z}^1, \ldots, \m{z}^N].
 %\nabla f^{m,1} (\m{z}^1)
 &\text{with}~~\m{z}^0=\m{x}.
\end{align}
\end{small}
%\end{subequations}
\end{tcolorbox}
Notably,  the optimality conditions \eqref{eqn:optima} can be reformulated as solving a system of nonlinear equations, which can be effectively addressed using DSA. The framework of double mappings, as defined in \eqref{eq:map:bl}, is employed to facilitate this process, as discussed in \cite{dagreou2022framework}. The DSA approach can be extended to handle multiple ($N \geq 2$) sequences that are employed in solving problems related to multi-level stochastic optimization \cite{sato2021gradient,yang2019multilevel,balasubramanian2022stochastic}; see \eqref{fedmco_main}. Specifically, \cite{shen2022single} introduced an extension of the DSA to find $\m{x}$, $\m{z}^{1}$, $\ldots$, $\m{z}^{N}$ such that
\begin{equation}\tag{MSA}
\label{msa:prob}
    \left.\begin{matrix}
    {\mb{P}} \left(\m{x}, \m{z}^{1}, \ldots, \m{z}^{N} \right)=\m{0}, \vspace{4pt}\\ 
    {\mb{S}}^{n} \left(\m{z}^{n-1}, \m{z}^{n} \right) = \m{0}, \forall \ n \in [N].
    \end{matrix}\right\}
\end{equation}
Here, ${\mb{P}}: \mb R^{d_0 \times d_1 \times \dots \times d_N} \to \mb R^{d_0}$ and ${\mb{S}}^{n}: \mb R^{d_{n-1} \times d_n} \to \mb R^{d_{n-1}}$, for all $n \in [N]$. 
Note that the outer-level mapping $\mb{P}$ depends on all the variables, while each of ${\mb{S}}^{n}$ depends only on $(\m{z}^{n-1}, \m{z}^{n})$, for all $n \in [N]$.

%\textbf{Federated Learning and FedMSA.} 
In various modern applications, the data required to compute noisy samples of $\mb{P}$ and $\{\mb{S}^{n}\}_{n}$ is naturally distributed across multiple nodes or clients. Due to privacy concerns of individual clients or communication limitations within the system, it is not feasible to collect the data in a centralized location for computation. Federated Learning (FL) is a paradigm designed to address this challenge by decentralizing the computation to individual clients. In this study, we tackle the problem defined in \eqref{msa:prob} within a federated setting. We consider a network comprising $M$ clients, each possessing their own local mappings $\mb{P}^m$ and $\{\mb{S}^{m,n}\}_{n}$, where $m \in [M]$. The objective is to find the optimal values of $\m{x}$, $\m{z}^{1,}$, $\ldots$, $\m{z}^{N,}$ such that
\begin{equation}\tag{Fed-MSA}
\label{fedmsa:prob}
    \left.\begin{matrix}
    \hspace{-77pt}\sum_{m=1}^M {\mb{P}}^{m}\left(\m{x},\m{z}^{1}, \ldots, \m{z}^{N}\right)=\m{0}, \vspace{4pt}\\ 
    \sum_{m=1}^M {\mb{S}}^{m,n}\left(\m{z}^{n-1},\m{z}^{n}\right)=\m{0}, \forall \ n \in [N].
    \end{matrix}\right\}
\end{equation}
Clearly, $\mb{P} := \sum_{m=1}^M \mb{P}^m$ and $\mb{S}^n := \sum_{m=1}^M \mb{S}^{m,n}$, for all $n \in [N]$.

\paragraph{Challenge of Local Updates in \eqref{fedmsa:prob}.} 
FL systems involve clients performing multiple local steps using their local data between synchronization rounds to minimize communication costs. The analysis of the single-level SA in this context has been extensively explored \cite{khaled2019first}. However, the presence of multiple sequences and their local updates poses a significant challenge for more complex SA methods. To understand this challenge, let's consider the simplified bilevel problem \eqref{main:blo} and its optimality conditions \eqref{eqn:optima}. Computing the local hypergradient (i.e., mapping $\mb{P}^m$) requires calculating the global Hessian, while each client $m$ only has access to their local Hessian. Existing approaches address this challenge by maintaining a fixed global Hessian during local iterations, resulting in an inexact local hypergradient \cite{tarzanagh2022fednest,xiao2022alternating,huang2022fast,huang2023achieving,xiao2023communication}; please refer to Sec.~\ref{sec:fedblo} for further discussions. 
\begin{figure}
  \centering
  \begin{subfigure}{0.33\textwidth}
    \centering
    \begin{tikzpicture}
      \node at (0,0) {\includegraphics[scale=0.33]{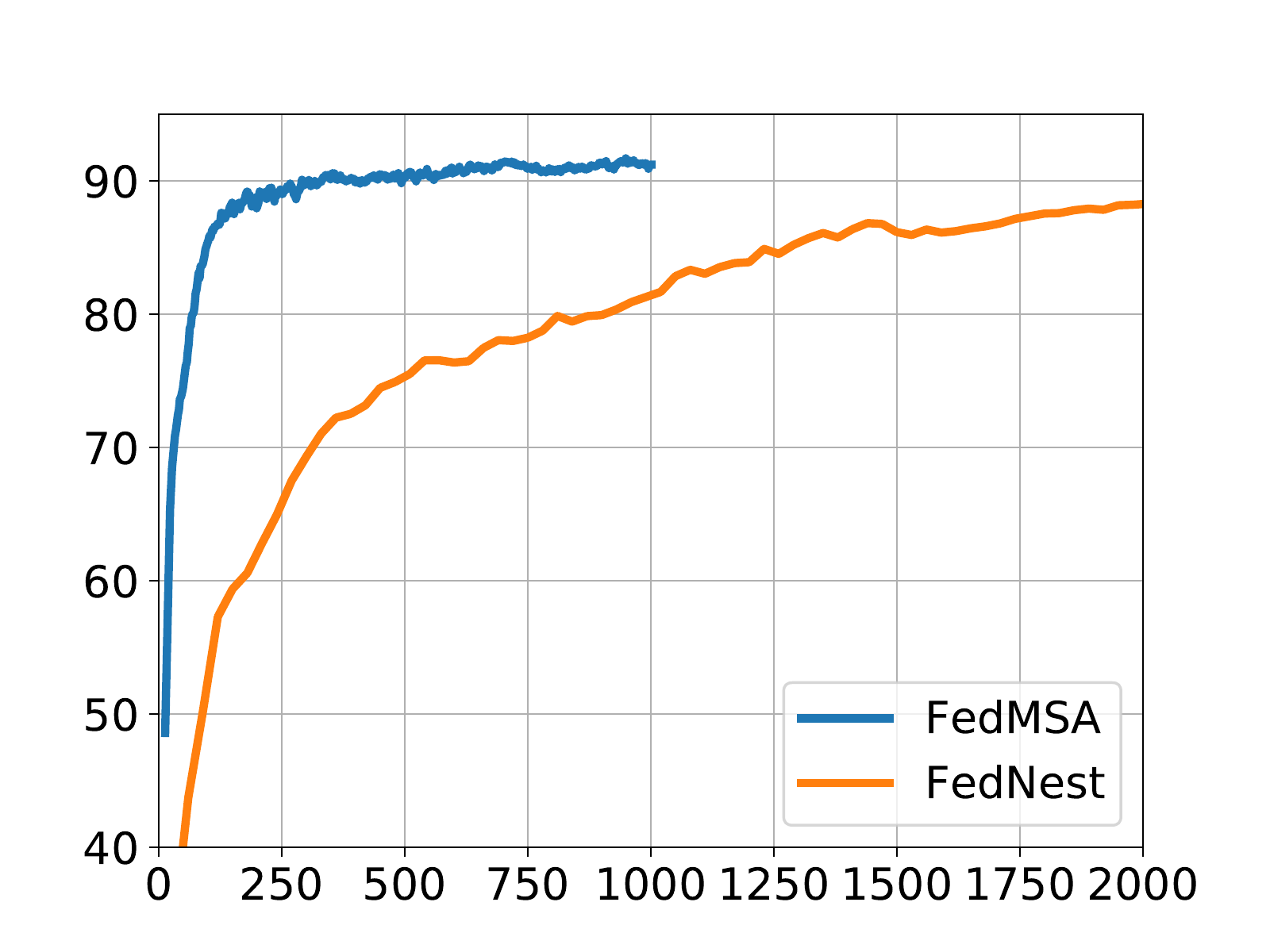}};
      \node[font=\small, scale=0.8] at (0,-2.1) {Communication Rounds ($R$)};
      \node[font=\small, scale=0.8, rotate=90] at (-2.6,-0.2) {Test Accuracy};
    \end{tikzpicture}
    \caption{}
    \label{fig:intro_subfig_less_comm}
  \end{subfigure}\hfill\begin{subfigure}{0.33\textwidth}
    \centering
    \begin{tikzpicture}
      \node at (0,0) {\includegraphics[scale=0.33]{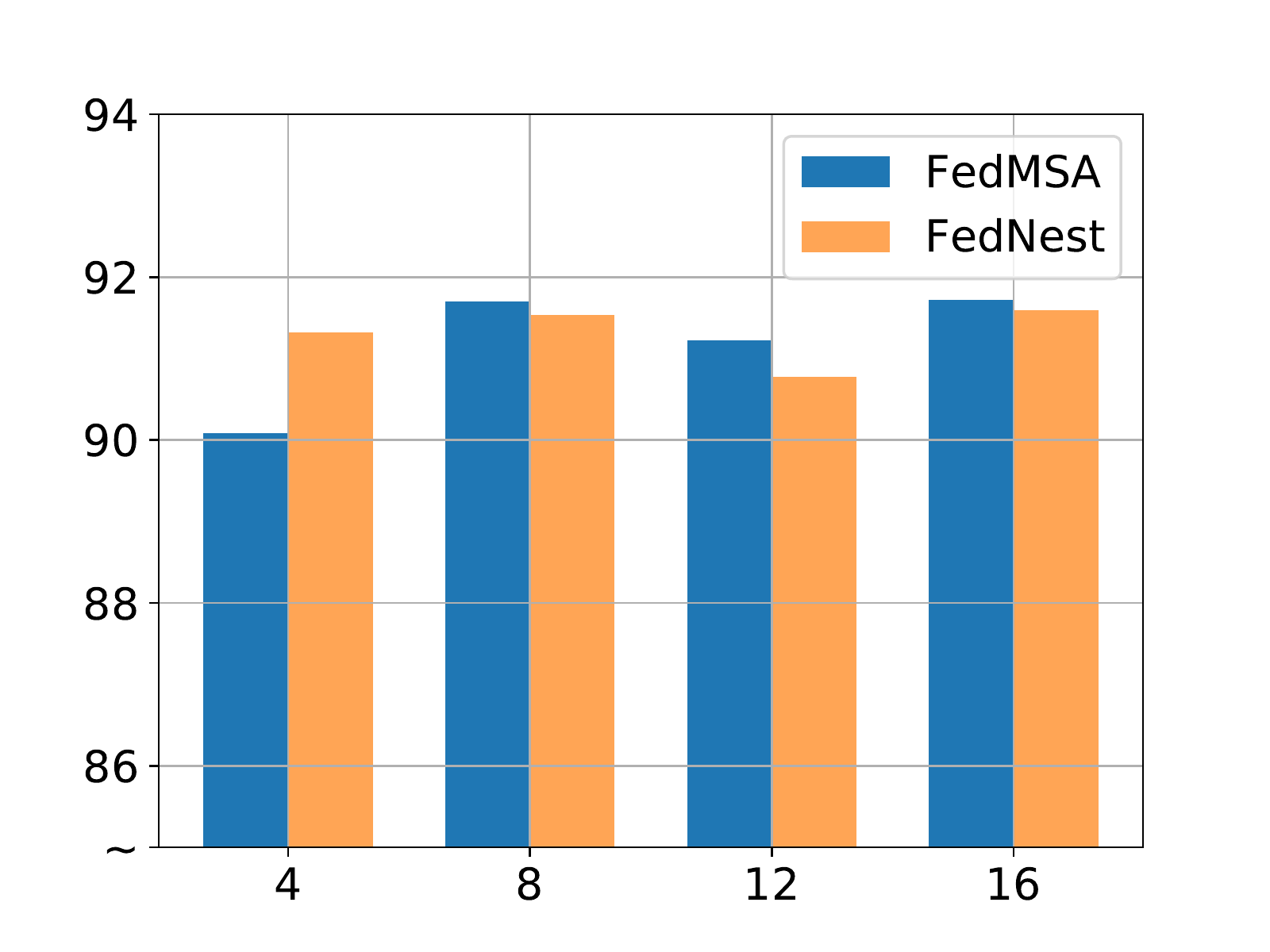}};
      \node[font=\small, scale=0.8] at (0,-2.1) {Local Updates ($K$)}; 
     \node[font=\small, scale=0.8, rotate=90] at (-2.6,-0.2) {Test Accuracy};
    \end{tikzpicture}
    \caption{ }
    \label{fig:intro_subfig_tau}
  \end{subfigure}\begin{subfigure}{0.32\textwidth}
    \centering
    \begin{tikzpicture}
      \node[inner sep=0pt, anchor=north] (list) at (0,0) {
        \begin{minipage}[t][4.9cm][t]{0.9\textwidth} % Adjust the height as needed
        \textbf{FedMSA's Key Features:}
          \begin{itemize}[label=\textcolor{green!70!black}{\ding{51}},leftmargin=*]
            \item Local hypergradient estimation (provably)
            \item Order-of-magnitude savings in communication rounds
            \item Faster convergence
          \end{itemize}
        \end{minipage}
      };
    \end{tikzpicture}
    \label{fig:intro_subfig_benefits}
  \end{subfigure}
  \caption{Loss function tuning on an imbalanced dataset \cite{li2021autobalance} (details in Sec.~\ref{sec:numerics}). Our findings are as follows: \textbf{(a)} FedMSA achieves a significant reduction of 10x in communication rounds compared to FedNest \cite{tarzanagh2022fednest}. \textbf{(b)} Besides enabling much faster convergence, FedMSA also enjoys higher eventual accuracy: When the number of local updates is large, FedMSA successfully updates the indirect component of hypergradient in local iterations, whereas FedNest fails to update the indirect hypergradient, resulting in inferior performance.
  }
  \label{fig:intro}
\end{figure}
\paragraph{Contributions:} In this work, we address these fundamental challenges surrounding federated MSA through the following innovations.
\begin{myitemize}
\item \textbf{Federated Local Mapping and Hypergradient Estimation:}
Our novel strategy enables the federated estimation of local mappings through local iterations. In contrast to previous approaches on bilevel \cite{tarzanagh2022fednest,xiao2022alternating,huang2022fast,huang2023achieving,xiao2023communication} and compositional \cite{tarzanagh2022fednest,huang2022faster} optimization, our method successfully updates the indirect component of the hypergradient within the local iteration of FL resulting in significant benefits with an increased value of $K$; see Figure~\ref{fig:intro_subfig_tau}.
\item \textbf{A New Algorithm: \fedmsa.} We introduce \fedmsa, a novel federated stochastic approximation algorithm with near-optimal communication complexity. The convergence guarantees of our algorithm depend on the level of problem heterogeneity and offer significant speedup in convergence. By integrating momentum and client-drift/variance reduction techniques, \fedmsa achieves state-of-the-art convergence rates, even for standard non-federated problems; see Theorem~\ref{thm:fedmsa}.

%only able to partially update hypergradient ().
\item \textbf{Bilevel Optimization (Sec \ref{sec:fedblo}).} 
In addition to achieving faster convergence rates, our approach addresses limitations observed in previous works on BLO during the update of the inner and outer variables ($\m{w}$ and $\m{x}$) \cite{tarzanagh2022fednest,xiao2022alternating,huang2022fast,huang2023achieving,xiao2023communication}. Specifically, \fedmsa performs simultaneous updates of the inner problem solution, the linear system, and the outer variable, resulting in improved communication efficiency; see Figure~\ref{fig:intro_subfig_less_comm}.

\item \textbf{Compositional Optimization (Sec \ref{section:sco}).} 
Existing federated methods for MCO focus solely on the double sequence case ($N=1$) \cite{huang2021compositional,tarzanagh2022fednest,huang2022faster}. In contrast, our approach extends these results to the multi-level scenario, offering enhanced communication complexity analysis for arbitrary values of $N$; see Table~\ref{table:fedmsa:results}.
 \end{myitemize}

\begin{table}[t]%\vspace{pt}
%\scalebox{.9}{\hspace*{1pt}
%\setlength\arrayrulewidth{1.2pt}
%\begin{minipage}[b]{.85\linewidth}
\centering
\begin{tabular}{l l|c|c| }
  \multicolumn{4}{c}{} \\
 \cline{3-4}
&{} & \textbf{Sample } &\textbf{Communication}  \\
 & & \textbf{Complexity}   &  \textbf{Complexity}\\  
 \cline{3-4}
%\\
%\multicolumn{1}{c}{ } &\multicolumn{3}{c}{\large{ Stochastic Multi-Sequence Approximation}} \\
%\multicolumn{1}{c|}{}&\multicolumn{3}{|c|}{$N=1$}\\
% \cline{3-6}
% \cline{2-6}
%& &{Convergence }  &{Oracle  }  &{Communication}&{Assumptions} \\
% & &{ Rate}  &{ Complexity }  &{Complexity} &{}\\
%\cline{2-6}
%\rowcolor{Gray}
%\multicolumn{1}{c}{ }&\multicolumn{1}{|l|}{{TTS SA ($N= 1$)} }&  & & N.A.&\\
% \cline{1-4}
%\multicolumn{1}{|c}{  }&\multicolumn{1}{|l|}{{TTSA} ($N= 1$) \cite{hong2020two} }& $ \frac{ 1}{R}+ \frac{\sigma^2}{\sqrt{R}}$ &  N.A.
\\
 \cline{1-4}
  \multicolumn{1}{|c}{\textbf{MSA}}&\multicolumn{1}{|l|}{{STSA}\cite{shen2022single} }& \multicolumn{1}{c|}{ $ \epsilon^{-2}$}
 &  N.A.\\
 \cline{2-4}
\multicolumn{1}{|c}{ }& \multicolumn{1}{|l|}{{FedMSA} }& %$ \frac{ \tau C_{\m{h}} }{R}+ \frac{C_{\m{h}} L_\m{h}}{\sqrt{K}R} + \frac{\sigma^2}{MKR}$ 
$ \epsilon^{-1.5}$ 
& $  \tau\epsilon^{-1}$ \\
 \cline{1-4}
 \\
 \cline{3-4}
% \cline{2-6}
%  &&{Convergence}  &{Oracle  }  &{Communication }&{Assumptions } \\
%   &&{Rate} &{Complexity} &{Complexity}&{} \\
 \cline{1-4}
%\rowcolor{Gray}
\multicolumn{1}{|c}{ }&\multicolumn{1}{|l|}{{FedNest} \cite{tarzanagh2022fednest}}& $ \epsilon^{-2} $  & $\epsilon^{-2}$
%\\
 %\cline{2-4}
%\multicolumn{1}{|c}{\textbf{BLO}} &\multicolumn{1}{|l|}{{E-AiPOD} \cite{xiao2022alternating} }& $  \epsilon^{-2}  $    & $ \epsilon^{-2}$
\\
 \cline{2-4}
\multicolumn{1}{|c}{\textbf{BLO}} &\multicolumn{1}{|l|}{{FedMBO} \cite{huang2023achieving}}& $ \epsilon^{-2} $ &   $ \epsilon^{-2}$
\\
 \cline{2-4}
\multicolumn{1}{|c}{} &\multicolumn{1}{|l|}{{AdaFBiO} \cite{huang2022fast} }&$  \epsilon^{-1.5} $  &  $ \epsilon^{-1.5}$
\\
 \cline{2-4}
\multicolumn{1}{|c}{}&\multicolumn{1}{|l|}{{FedMSA} }& $ \epsilon^{-1.5} $  &   $ {\tau}\epsilon^{-1}$  \\
 \cline{1-4}
 \\
% \multicolumn{1}{c}{ } &
% \multicolumn{3}{c}{\large{Application to Federated  Compositional Optimization}} \\
%\cline{2-4}%\multicolumn{1}{c|}{}&\multicolumn{3}{|c|}{$N=1$}\\
\cline{1-4}
%\rowcolor{Gray}
\multicolumn{1}{|c}{  }&\multicolumn{1}{|l|}{{ComFedL} ($N= 1$)  \cite{huang2021compositional} }&  $  \epsilon^{-2} $ &  $\epsilon^{-2}$ \\
 \cline{2-4}
 % \multicolumn{1}{c}{ }&\multicolumn{1}{|l|}{{LocalMOML} 
 % ($N= 1$) }&  &   $ \epsilon^{-3}$ \\
 \cline{2-4}
\multicolumn{1}{|c}{\textbf{MCO}}& \multicolumn{1}{|l|}{{FedNest} ($N= 1$) \cite{tarzanagh2022fednest}}& $  \epsilon^{-2}$ &  $ \epsilon^{-2}$ \\
%  \cline{2-5}
% \multicolumn{1}{c}{ }&\multicolumn{1}{|c|}{{ Local-SCGDM} }&   & $\epsilon^{-4}$ & $ \epsilon^{-3}$& \multicolumn{1}{c|}{ }\\
 \cline{2-4}
\multicolumn{1}{|c}{ }&\multicolumn{1}{|l|}{{AdaMFCGD} ($N= 1$) \cite{huang2022faster} }&   $  \epsilon^{-1.5}$ & $ \epsilon^{-1.5}$
\\
 \cline{2-4}
\multicolumn{1}{|c}{ }&\multicolumn{1}{|l|}{{FedMSA} ($N\geq 1$)}& $  \epsilon^{-1.5}$  &   $ {\tau}\epsilon^{-1}$ 
\\
  \cline{1-4}
 \end{tabular}
\vspace{0.2cm}
%\end{minipage}
%}
\caption{Comparison of {sample complexity} and {communication complexity} among various algorithms to achieve an $\epsilon$-stationary point of MSA, BLO and MCO problems. Here,   $\tau$ factor controls the benefit we can obtain from the small heterogeneity of the mappings. Discussions are provided below Theorem~\ref{thm:fedmsa}.
%$(\mb{P}^m$, $\{\mb{S}^{m,n}\}_{n})$}. %\ps{We have not yet defined $\sigma, L_h, L_f, \tau, \Delta_f$.}
}
  \label{table:fedmsa:results}
%\vspace{-0.5cm}
\end{table}%\vspace{-0.15cm}
 
% \begin{example}
% Let
%  \begin{align}\label{str_fg}
% f(x,y) &= \frac{c_1}{2} x^T Z^2 x-x^TZ^3y +  b^T y, \nonumber
% \\g(x,y) &= \frac{\beta}{2} y^T Z^2 y -\frac{1}{2} x^TZy + b^Ty, 
% \end{align} %the radius $B$ is chosen such that $\|x^*\|<B$ (hence $x^*=\argmin_{x\in\mathbb{R}^d}\Phi(x)$), 
% where
% {\footnotesize
% \begin{align}\label{matrices_coupling}
% Z= \begin{bmatrix}
%  &   &  & 1\\
% % &   & & 1 & -1\\
%  &  &  1& -1  \\
%   &\text{\reflectbox{$\ddots$}} &\text{\reflectbox{$\ddots
%   $}} &  \\
%   1& -1  &  & \\ 
% \end{bmatrix},
% % Z^2 =
% % \begin{bmatrix}
% %  1& -1 &  &  & \\
% %  -1& 2  &-1 &  & \\
% %  &   \ddots & \ddots & \ddots  & \\
% %   && -1& 2 & -1 \\
% %   &  & & -1 & 2\\ 
% % \end{bmatrix},
% % Z^4 =  
% % \begin{bmatrix}
% %  2& -3 & 1 &  & &\\
% %  -3& 6  &-4 &1  & &\\
% %  1& -4 & 6 & -4 & 1&\\
% %   &  \ddots&  \ddots&  \ddots &  \ddots & \ddots\\
% %    &  & 1& -4 &  6& -4 \\
% %   &  & & 1 & -4 & 5\\ 
% % \end{bmatrix}.
% \end{align}
% }    

% \end{example}

\section{Our Setting and Proposed Algorithm}\label{sec:fedavg:full}
In this section, we first introduce some noations and definitions that will be used in our analysis.
$\mb{N}$ and $\mb{R}$ denotes the set of natural and real numbers, respectively. We consider distributed optimization over $M$ clients and we denote $[M]:=\{1,\ldots,M\}$. For vectors $\m{v} \in \mb{R}^d$ and matrix $\m{M} \in \mb{R}^{d \times d}$, we denote $\|\m{v}\|$ and $\|\m{M}\|$ the respective Euclidean and spectral norms.  Following the literature on single-level stochastic \cite{fang2018spider,lei2017non,zhou2020stochastic} and federated \cite{murata2021bias,karimireddy2020mime,patel2022towards} gradient-based methods for finding a stationary point of an optimization problem, we consider stochastic optimization problems that access  $ ({\mb{P}}^{m}, \{ {\mb{S}}^{m,n} \}_{n})$ via
\begin{equation}\label{eqn:stoch:oracle}  
    \begin{split}
    {\mb{P}}^{m}\left(\m{x}, \m{Z}\right) &:= \mb{E}_{\xi \sim \mc{A}^m} \left[\m{p}^{m}\left(\m{x},\m{Z}; \xi\right)\right],\\
    {\mb{S}}^{m,n}(\m{z}^{n-1},\m{z}^{n}) &:= \mb{E}_{\zeta^n \sim \mc{B}^{m,n}} \left[
    \m{s}^{m,n}(\m{z}^{n-1},\m{z}^{n}; \zeta^n)\right], \forall n \in [N].
    \end{split}   
\end{equation}
Here, $\m{Z}=[\m{z}^1, \ldots, \m{z}^N]$, $(\xi, \{\zeta^n\}_{n}) \sim (\mc{A}^m, \{\mc{B}^{m,n}\}_{n})$ denote the stochastic samples for the $m^{\text{th}}$ client. 
\begin{definition}
    The mappings $\m{p}(\m{w})$ and $\m{p}(\m{w}; \xi)$  are called $L$-Lipschitz and  $\bar{L}$--mean Lipschitz if for any $\m{w},\bar{\m{w}}\in\mathbb{R}^d$, $\Vert \m{p}(\m{w})- \m{p}(\bar{\m{w}}) \Vert \leq L \Vert \m{w}-\bar{\m{w}} \Vert$ and  $ \mb{E}_{\xi \sim\mc{D}}\Vert \m{p}(\m{w};\xi)- \m{p}(\bar{\m{w}}; \xi) \Vert \leq \bar{L} \Vert \m{w}-\bar{\m{w}} \Vert$, respectively. 
\label{def:mean:lip}
\end{definition}
\begin{definition}\label{def:heter}
 $\{\m{p}^{m}(\m{w})\}_m$ are called $\upsilon$--Heterogeneous   if %$\sup_{\m{w}} 1/M\sum_{m=1}^M\|\m{p}^{m}(\m{w})-\m{p}(\m{w}) \|  \leq 	\upsilon_1$ and  
    $ \sup_{m \in [M], \m{w}}\|\nabla\m{p}^{m}(\m{w})-\nabla \m{p}(\m{w})\| \leq 	\upsilon$  for  some $\upsilon$. Here, $\m{p} = \sum_{m=1}^M \m{p}^m$.
\end{definition}

\begin{algorithm}[t]
\caption{ Federated Multi-Sequence Stochastic Approximation (\fedmsa) %for Solving  \eqref{fedmsa:prob} 
}
 \label{alg:fedmsa}
\begin{algorithmic}[1]
\State \textbf{Input}: Initialization $(\m{x}_0,\m{Z}_0)$, the number of communication rounds $R$, %\ps{or communication round?}
step-sizes $(\alpha, \beta_1, \ldots, \beta_N)$, momentum parameter $1\geq \rho \geq 0$. %\ps{Made the following changes to the original algo: 1) used separate symbols for directions and operators, e.g., $g, h$ and $p,q$; 2) used tilde to denote stochastic samples; 3) remove the $\m{Q}, \m{Z}$ notations, and wrote in terms of individual components}
\State $(\m{x}_{-1},\m{Z}_{-1})=(\m{x}_0,\m{Z}_0)$.
\For{$r=0, \ldots, R-1$} \hfill %\# Communication round
 \State \textbf{if} $r=0$ set $\rho=1$ and $K=1$.%~\textbf{else}~ set  $\rho \in (0,1)$. 
%$(\m{h}_{r,0}^m, \m{q}^{1,m}_{r,0}, \ldots, \m{q}^{m,n}_{r,0}) :=(\m{h}_r, \m{q}^1_r, \ldots, \m{q}^N_r)$
 \For {$m \in [M]$ \textbf{in parallel}}  
    \State Compute  $(\m{h}^m_r,\m{q}^{m,1}_r, \ldots, \m{q}^{m,N}_r)$ according to Eq. \eqref{eqn:moment:maps}.
\EndFor
  \State $ (\m{h}_r, \m{q}^1_r, \ldots, \m{q}^N_r) = 1/M \sum_{m=1}^M (\m{h}^m_r,  \m{q}^{m,1}_r, \ldots, \m{q}^{m,N}_r)$.
  \State  Communicate $(\m{h}_r, \m{q}^1_r, \ldots, \m{q}^N_r)$ to client $\tilde{m}$, where $ \tilde{m} \sim \textnormal{Unif}~ [M]$
%\vspace{-.25cm}
\begin{tcolorbox}[title= Local MSA with momentum-type variance reduction on client $\tilde{m}$, colframe=blue!5,colback=blue!5,colbacktitle=blue!45, boxsep= 2pt,left=1pt,right=0.0pt,height=4.5cm, width=12.5cm]
\State  $(\m{h}_{r,0}^{\tilde{m}}, \m{q}^{\tilde{m},1}_{r,0}, \ldots, \m{q}^{\tilde{m},N}_{r,0}) :=(\m{h}_r, \m{q}^1_r, \ldots, \m{q}^N_r)$
\State $\m{x}_{r+1,1}^{\tilde{m}}:= \m{x}_{r+1,0}^{\tilde{m}}:= \m{x}_{r}$, $\m{Z}_{r+1,1}^{\tilde{m}}:= \m{Z}_{r+1,0}^{\tilde{m}}:= \m{Z}_{r}$
%\State $(\m{h}_{r,0}, \m{q}^1_{r,0}, \ldots, \m{q}^N_{r,0})=(\m{h}_{r}, \m{q}^1_{r}, \ldots, \m{q}^N_{r})$
\For{$k=1,\ldots,K$}
\State Compute  $(\m{h}^{\tilde{m}}_{r,k},\m{q}^{{\tilde{m}},1}_{r,k}, \ldots, \m{q}^{{\tilde{m}},N}_{r,k})$ according to Eq. \eqref{eqn:moment:local:maps}. 
%\\
\State
$\m{x}_{r+1, k +1}^{\tilde{m}} = \m{x}_{r+1,k}^{\tilde{m} }- \alpha \m{h}_{r,k}^{\tilde{m}}$ %\ps{better to use $\m{x}_{r,k+1}^{m} = \m{x}^{\tilde{m}}_{r,k} - \alpha \m{h}_{r,k}^{\tilde{m}}$}
%\\
\State  $\m{z}_{r+1,k+1}^{\tilde{m},n} = \m{z}_{r+1,k}^{\tilde{m},n} - \beta_{n} \m{q}_{r,k}^{\tilde{m},n}, ~~ n =1,2,...,N.$ 
\EndFor
\end{tcolorbox}  
\State $(\m{x}_{r+1}, \m{Z}_{r+1})= (\m{x}_{r+1,K+1}^{\tilde{m}},\m{Z}_{r+1,K+1}^{\tilde{m}})$
\EndFor
\State \textbf{Output}:  $(\tilde{\m{x}},\tilde{\m{Z}}) = (\m{x}_{r,k}^{\tilde{m}},\m{Z}_{r,k}^{\tilde{m}})$,~ $(r,k) \sim \textnormal{Unif}~ [R]\times[K]$ 
\end{algorithmic}
\end{algorithm}

\subsection{The Proposed Framework: \fedmsa}

Our proposed approach to federated multi-sequence approximation (abbreviated as \fedmsa) is described in Algorithm~\ref{alg:fedmsa}.  In Line 6, given global update directions 
from the previous round $(\m{h}_{r-1}, \{\m{q}_{r-1}^n\}_n)$, each client $m \in [M]$ computes the local update directions $(\m{h}_r^m, \{\m{q}^{m,n}_r\}_n)$ via  the following momentum rule:
\begin{subequations}\label{eqn:moment:maps}
    \begin{align}
     \m{h}^m_r &=  \m{p}^{m} \left(\m{x}_r,\m{Z}_r;\xi^m_r\right) + (1-\rho) \left(\m{h}_{r-1} - \m{p}^{m}\left(\m{x}_{r-1},  \m{Z}_{r-1} ,\xi^m_r\right)\right), \\
    \m{q}^{m,n}_r &= \m{s}^{m,n}(\m{z}^{n-1}_{r},\m{z}^{n}_{r};\zeta^{m,n}_r)+(1-\rho)\left(\m{q}_{r-1}^n-\m{s}^{m,n}(\m{z}^{n-1}_{r-1},\m{z}^{n}_{r-1};\zeta^{m,n}_r\right),~\forall ~n \in [N].
    \end{align}
\end{subequations} 
Next, $(\m{h}_r^m, \{\m{q}^{m,n}_r\}_n)$ are communicated by all the clients to the server, which averages them to get the new global mappings $(\m{h}_r, \{\m{q}^{n}_r\}_n)$, and broadcasts them to the client $\tilde{m}$ which is chosen randomly from a uniform distribution over $M$ (Line 9). 

At client $  \tilde{m} \sim \textnormal{Unif}~ [M] $, we initialize $(\m{h}_{r,0}^{\tilde{m}}, \m{q}^{{\tilde{m}},1}_{r,0}, \ldots, \m{q}^{{\tilde{m}},N}_{r,0}) :=(\m{h}_r, \m{q}^1_r, \ldots, \m{q}^N_r)$, $\m{x}_{r+1,1}^{{\tilde{m}}}:= \m{x}_{r+1,0}^{\tilde{m}}:= \m{x}_{r}$,  and $\m{Z}_{r+1,1}^{\tilde{m}}:= \m{Z}_{r+1,0}^{\tilde{m}}:= \m{Z}_{r}$. We then compute the local mapping estimators by recalling \eqref{eqn:stoch:oracle}, for all $k \in [K]$ as follows
\begin{subequations}\label{eqn:moment:local:maps}
\begin{align}
 \m{h}^{\tilde{m}}_{r,k} &= \m{p}^{{\tilde{m}}} \left(\m{x}_{r+1,k}^{\tilde{m}}, \m{Z}_{r+1,k}^{{\tilde{m}}};\xi^m_{r,k}  \right) +  \m{h}^{\tilde{m}}_{r,k-1} - \m{p}^{{\tilde{m}}}\left(\m{x}_{r+1,k-1}^{\tilde{m}}, \m{Z}^{{\tilde{m}}}_{r+1,k-1};\xi^{\tilde{m}}_{r,k}\right), \\
\m{q}^{{\tilde{m}},n}_{r,k} &= \m{s}^{{\tilde{m}},n} \left(\m{z}^{{\tilde{m}},n-1}_{r+1,k},\m{z}^{{\tilde{m}}, n}_{r+1,k}; \zeta^{{\tilde{m}}, n}_{r,k} \right)+ \m{q}_{r,k-1}^{{\tilde{m}},n}-\m{s}^{{\tilde{m}}.n}\left(\m{z}^{{\tilde{m}},n-1}_{r+1,k-1},\m{z}^{{\tilde{m}},n}_{r+1,k-1}; \zeta^{{\tilde{m}},n}_{r,k}\right).
\end{align}
\end{subequations}
These local estimators are then used to update the local variables $\m{x}_{r+1,k}^{\tilde{m}}$ and $ \m{Z}_{r+1,k}^{{\tilde{m}}}$ (Lines 14-15). At the end of $K$ local steps, the client $\tilde{m} $ transmits its updated local models  to the server, and the server aggregate them to find the global models  $(\m{x}_{r+1}, \m{Z}_{r+1})$. 

\textbf{Variance Reduction, Momentum, and Client Selection.} The momentum-based estimators in \eqref{eqn:moment:maps} are inspired by gradient estimators from \cite{cutkosky2019momentum} and \cite{patel2022towards} for stochastic single-level non-FL and single-level FL problems, respectively. Additionally, the local update directions in \eqref{eqn:moment:local:maps} employ SARAH or SPIDER-like estimators \cite{nguyen2017sarah, fang2018spider}, originally proposed for stochastic single-level problems. The client selection process, where $\tilde{m}$ is chosen randomly from a uniform distribution over $M$, combined with variance reduction techniques, plays a crucial role in our analysis and hypergradient estimation. This combination draws inspiration from federated single-level variance reduction methods \cite{murata2021bias, karimireddy2020mime, mitra2021linear, patel2022towards}. It is important to note that while variance reduction and momentum have been studied for bilevel  and compositional problems \cite{tarzanagh2022fednest,huang2022fast,li2022local,gao2022convergence}, our proposed approaches distinguish themselves through the introduction of probable local hypergradient estimation, multiple sequence analysis ($N\geq 1$), and heterogeneous-aware theoretical analysis.

\textbf{Communication}. Algorithm~\ref{alg:fedmsa} requires two rounds of communication between the server and all clients for each iteration, i.e.,  there are two back-and-forth communications involved in each iteration. Our method uses the extra round of communication, i.e., Line~9, to update the variance/client drift reduced map $(\m{h}^m, \{\m{q}^{m,n}\}_{n})$ using the current and previous server models $(\m{x}_{r}, \m{Z}_{r})$ and $(\m{x}_{r-1}, \m{Z}_{r-1})$, respectively. 
% %\input{sec/prelim} 
 %\subsection{Applications}\label{sec:app}
 % \sout{and single-level second order optimization}.
\subsection{ Federated Bilevel Optimization}\label{sec:fedblo}
In this section we apply our generic \fedmsa to bilevel optimization. 
In {federated bilevel learning}, we consider the following  optimization problem
\begin{subequations}\label{fedblo:prob}
\begin{align}\tag{Fed-BLO}\label{fedblo_main}
\begin{array}{ll}
\underset{\m{x} \in \mb{R}^{{d}_1}}{\min} &
\begin{array}{c}
f(\m{x}):=\frac{1}{M} \sum_{m=1}^{M} f^m\left(\m{x},\m{w}^\star(\m{x})\right) 
\end{array}\\
\text{~s.t.} & \begin{array}[t]{l} \m{w}^\star(\m{\m{x}})
%=
\in \underset{ \m{w}\in \mb{R}^{{d}_2}}{\textnormal{argmin}}~~g\left(\m{x},\m{w}\right):=\frac{1}{M}\sum_{m=1}^{M} g^m\left(\m{x},\m{w}\right). 
\end{array}
\end{array}
\end{align}
\end{subequations}
Each client in our model ($M$ total clients) can have its own individual outer and inner functions $(f^m, g^m)$ to capture objective heterogeneity. We use a stochastic oracle model, where access to local functions $(f_m, g_m)$ is obtained through stochastic sampling:
\begin{align*}
    \nonumber 
    f^m(\m{x},\m{w}) := \mb{E}_{\xi \sim \mc{A}^m}\left[f^m(\m{x}, \m{w}; \xi)\right],~~
    g^m(\m{x},\m{w}) := \mb{E}_{\zeta \sim \mc{B}^m}\left[g^m(\m{x}, \m{w}; \zeta)\right],
\end{align*} 
where $(\xi, \zeta)\sim(\mc{A}^m, \mc{B}^m)$ are stochastic samples at the $m^{\text{th}}$ client.

Under suitable assumptions, the function $g^m(\m{x},\m{w})$ is differentiable. By applying the chain rule and the implicit function theorem, we obtain the following \textit{local} gradient for any $\m{x} \in \mathbb{R}^d$ \cite[Lemma~2.1]{tarzanagh2022fednest}:
%\begin{subequations}
\begin{align}\label{eq:hgrad}
    \nabla f^m(\m{x})= \nabla_{\m{x}}f^m\left(\m{x},\m{w}^\star(\m{x})\right)+ \nabla^2_{\m{x}\m{w}} g(\m{x},\m{w}^\star(\m{x}))\m{v}^{m,\star}(\m{x}),
\end{align}
where  $\m{v}^{m,\star}(\m{x})\in \mb{R}^{d_2}$ is the solution to the following linear system of equations:
\begin{align}\label{eq:v_star_def}
%\nonumber 
    \nabla^2_{\m{w}}g(\m{x},\m{w}^\star(\m{x})) \m{v} = -\nabla_\m{w} f^m(\m{x},\m{w}^\star(\m{x})).
\end{align}
%\end{subequations}
In light of \eqref{eq:hgrad} and \eqref{eq:v_star_def}, it becomes apparent that the computation of the local gradient of $f$ at each iteration involves two subproblems: 1) approximate solution of the inner problem $\m{w}^\star(\m{x}) \in \textnormal{argmin}_{ \m{w}\in \mb{R}^{{d}_2}} ~g\left(\m{x},\m{w}\right)$; and 2) approximate solution of the linear system in \eqref{eq:v_star_def}. This poses challenges in practical implementation of federated methods, such as gradient descent, for solving  \eqref{eq:v_star_def}. Specifically, note that the stochastic approximation of $\m{v}^{m,\star}(\m{x})$ in  \eqref{eq:v_star_def} involves the \textit{global} Hessian $\nabla^2_{\m{w}}g(\m{x},\m{w}^\star(\m{x}))$ in a nonlinear manner, which is not available at any single client. Existing federated bilevel algorithms \cite{tarzanagh2022fednest,xiao2022alternating,huang2022fast,huang2023achieving,xiao2023communication} share the limitation of using an inexact hypergradient approximation for federated optimization problems. This means that the indirect gradient component $- \nabla^2_{\m{x}\m{w}} g(\m{x},\m{w}^(\m{x}))\m{v}^{m,\star}(\m{x})$  in equation \eqref{eq:hgrad} remains fixed during local training. As a result, these methods are unable to update the local indirect gradient.  We introduce our federated framework in which the solution of the inner problem $\m{w}^\star(\m{x})$, the solutions $\{ \m{v}^{m,\star}(\m{x}) \}_m$ of the linear systems in \eqref{eq:v_star_def}, and the outer variable $\m{x}$ all evolve at the same time. This is influenced by the non-FL bilevel optimization framework introduced in~\cite{dagreou2022framework}. To do so, we define  $\m{z}:=[\m{w}~~\m{v}]$ and consider \eqref{fedmsa:prob} with the  mappings \eqref{eq:map:bilevel}:
% \begin{center}
% \begin{tcolorbox}[enhanced, width=9cm, title= {\small \textsc{FedBLO Mappings}},colframe=green!3!black,colback=green!3!white,colbacktitle=orange!5!yellow!10!white,
% fonttitle=\bfseries,coltitle=black,attach boxed title to top center=
% {yshift=-0.25mm-\tcboxedtitleheight/2,yshifttext=2mm-\tcboxedtitleheight/2},
% boxed title style={boxrule=0.2mm,
% frame code={ \path[tcb fill frame] ([xshift=-4mm]frame.west)
% -- (frame.north west) -- (frame.north east) -- ([xshift=4mm]frame.east)
% -- (frame.south east) -- (frame.south west) -- cycle; },
% interior code={ \path[tcb fill interior] ([xshift=-2mm]interior.west)
% -- (interior.north west) -- (interior.north east)
% -- ([xshift=2mm]interior.east) -- (interior.south east) -- (interior.south west)
% -- cycle;} }]
\begin{subequations}\label{eq:map:bilevel}
\begin{align}
\hspace{-.7cm}\mb{S}^m(\m{x},\m{z})&=
\begin{bmatrix}
 \nabla_\m{w} g^m(\m{x}, \m{w})\\
    \nabla_{\m{w}}^2 g^m(\m{x}, \m{w}) \m{v} - \nabla_{\m{x}} f^m(\m{x},\m{w})
\end{bmatrix}    
,
\\
\hspace{-.7cm}\mb{P}^m(\m{x},\m{z}) &= \nabla_\m{x} f^m(\m{x},\m{w}) -\nabla_{\m{xw}}^2 g^m(\m{x},\m{w})\m{v}.    
\end{align}
\end{subequations}
Note that comparing \eqref{fedmsa:prob} and \eqref{eq:map:bilevel}, since $N=1$, we omit the index $n$ to simplify notations.  These maps are motivated by the fact that we have $ \nabla f^m(\m{x})=\mb{P}^m(\m{x}, \m{w}^\star(\m{x}), \m{v}^{m,\star}(\m{x})) $. %\sout{, with $\m{w}^\star(\m{x})$ the minimizer of $g(\m{x}, \cdot)$ and $\m{v}^{m,\star}(\m{x})$ the solution of \eqref{eq:v_star_def}.} \sout{When $\m{x}$ is fixed, we approximate $\m{w}^\star$ by doing a federated gradient descent on $g$, following the first component of the mapping $  -\mb{S}^m(\m{x},\m{z})$. When $\m{w}$ and $\m{x}$ are fixed, we find $\m{v}^{m,\star}$ by following the second component of $-\mb{S}^m(\m{x},\m{z})$, which corresponds to a federated gradient descent for solving \eqref{eq:v_star_def}.} \ps{In last 2 sentences we explain the update steps. This is not very clear. One way is, we put a small pseudocode like Algo 1, but specialized to bilevel.}  
This provides us with a federated BLO where we plug \cref{eq:map:bilevel} in \cref{alg:fedmsa}. %Importantly, we use different step sizes for the update in $(\m{z}, \m{v})$ and for the update in $\m{x}$. We use the same step size in $\m{z}$ and in $\m{v}$ since the inner problem and the linear system have similar conditioning, which is that of $ \nabla_{\m{w}}^2 g^m(\m{x}, \m{w}) $. The need for a different step size for the outer and inner problem is clear: both problems can have a different conditioning. 

\subsection{Federated Multi-Level Compositional Optimization}\label{section:sco}
%\ps{We might have to relegate this section to the Appendix, given we're already exceeding the page limit, even without experiments, and we have yet to add to other sections.}
In this section, we consider the federated multi-level compositional optimization problem 
\begin{equation}\label{eq:sc}\tag{Fed-MCO}
    \min_{\m{x}\in \mathbb{R}^{d_0}}f(\m{x}) :=  \frac{1}{M}\sum_{i=1}^M f^{N}_i(  \frac{1}{M} \sum_{i=1}^M f^{N-1}_i(\dots  \frac{1}{M} \sum_{i=1}^m f^0_i(\m{x})\dots).     
\end{equation}
where $f^{m,n} : \mathbb{R}^{d_n} \mapsto \mathbb{R}^{d_{n+1}}$ for $m \in [M]$, $n=0,1,\ldots,N$ with $d_{N+1}=1$. Only stochastic evaluations of each layer function are accessible, i.e.,
\begin{equation*}
    f^{m,n} (\m{x}) := \E_{\zeta^{m,n}}[f^{m,n} (\m{x};\zeta^{m,n})],~ m \in [M], n=0,1,\ldots,N.
\end{equation*}
where $\{ \zeta^{m,n} \}_{m,n}$ are random variables.  Here, we slightly overload the notation and use $f^{m,n} (\m{x};\zeta^{m,n})$ to represent the stochastic version of the mapping.

To solve \eqref{eq:sc}, a natural scheme is to use SGD with the gradient given by
\begin{align}\label{eq:grad sc}
\nonumber
    \nabla f(\m{x}) &= \nabla f^0 (\m{x}) \nabla f^1 (f^0(\m{x})) \dots \nabla f^N(f^{N-1}(\cdots f^0(\m{x})\cdots)),
\end{align}
where we use $$\nabla f^n(f^{n-1}(\dots f^0(\m{x})\dots)) = \nabla f^n(\m{x})|_{\m{x}=f^{n-1}(\dots f^0(\m{x})\dots)}.$$
To obtain a stochastic estimator of $\nabla f(\m{x})$, we will need to obtain the stochastic estimators for $\nabla f^n (f^{n-1}(... f^0(\m{x})...))$ for each $n$. For example, when $n=1$, one need the estimator of $\nabla f^1 (\E_{\zeta^0}[f^0(\m{x};\zeta^0)])$. However, due to the possible non-linearity of $\nabla f^1(\cdot)$, the natural candidate $\nabla f^1 (f^0(\m{x};\zeta^0))$ is not an unbiased estimator of $\nabla f^1 (\E_{\zeta^0}[f^0(\m{x};\zeta^0)])$. To tackle this issue, a popular method is to directly track  $\E_{\zeta^n}[f^n(\cdot;\zeta^n)]$ by variable $\m{z}^n, n=0,1,\ldots,N$. The mappings take the following form: 
\begin{subequations}\label{eq:fed:map:mcp}
\begin{align}
&\mb{S}^m(\m{z}^{n-1},\m{z}^n)=
\m{z}^{n}- f^{m,n-1} (\m{z}^{n-1})~~\textnormal{for all}~~ n=1, \ldots, N. \\
&\mb{P}^m(\m{x},\m{Z}) =  \nabla f^{m,0} (\m{x}) \nabla f^{m,1} (\m{z}^1) \ldots,  \nabla f^{m,N}(\m{z}^N).
\end{align}
\end{subequations}
%\ps{what form?}
% \begin{subequations}\label{eq:scupdate}
% \begin{align}
%     y_{k+1}^n &= y_k^n - \beta_{k,n} (y_k^n- f^{n-1} (y_k^{n-1};\zeta_k^{n-1})),~n \in [N]\label{eq:scupdate-y}\\
%     \m{x}_{k+1} & =  \m{x}_k + \alpha_k \nabla f^0 (\m{x}_k;\hat{\zeta}_k^0) \nabla f^1 (y_k^1;\zeta_k^1)\cdots \nabla f^N(y_k^N;\zeta_k^N)\label{eq:scupdate-x}
% \end{align}
% \end{subequations}
% where $\hat{\zeta_k^0}$ and $\zeta_k^0$ have the same distribution as that of $\zeta^0$, and $\zeta_k^1,\dots,\zeta_k^N$ have the same distributions as that of $\zeta^1,\dots,\zeta^N$ respectively. When every $y_k^n$ reaches its fixed-point $y_k^{n,*}$, i.e. $y_k^n = y_k^{n,*} \coloneqq f^{n-1}(y_k^{n-1})$ for any $n \in [N]$, then it follows from \eqref{eq:grad sc} that $\nabla f^0 (x) \nabla f^1 (y_k^{1,*})\cdots \nabla f^N(y_k^{N,*}) \!=\! \nabla F(x_k)$,
% which indicates that the expected update direction of $x_k$ in \eqref{eq:scupdate} is $\nabla F(x_k)$.
This provides us with a second algorithm, {FedMCO}, where we plug \cref{eq:fed:map:mcp} in \cref{alg:fedmsa}.
Existing federated methods for multi-objective optimization (MCO) primarily focus on the $N=1$ \cite{huang2021compositional, tarzanagh2022fednest}. In contrast, our approach extends these methods to the multi-level federated setting, offering improved communication complexity for any $N \geq 1$. 

 \section{Convergence Analysis}\label{sec converge}
In this section, we provide the convergence guarantees of Algorithm~\ref{alg:fedmsa}. Throughout, we set  $\mb{P}(\m{x}):= \mb{P}\big(\m{x},\m{z}^{1,*}(\m{x}),\ldots,\m{z}^{N,*}(\dots \m{z}^{2,*}(\m{z}^{1,*}(\m{x}))\dots)\big)$. We make the following assumption on the fixed points and mappings.  
%$\m{h}^m$ and $\{ \m{q}^{n,m} \}_n$. 
\begin{assumption}%[Smoothness of the fixed points and mappings]
\label{assum:lip:y*}
For any $m \in [M]$, $n \in \{0\} \cup [N]$ and $\m{z}^{n-1} \in \mathbb{R}^{d_{n-1}}$:
%, we assume the following:
%\vspace{-.2cm}
\begin{enumerate}[label={\textnormal{\textbf{A\arabic*.}}}, wide, labelwidth=!, labelindent=0pt]
\item There exists a unique $\m{z}^{n,*}(\m{z}^{n-1}) \in \mathbb{R}^{d_n}$ such that $\mb{S}^n(\m{z}^{n-1},\m{z}^{n,*}(\m{z}^{n-1}))=0$. 
%\vspace{-.2cm}
\item  $\m{z}^{n,*}(\m{z}^{n-1})$ and $\nabla \m{z}^{n,*}(\m{z}^{n-1})$ are $L_{\m{z},n}$ and $L^{'}_{\m{z},n}$--Lipschitz continuous, respectively. 
%\vspace{-.2cm}
\item \label{assum:lipmaps} $\mb{P}^m (\m{x})$ %\ps{You mean $\mb{P}^m (\m{x},\cdot)$? Because $\mb{P}^m (\m{x})$ has no variable.}
, $\mb{P}^m (\cdot, \m{Z})$, and $\mb{S}^{m,n} (\cdot, \m{z}^n)$, are  $L_p$, $L_z$,  and $L_{s,n}$ Lipschitz continuous.
 %\item\label{item:msa:f} There exists a constant $C_f$ such that $f(\m{x}) \leq C_f$.
\item \label{assum:lipmaps:mean} $\m{p}^m (\m{x}; \xi)$, $\m{p}^m (\cdot, \m{Z};\xi)$ %\ps{You mean $\m{p}^m (\m{x},\cdot; \xi)$? Because $\m{p}^m (\m{x};\xi)$ has no variable.} 
and $\m{s}^{m,n} (\cdot, \m{z}^n; \xi)$  are  $\bar{L}_p$, $\bar{L}_z$  and $\bar{L}_{s,n}$--mean Lipschitz continuous.
\item \label{assum:stmonot:g}
$\mb{S}^{m,n}(\m{z}^{n-1},\m{z}^n)$ is one-point strongly monotone on $\m{z}^{n,*}(\m{z}^{n-1})$ given any $\m{z}^{n-1}$; that is $$\left\langle \m{z}^n\!-\!\m{z}^{n,*}(\m{z}^{n-1}),\mb{S}^{m,n}(\m{z}^{n-1},\m{z}^n) \right\rangle \leq \lambda_n \left\|\m{z}^n\!-\!\m{z}^{n,*}(\m{z}^{n-1})\right\|^2,~~\text{for some $\lambda_n>0$}.$$ %\ps{I think the defn. of strong monotonicity is, given operator $F$, for all $x,y$, $\langle F(x)-F(y), x-y \rangle \geq \mu \|x-y\|^2$. Hence, one-point strong monotonicity is, for $y=x^*$ such that $F(x^*)=0$, $\langle F(x), x-x^* \rangle \geq \mu \|x-x^*\|^2$. The current ineq. above seems reversed.}
%, %\mb{P}^m(\m{x},\m{Z})$
%with constants  and   $L^{'}_{\m{z},n}$.
%,  $L_h$, $L_{h,\m{z}}$ and $L_{q,n}$, respectively.
% %\begin{subequations}
% \begin{align*}%\label{eq:y*lip}
% &\|\m{z}^{n,*}(\m{z}^{n-1}) - \m{z}^{n,*}(\Bar{\m{z}}^{n-1})\| \leq L_{\m{z},n} \|\m{z}^{n-1}-\Bar{\m{z}}^{n-1}\|,  \\
%     \label{eq:y*smooth}
% &\|\nabla \m{z}^{n,*}(\m{z}^{n-1}) - \nabla \m{z}^{n,*}(\Bar{\m{z}}^{n-1})\| \leq L_{\m{z},n}^{'} \|\m{z}^{n-1}-\Bar{\m{z}}^{n-1}\|.
% \end{align*}
% %\end{subequations}
% \begin{align*}
% %\label{eq:v lip}    
%     &\left\|\m{h}^m(\m{x}) - \m{h}^m(\Bar{\m{x}})\right\|\leq\! L_{\m{h}} \|\m{x}-\Bar{\m{x}}\|,\\
%     &\left\|\m{h}^m(\m{x},\m{Z})-\m{h}^m(\m{x},\Bar{\m{Z}})\right\| \leq  L_{\m{h},\m{z}} \|\m{Z}-\Bar{\m{Z}}\|, \text{ and }\\
%     & \left\|\m{q}^{n,m}(\m{z}^{n-1},\m{z}^n)- \m{q}^{n,m}(\m{z}^{n-1},\Bar{\m{z}}^n) \right\| \leq L_{\m{q},n} \left\|\m{z}^n -\Bar{\m{z}}^n\right\|.
% \end{align*}
\end{enumerate}
\end{assumption}

% \begin{assumption}[Monotonicity of $\m{q}$]
% For all $ (n,m) \in [N] \times [M]$, 
% \end{assumption}
\begin{assumption}[Bias and variance]\label{assum:bias} 
%For all $ (n,m,k) \in [N] \times [M] \times [K]$, the random variables $\xi_{r,k}^m$ and $\{\zeta_{r,k}^{m,n}\}$, are independent of each other with zero mean and common variances $\sigma_0$ and $\sigma_n^2$, respectively. That is, 
For all $ (n,m) \in [N] \times [M]$
\begin{align*}
    \mb{E}_{\xi \sim \mc A^{m}} \left\| \m{p}^{m} \left(\m{x},\m{Z}; \xi\right) - {\mb{P}}^{m}\left(\m{x}, \m{Z}\right) \right\|^2 & \leq \sigma^2,\\
    \mb{E}_{\zeta \sim \mc B^{m,n}} \left\|
    \m{s}^{m,n}(\m{z}^{n-1},\m{z}^{n}; \zeta) - {\mb{S}}^{m,n}(\m{z}^{n-1},\m{z}^{n}) \right\|^2 & \leq \sigma_n^2, \text{ for } n \in [N]
\end{align*}
% \ps{This is notation heavy, still we're leaving out the r.v. $\xi_r^m, \{ \zeta_r^{m,n} \}_n$ used in (3). How about the following: for all $m \in [M], n \in [N]$
% \begin{align*}
%     \mb{E}_{\xi \sim \mc A^{m}} \left\| \m{p}^{m} \left(\m{x},\m{Z}; \xi\right) - {\mb{P}}^{m}\left(\m{x}, \m{Z}\right) \right\|^2 & \leq \sigma^2,\\
%     \mb{E}_{\zeta \sim \mc B^{m,n}} \left\|
%     \m{s}^{m,n}(\m{z}^{n-1},\m{z}^{n}; \zeta) - {\mb{S}}^{m,n}(\m{z}^{n-1},\m{z}^{n}) \right\|^2 & \leq \sigma_n^2, \text{ for } n \in [N]
% \end{align*}
% }
% given as follows  
%  $\E[\| \Tilde{\m{h}}^{m} (\m{x}, \m{Z}) - \m{h}^{m} (\m{x}, \m{Z}) \|^2 | \m{x}, \m{Z}] \leq \sigma_0^2$ and $\E[\| \Tilde{\m{q}}^{n,m} (\m{x}, \m{Z}) - \m{q}^{n,m} (\m{x}, \m{Z}) \|^2 | \m{x}, \m{Z}] \leq \sigma_n^2$. 
\end{assumption}
\begin{assumption}[Heterogeneity]
\label{assum:heter:h}
For all $ (n,m) \in [N] \times [M]$, the set of mappings $ \{\mb{P}^{m}\}$ and $ \{\mb{S}^{m,n}\}$ are $\tau_0$ and $\tau_n$--Heterogeneous, respectively. %\ps{Why do we need the $\upsilon_0$ bound on $\mb P^m$. For bilevel, that means local Hessians need to be within bounded distance. Is that the usual assumption in bilevel, in FedNest, etc.?}
% , i.e., 
% \begin{subequations}
% \begin{align}
%   \sup_{(\m{x}, \m{Z})} \frac{1}{M}\sum_{m=1}^M\|\m{h}^{m}(\m{x}, \m{Z})-\m{h}(\m{x}, \m{Z}) \|  &\leq \tau, &    \sup_{(\m{x}, \m{Z})} \frac{1}{M}\sum_{m=1}^M\|\m{h}^{m}(\m{x}, \m{Z})-\m{h}(\m{x}, \m{Z}) \|\\
%   \sup_{(\m{x}, \m{Z})} \sup_{m \in [M]}\|\nabla\m{h}^{m}(\m{x}, \m{Z})-\nabla \m{h}(\m{x}, \m{Z}) &\| \leq 	\upsilon,
% \end{align}
% \end{subequations}
% \begin{enumerate}[label={\textnormal{\textbf{(D\arabic*})}}]
% \item 
% %[\textbf{a)} 
% %$\m{z}^{n-1} \in \mathbb{R}^{d_{n-1}}$, there exists a unique $\m{z}^{n,*}(\m{z}^{n-1}) \in \mathbb{R}^{d_n}$ such that $\mb{G}^n(\m{z}^{n-1},\m{z}^{n,*}(\m{z}^{n-1}))=0$. 
% \vspace{-.2cm}
% \item 
% \end{enumerate}
%$\mb{F}^{m}(\m{x},\m{Z})$ is one-point strongly monotone on $\m{z}^{n,*}(\m{z}^{n-1})$ given any $\m{z}^{n-1}$.
% ; that is 
% \begin{align*}
%     \big\langle \m{z}^n\!-\!\m{z}^{n,*}(\m{z}^{n-1}),\mb{G}^n(\m{z}^{n-1},\m{z}^n) \big\rangle \leq -\lambda_n \|\m{z}^n\!-\!\m{z}^{n,*}(\m{z}^{n-1})\|^2,
% \end{align*}
% for some $\lambda_n>0$.
\end{assumption}
Note that Assumptions~\ref{assum:lip:y*}-- \ref{assum:heter:h} are widely used in the analysis of non-federated MSA \cite{shen2022single} federated single SA~\cite{karimireddy2020mime}.  Assumption~\ref{assum:heter:h} relates the mappings of different clients to one another and are used in the analysis of single-seqeunce SA \cite{karimireddy2020mime,murata2021bias}. Assumption~\ref{assum:stmonot:g} has also been used in previous linear \cite{kaledin2020finite} and nonlinear SA \cite{doan2020nonlinear}.

We now present the convergence guarantee of Algorithm~\ref{alg:fedmsa}.
\begin{theorem}\label{thm:fedmsa}
Suppose Assumptions~\ref{assum:lip:y*}--\ref{assum:heter:h} hold. Further, assume $\rho =\Theta(\frac{1}{R})$, $\alpha=\mc{O}(\frac{1}{\tau K})$, and $\beta_n=\mc{O}(\frac{1}{\tau K})$ for $n \in [N]$, then %it holds for any $k$ that \ps{Shouldn't there be some average over $k$ or a min here?}
\begin{align*}
\E \left\|\mb{P}(\tilde{\m{x}})\right\|^2 + \sum_{n=1}^N\E \left\|\tilde{\m{z}}^n- \m{z}^{n,*}(\tilde{\m{z}}^{n-1})\right\|^2 \leq  \mc{O}\left(\frac{\tau}{R} +\frac{1}{\sqrt{K}R} + \frac{\sigma^2}{MKR} + \left(\frac{\sigma}{MKR}\right)^{2/3}\right). 
\end{align*}
Here, $\sigma:= \max (\sigma_1, \cdots, \sigma_N)$ and $\tau=\max(\tau_0, \cdots, \tau_N)$, and $\mc{O}$ hides problem dependent constants of a polynomial of $N$.

% and for some cosntant  $\Delta$ independent of $K,M, $ and $R$.
%. Here, $\mc{O}$ hides constants in the polynomial of $N$.
% \ps{Getting $1/R$ rate with just smoothness of $h$ seems surprising. We're only assuming smoothness of $h$, right? For $N=1$, this reduces to bilevel with nonconvex outer level and strongly convex inner level, right? How come we get $1/R$ in that case. Also, should we take average over $N$ on lhs, otherwise, $N$ will appear in the numerator on the rhs.}
\end{theorem}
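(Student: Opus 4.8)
\textbf{Proof strategy for Theorem~\ref{thm:fedmsa}.} The plan is to build a single Lyapunov function that combines the outer-level progress, the tracking errors of all $N$ inner sequences, and the variance-reduction error of the momentum/SARAH estimators, and to show it contracts (up to additive noise terms) over each communication round. First I would set up the potential
\[
\Phi_r \;=\; \mathbb{E}\,\mathbb{P}(\m{x}_r) \;+\; \sum_{n=1}^N c_n\,\mathbb{E}\big\|\m{z}_r^n - \m{z}^{n,*}(\m{z}_r^{n-1})\big\|^2 \;+\; \sum_{n=0}^N d_n\,\mathbb{E}\big\|\text{(estimator error at round }r)\big\|^2,
\]
with constants $c_n,d_n$ chosen later to absorb the coupling. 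The key structural facts I would use are: (i) one-point strong monotonicity (Assumption~\ref{assum:stmonot:g}, \textbf{A5}) gives a per-step contraction $\|\m{z}_{k+1}^n - \m{z}^{n,*}\|^2 \le (1-\Theta(\beta_n\lambda_n))\|\m{z}_k^n - \m{z}^{n,*}\|^2 + (\text{error terms})$ for the inner updates; (ii) Lipschitzness of $\m{z}^{n,*}$ and $\nabla \m{z}^{n,*}$ (\textbf{A2}) lets me pass tracking errors up the chain $n-1 \to n$ at the cost of $L_{\m{z},n}$, $L'_{\m{z},n}$ factors, contributing the polynomial-in-$N$ constants; (iii) $L_p$-Lipschitzness of $\mathbb{P}^m(\m{x})$ (\textbf{A3}) gives the usual descent-type inequality $\mathbb{P}(\m{x}_{r+1}) \le \mathbb{P}(\m{x}_r) - \Theta(\alpha)\|\m{P}(\m{x}_r)\|^2 + \alpha\cdot(\text{estimator error}) + L_p\alpha^2\|\m{h}\|^2$ once the estimator is close to the true hypergradient.

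Second, I would control the two sources of estimator error. The \emph{across-round} momentum error from \eqref{eqn:moment:maps} behaves like the Cutkosky–Orabona / STORM recursion: $\mathbb{E}\|\text{err}_r\|^2 \le (1-\rho)^2\mathbb{E}\|\text{err}_{r-1}\|^2 + \rho^2\sigma^2/M + \bar L^2(1-\rho)^2\mathbb{E}\|(\m{x}_r,\m{Z}_r)-(\m{x}_{r-1},\m{Z}_{r-1})\|^2$, using mean-Lipschitzness (\textbf{A4}) and the bias/variance bound (Assumption~\ref{assum:bias}); the drift term is $\mathcal{O}(\alpha^2 + \sum\beta_n^2)$ times the squared update norms. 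The \emph{within-round} SARAH/SPIDER error from \eqref{eqn:moment:local:maps} telescopes: $\mathbb{E}\|\text{err}_{r,k}\|^2 \le \mathbb{E}\|\text{err}_{r,0}\|^2 + \sum_{j\le k}\bar L^2\,\mathbb{E}\|(\m{x}_{r+1,j}^{\tilde m},\m{Z}_{r+1,j}^{\tilde m}) - (\cdots_{j-1})\|^2$, and since $\text{err}_{r,0}$ equals the \emph{global} momentum error, the crucial point is that the expectation over $\tilde m \sim \mathrm{Unif}[M]$ of the local mapping equals the global mapping — this is exactly where uniform client sampling makes the local hypergradient an unbiased proxy, so the heterogeneity $\tau$ enters only through a second-moment/variance term, not a bias. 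This is the step that realizes the paper's "provable local hypergradient estimation" claim.

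Third, I would assemble the telescoped inequality. Summing the per-step bounds over $k=1,\dots,K$ inside a round and then over $r=0,\dots,R-1$, choosing $\alpha,\beta_n = \mathcal{O}(1/(\tau K))$ so that all the quadratic drift terms are dominated by the linear contraction terms, and choosing $\rho = \Theta(1/R)$ to balance the momentum bias $\rho\sigma^2$ against the accumulated drift $(1-\rho)^{-1}$-type factor, the potential telescopes to
\[
\frac{1}{RK}\sum_{r,k}\Big(\mathbb{E}\|\m{P}(\m{x}_{r,k})\|^2 + \sum_n \mathbb{E}\|\m{z}_{r,k}^n - \m{z}^{n,*}(\m{z}_{r,k}^{n-1})\|^2\Big) \;\le\; \mathcal{O}\!\Big(\frac{\Phi_0 - \Phi_R}{\alpha R K} + (\text{noise})\Big),
\]
and plugging in the step sizes produces the $\tau/R$, $1/(\sqrt K R)$, $\sigma^2/(MKR)$, and $(\sigma/(MKR))^{2/3}$ terms respectively (the last from the standard STORM-type balance of $\rho$). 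The output $(\tilde{\m{x}},\tilde{\m{Z}})$ being a uniform sample over $[R]\times[K]$ converts the average into the claimed expectation bound.

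\textbf{Main obstacle.} The hardest part is the bookkeeping of the coupled tracking errors: because $\m{z}^{n,*}$ depends on $\m{z}^{n-1}$ which is itself being updated, the error $\|\m{z}_r^n - \m{z}^{n,*}(\m{z}_r^{n-1})\|$ is perturbed both by its own update and by the movement of $\m{z}^{n-1}$, so the contraction constants $c_n$ must be chosen in a cascade (largest for $n=N$, shrinking as $n$ decreases, or vice versa) to make the cross terms $L_{\m{z},n}\|\m{z}_r^{n-1}-\m{z}^{n-1,*}\|$ absorbable — this is what forces the step sizes to scale with $\tau$ and produces the polynomial-in-$N$ constants hidden in $\mathcal{O}$. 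Getting the signs and magnitudes of this cascade right, simultaneously with the momentum/SARAH error recursions (which feed drift terms back into every level), is the delicate core of the proof; everything else is a careful but routine assembly of single-level FL variance-reduction arguments.
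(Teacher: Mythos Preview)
Your proposal is correct and follows essentially the same route as the paper: a Lyapunov function $f(\m{x}) + \sum_n \|\m{z}^n - \m{z}^{n,*}(\m{z}^{n-1})\|^2$ (with $\nabla f = \mb{P}$), contraction of the inner errors via one-point strong monotonicity (Lemma~\ref{thm:fedin}), Lipschitz coupling across levels (Lemma~\ref{lem:inn:diff}), a SARAH-type telescoping bound for the within-round local error where the heterogeneity $\tau$ enters and forces $\alpha,\beta_n=\mc{O}(1/(\tau K))$ (Lemma~\ref{lem:drift:fedmsa2}), a STORM-type recursion for the across-round momentum error (Lemma~\ref{lem:drift2:fedmsa}), and the final step-size/$\rho$ balancing you describe. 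The only cosmetic difference is that the paper does not fold the estimator errors into the potential with weights $d_n$ but instead bounds them in separate lemmas and substitutes into the Lyapunov decrease; this is equivalent to your formulation.
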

\textbf{Comparison with previous MSA, BLO, and MCO results}:  The convergence rate of FedMSA is guaranteed to be $\frac{\tau}{R} +\frac{1}{\sqrt{K}R} + \frac{\sigma^2}{MKR} + \left(\frac{\sigma}{MKR}\right)^{2/3}$, outperforming previous non-federated MSA~\cite{shen2022single} algorithm with the rate  $ \frac{1}{\sqrt{R}}$. %In partoculuar $\E \left\|\mb{P}(\tilde{\m{x}})\right\|^2 \leq \epsilon$}
It also achieves a notable improvement in the communication complexity of BLO and MCO, with an upper bound of $\mathcal{O}(\tau \epsilon^{-1})$, compared to existing results in bilevel \cite{tarzanagh2022fednest,xiao2022alternating,huang2022fast,huang2023achieving,xiao2023communication} and compositional \cite{huang2021compositional,tarzanagh2022fednest,huang2022faster}  optimization.   These advantages are particularly significant when $\tau$ is small, as shown in Table \ref{table:fedmsa:results}.
\section{Numerical Experiments}\label{sec:numerics}

In the experiments, we conduct experiments for the \ref{fedblo_main} and \ref{eq:sc} problems.   
\begin{wrapfigure}{r}{0.4\textwidth}
  \vspace{-.7cm}
    \begin{tikzpicture}
      \node at (0,0) {\includegraphics[scale=0.37]{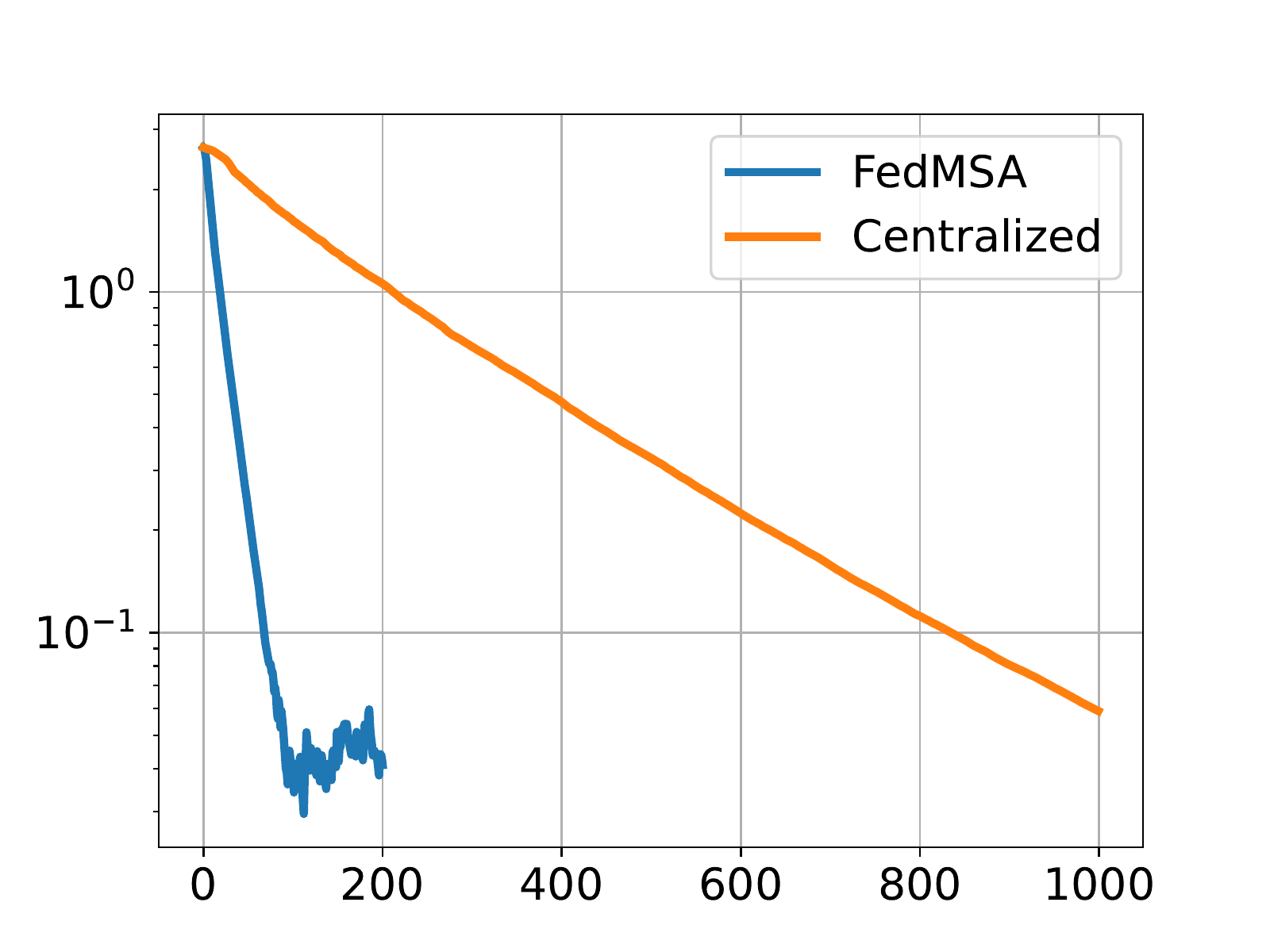}};
      \node[scale=1.0] at (0,-2.4) {Communication Rounds ($R$)};
      \node[scale=1.1, rotate=90] at (-3.2,-0.2) {$\|\m{x}-\m{x}^*\|$};
    \end{tikzpicture}
  \caption{ On \ref{eq:sc} problem, FedMSA achieves faster convergence compared to its centralized counterpart. }
  \label{fig:supp_fedmco}
  \vspace{-1cm}
\end{wrapfigure}
We first apply Algorithm~\ref{alg:fedmsa} to solve \ref{eq:sc}  problem. Our example is specifically chosen from the field of risk-averse stochastic optimization, which involves multilevel stochastic composite optimization problems. It can be formulated as follows: $$\min_{\m{x}}{\mb{E}[U(\m{x}, \xi)]+\lambda \sqrt{\mb{E}[\max(0, U(\m{x},\xi)-\mb{E} [U(\m{x},\xi)])^2]}}.$$
This problem is a stochastic three-level ($N=2$) composition optimization problem \cite{balasubramanian2022stochastic} with 
\begin{align*}
  f_0(\m{x}) &= (\m{x}, \mb{E}[U(\m{x},\xi)]),\\
  f_1(\m{x}, \m{y}) &= (\m{y}, \mb{E}[\max(0, U(\m{x}, \xi) - \m{y})]),\\
  f_2(\m{x}, \m{y}) &= \m{x}+ \lambda\sqrt{\m{y}+\delta}.  
\end{align*}
The loss function can be written in the compositional form as
$f_2(f_1(f_0(\m{x})))$.  We also note that to be consistent with \cite{balasubramanian2022stochastic}, in experiment, we also define $U(\m{x},\xi) = (b-g(\m{a}^\top \m{x}))^2$,  and $g(\m{x})=\m{x}^2$. For the experiment, we assume $\m{a}\in\mathbb{R}^d$ is a zero-mean Gaussian random vector with a random covariance matrix $\Sigma_{i,j} \sim \mathcal{N}(0,1)$, and $\zeta \sim \mathcal{N}(0, 0.001\times \m{I}_d)$ is the noise. The true parameter $\m{x}^*\in \mathbb{R}^d$ is drawn from a standard Gaussian distribution and fixed.  We set $d=10$, and the total number of examples in the experiment is set to $1,000$.

In Figure \ref{fig:supp_fedmco}, we present the results of our experiment where we conduct 1,000 iterations and compare the performance of our proposed method, FedMSA, with centralized training. In order to ensure a fair comparison in terms of the number of updates, we set the hyperparameters of FedMSA to $R=200$ and $K=5$, resulting in a total of 1,000 updates. Our analysis reveals that FedMSA exhibits a significantly faster convergence rate compared to the centralized method, demonstrating improvements in both communication efficiency and the number of updates required to reach convergence. 

\begin{figure}[t]
  \centering
  \begin{subfigure}[b]{0.33\textwidth}
    \centering
    \begin{tikzpicture}
      \node at (0,0) {\includegraphics[scale=0.3]{figs/FedBLO/tau_0.1.pdf}};
      \node[font=\small, scale=0.8] at (0,-1.8) {Local updates $(K)$};
      \node[font=\small, scale=0.8, rotate=90] at (-2.3,-0.2) {Test accuracy};
    \end{tikzpicture}
    \caption{$q=0.1$}
    \label{fig:tau_subfig_q0p1}
  \end{subfigure}%
  \begin{subfigure}[b]{0.33\textwidth}
    \centering
    \begin{tikzpicture}
      \node at (0,0) {\includegraphics[scale=0.3]{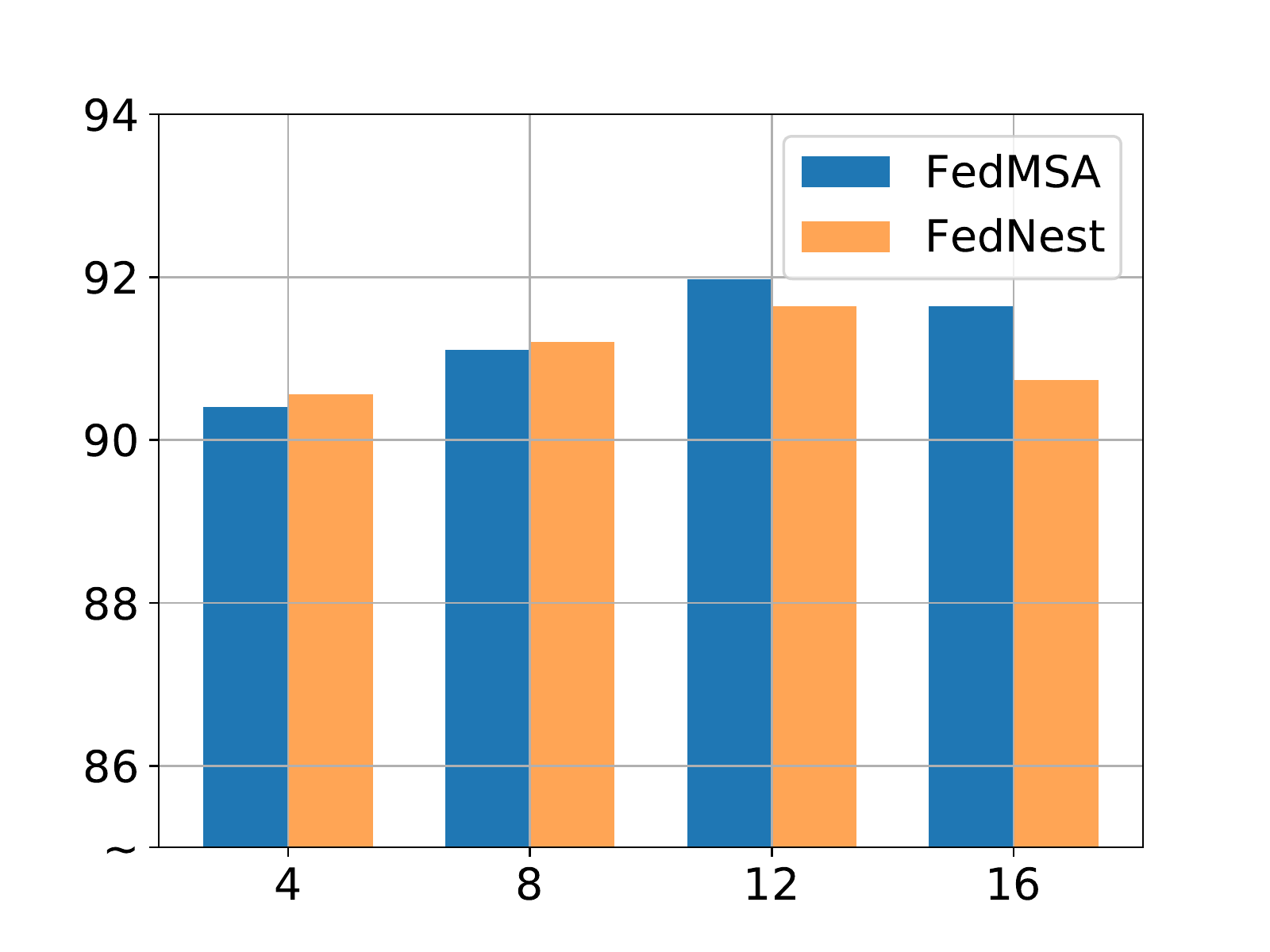}};
      \node[font=\small, scale=0.8] at (0,-1.8) {Local updates $(K)$};
    \node[font=\small, scale=0.8, rotate=90] at (-2.3,-0.2) {Test accuracy};
    \end{tikzpicture}
    \caption{$q=0.3$}
    \label{fig:tau_subfig_q0p3}
  \end{subfigure}%
  \begin{subfigure}[b]{0.33\textwidth}
    \centering
    \begin{tikzpicture}
      \node at (0,0) {\includegraphics[scale=0.3]{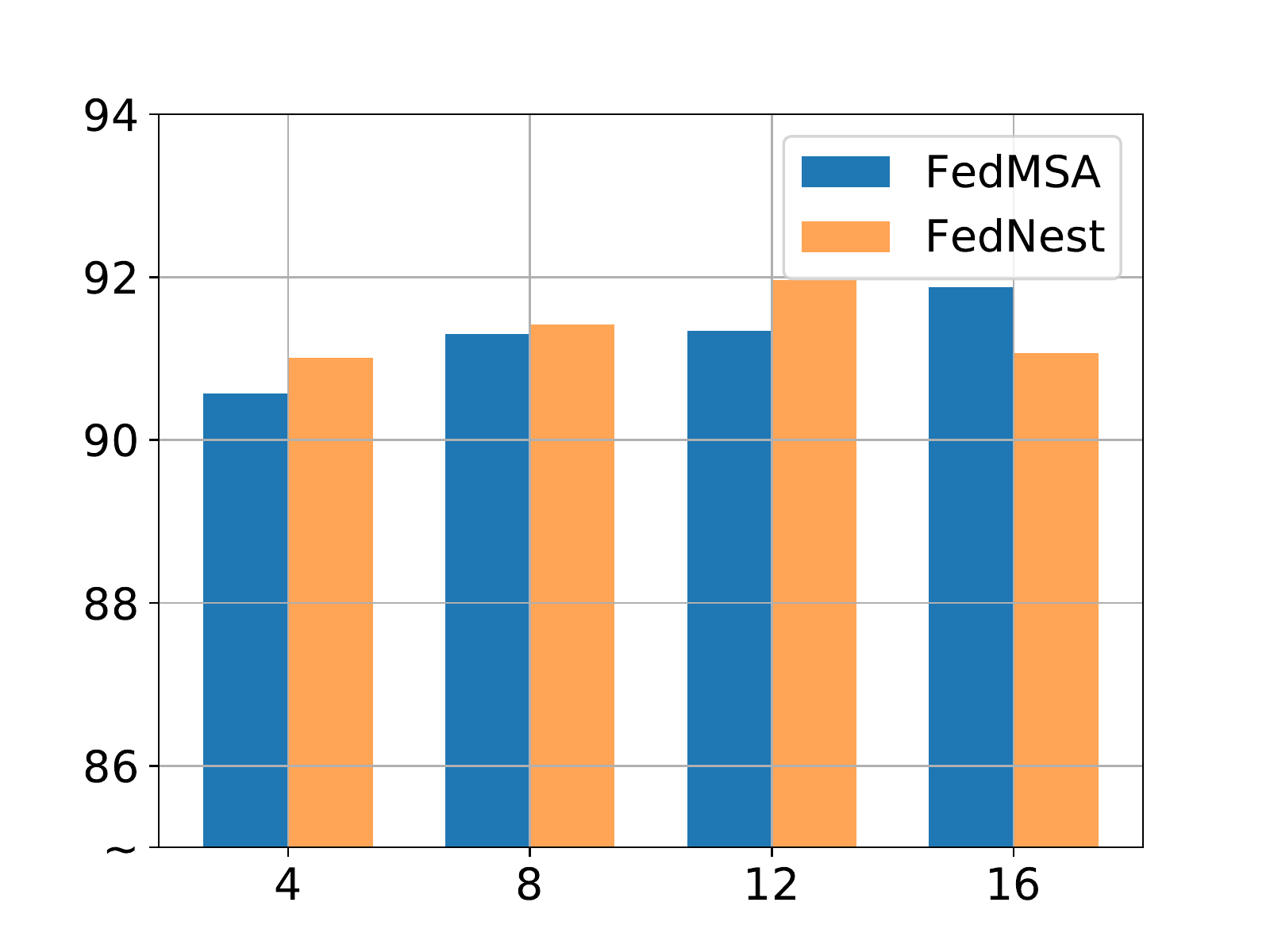}};
      \node[font=\small, scale=0.8] at (0,-1.8) {Local updates $(K)$};
     \node[font=\small, scale=0.8, rotate=90] at (-2.3,-0.2) {Test accuracy};

    \end{tikzpicture}
    \caption{$q=0.5$}
    \label{fig:tau_subfig_q0p5}
  \end{subfigure}%
  \caption{At the final of training, with the same number of server/global epochs, FedMSA benefits from local hypergradient estimation which can update the indirect component of hypergradient in local iterations.  }
  \label{fig:tau}
\end{figure}

We now focus on \ref{fedblo_main}. Our experiment follows the setup of loss function tuning on an imbalanced dataset, as described in \cite{tarzanagh2022fednest}. The objective is to maximize the class-balanced validation accuracy while training on the imbalanced dataset. We adopt the same network architecture, long-tail MNIST dataset, and train-validation strategy as \cite{tarzanagh2022fednest}. However, unlike their approach of partitioning the dataset into 100 clients using FedAvg \cite{mcmahan17fedavg} with either i.i.d. or non-i.i.d. distribution, we introduce fine-grained control over partition heterogeneity inspired by \cite{murata2021bias}. 

In a dataset with $C$ imbalanced classes, each containing $n_i$ samples, we aim to achieve a specified heterogeneity level $q \in [0,1]$. The dataset is divided into $C$ clients, each designed to have an equal number of samples, $C^{-1}\sum_{i=1}^{C} n_i$. For each client $i$, we include $q \times 100\%$ of the data from class $i$, or all $n_i$ examples if the class size is insufficient. If a client has fewer samples than $C^{-1}\sum_{i=1}^{C} n_i$, we fill the gap by uniformly sampling from the remaining samples across all classes.  This ensures an equal sample count across clients and a specified level of class heterogeneity, even within the context of an imbalanced dataset.  In our experiments, we first split the imbalanced MNIST dataset into $C=10$ clients according to $q$, and then split each client into $10$ smaller clients to match the 100 clients used in \cite{tarzanagh2022fednest, mcmahan17fedavg}. %Differing from Algo.~\ref{alg:fedmsa} which randomly selects one client to update, we provide a more general implementation for choosing the number of clients participating in local updates.
Throughout the experiment section, we choose 10 clients randomly from 100 clients for local updates to coincide with the literature. 

\begin{figure}[t]
  \centering
  \begin{subfigure}[b]{0.33\textwidth}
    \centering
    \begin{tikzpicture}
      \node at (0,0) {\includegraphics[scale=0.3]{figs/FedBLO/q_0.1.pdf}};
      \node[font=\small, scale=0.8] at (0,-1.8) {Communication rounds};
      \node[font=\small, scale=0.8, rotate=90] at (-2.3,-0.2) {Test accuracy};
    \end{tikzpicture}
    \caption{$q=0.1$, $K=12$}
    \label{fig:q_subfig_q0p1}
  \end{subfigure}%
  \begin{subfigure}[b]{0.33\textwidth}
    \centering
    \begin{tikzpicture}
      \node at (0,0) {\includegraphics[scale=0.3]{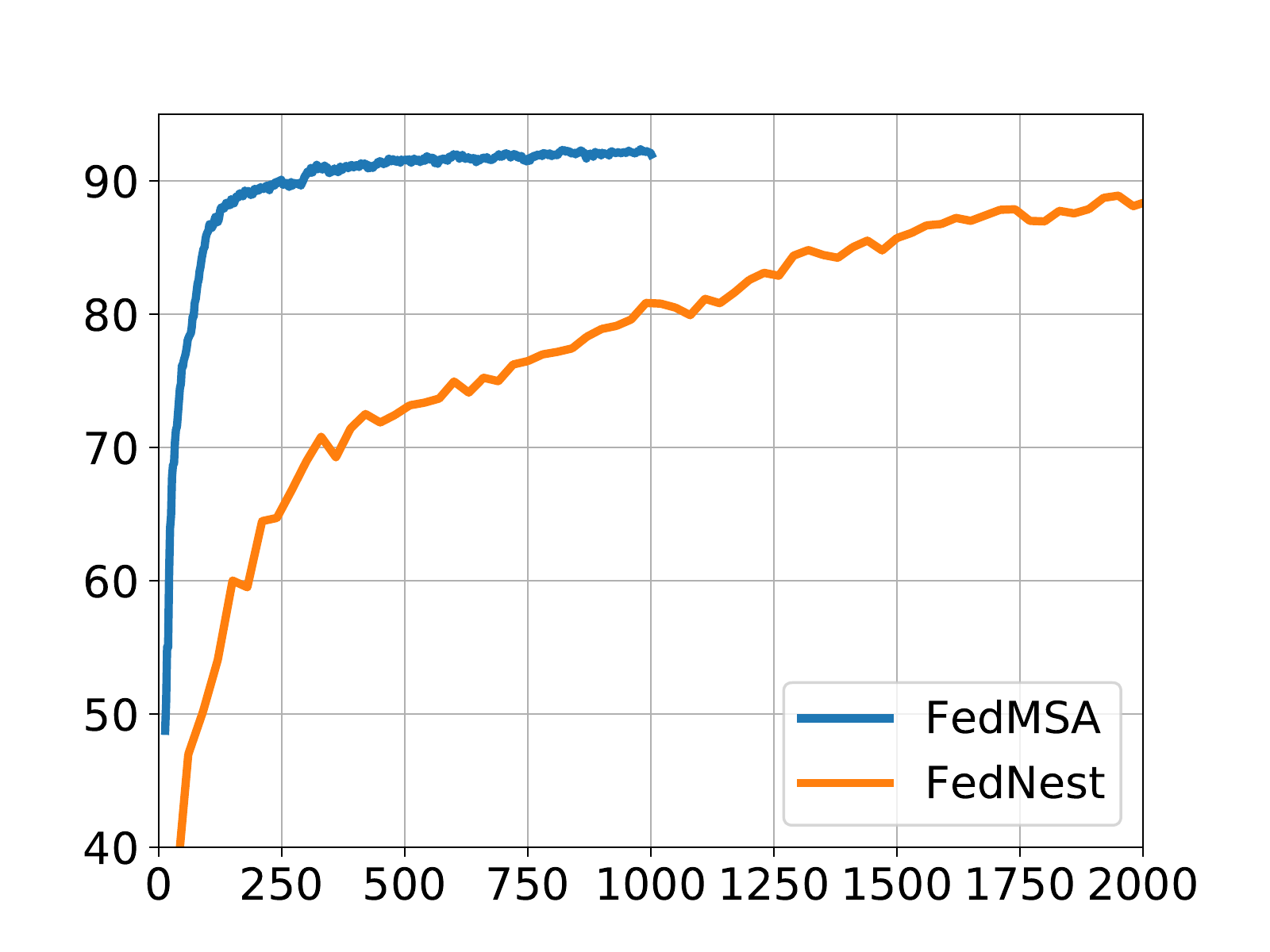}};
      \node[font=\small, scale=0.8] at (0,-1.8) {Communication rounds};
    \end{tikzpicture}
    \caption{$q=0.3$, $K=12$}
    \label{fig:q_subfig_q0p3}
  \end{subfigure}%
  \begin{subfigure}[b]{0.33\textwidth}
    \centering
    \begin{tikzpicture}
      \node at (0,0) {\includegraphics[scale=0.3]{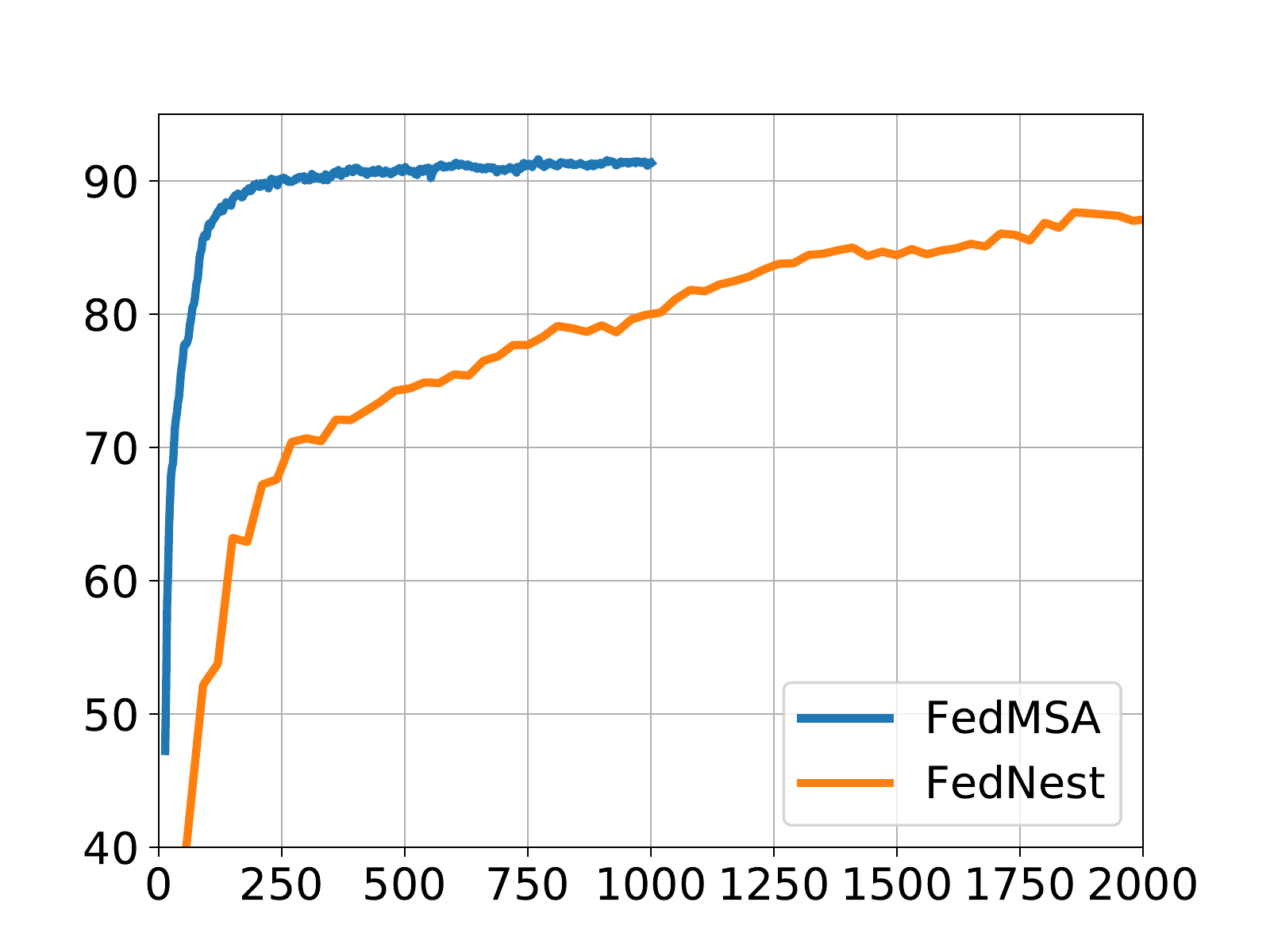}};
      \node[font=\small, scale=0.8] at (0,-1.8) {Communication rounds};
    \end{tikzpicture}
    \caption{$q=0.5$, $K=12$}
    \label{fig:q_subfig_q0p5}
  \end{subfigure}%
  \caption{The test accuracy during training, with fixed local updates ($K=12$) and different heterogeneity levels $q$, demonstrates the early stopping of FedMSA at 1000 communication rounds compared to the extended training of FedNest. }
  \label{fig:q}
\end{figure}

In Figure~\ref{fig:tau}, FedMSA's ability in local hypergradient update is demonstrated. Since the outer objective $f$ solely depends on optimal model parameters $\bm{w}^*(\bm{x})$ for this problem, the direct hypergradient $\nabla_{\m{x}}f^m\left(\m{x},\m{w}^*(\m{x})\right)$ remains zero. FedNest does not observe any change in local updates without updating the global Hessian-Gradient-Product. However, FedMSA benefits from local hypergradient estimation and updates the indirect hypergradient in local iterations, resulting in improved performance with larger local update $K$.  In Figure \ref{fig:q}, the test performance is shown for different heterogeneity levels and communication rounds. FedMSA stops at $1,000$ rounds as further training does not decrease loss. FedNest requires over $2,000$ rounds to reach the same accuracy as FedMSA's 90\% test accuracy achieved in approximately 250 rounds. The efficiency of Algo.\ref{alg:fedmsa} is highlighted by the significant reduction in communication rounds.

 %\vspace{-0.4cm}
\section{Related Work}\label{sec related}
We gather the related work under two topics: multi-sequence stochastic approximation and federated nested optimization. A more in-depth discussion is provided in Appendix. 

\noindent\textbf{Multi-Sequence Stochastic Approximation}. 
DSA and MSA have found widespread applications in various domains, including stochastic control \cite{doan2022nonlinear}, bilevel/multi-level optimization~\cite{sato2021gradient, yang2019multilevel, balasubramanian2022stochastic,tarzanagh2022online}, minimax optimization \cite{sharma2022federated, sharma2023federated,daskalakis2018limit}, and actor-critic reinforcement learning~\cite{wu2020finite, arora2020provable}. Recent literature has proposed and analyzed several DSA and MSA methods to address these problems \cite{doan2022nonlinear, hong2020two, yang2019multilevel, sato2021gradient, shen2022single}. While many analyses of DSA focus on the linear case with linear mappings $v(\mathbf{x},\mathbf{y})$ and $h(\mathbf{x},\mathbf{y})$, notable results for Two-time-scale (TTS) linear SA have been achieved \cite{konda2004convergence, dalal2018finite, kaledin2020finite}, proving an iteration complexity of $\mathcal{O}(\epsilon^{-1})$ for $\epsilon$-accuracy. TTS nonlinear SA has also been analyzed, with \cite{mokkadem2006convergence} establishing finite-time convergence rate under asymptotic convergence of the two sequences, and \cite{doan2022nonlinear} relaxing this assumption and showing an iteration complexity of $\mathcal{O}(\epsilon^{-1.5})$, which is larger than the $\mathcal{O}(\epsilon^{-1})$ complexity of TTS linear SA. In a federated setting, these problems have not been explored due to fundamental challenges, such as accounting for problem heterogeneity and developing hyper-gradient estimators. Our work, closely related to \cite{shen2022STSA_neurips}, considers the assumption that all but the main sequence have strongly monotone increments, providing an iteration complexity of $\mathcal{O}(\epsilon^{-2})$.

\noindent\textbf{Federated Nested Optimization.} 
Classical federated learning algorithms, such as \fedavg \cite{mcmahan2017communication}, encounter convergence issues due to \textit{client drift} \cite{karimireddy2020scaffold, hsu2019measuring}, where local client models deviate from globally optimal models due to \textit{objective heterogeneity} across clients. To mitigate this drift, variance reduction methods have been proposed to maintain client estimates of the true gradient \cite{murata2021bias, karimireddy2020mime, mitra2021linear, patel2022towards}. These methods draw inspiration from variance-reduced gradient estimators, such as \cite{defazio2014saga, johnson2013accelerating, nguyen2017sarah, cutkosky2019momentum}. Recent works, including \cite{tarzanagh2022fednest, xiao2022alternating, huang2022fast, huang2023achieving, xiao2023communication}, have developed bilevel optimization approaches for \textit{homogeneous} and general \textit{heterogeneous} federated settings. Additionally, \cite{jiao2022asynchronous} investigated asynchronous distributed bilevel optimization, while \cite{yang2022decentralized, chen2022decentralized, lu2022decentralized, gao2022stochastic, terashita2022personalized,nazari2022penalty} developed bilevel programming over decentralized networks. Our FedMSA not only provides faster rates but also addresses some shortcomings of prior works on bilevel optimization when approximating local hypergradients.

\section{Conclusions and Discussion}\label{sec:conc}
In this work, we have developed FedMSA, a novel federated algorithm for stochastic approximation with multiple coupled sequences (MSA) in the context of bilevel optimization and multi-level compositional optimization. FedMSA improves upon prior theory by enabling the estimation of local mappings and hypergradients through local client updates. It achieves near-optimal communication complexity and incorporates momentum and variance reduction techniques for accelerated convergence rates. Experimental results demonstrate the empirical benefits of FedMSA, including significant reductions in communication rounds compared to previous federated BLO schemes. However, one limitation of our work is its dependence on low heterogeneity levels for fast rates and communication complexity. Furthermore, we believe that a more general result regarding local hypergradient estimation with more than one client selection can be proven, as indicated by the empirical success of our generalized FedMSA algorithm. Nevertheless, even in its current form, FedMSA with local hypergradient estimation and near-optimal communication complexity represents a state-of-the-art algorithm and paves the way for efficient decentralized algorithms for nested problems.
\bibliography{blo.bib}
\bibliographystyle{plain}
\newpage
\appendix
\addcontentsline{toc}{section}{Appendix} % Add the appendix text to 
%\textbf{Roadmap.} The appendix is organized as follows:  

\part{}
\parttoc 
%################################################
%\input{supp/supp_localalgo}
%################################################
%\allowdisplaybreaks

% \ps{Need ouline of the Appendix.}

\section{ Related Work}\label{supp:sec:related}
We provide an overview of the current literature on  non-federated nested (bilevel, minmimax, and compositional) optimization and federated learning.
\subsection{Stochastic Approximation with Multiple Coupled Sequences}
Many recent analyses of the double-sequence SA focus on the linear case where $v(\m{x},\m{y})$ and $h(\m{x},\m{y})$ are linear mappings; see e.g., \cite{konda2004convergence,dalal2018finite,kaledin2020finite}. They use the so-called two-time-scale (TTS) step sizes where in one sequence is updated in the faster time scale while the other is updated in the slower time scale; that is $\lim_{k\rightarrow \infty}\alpha_k/\beta_k=0$.  Under this setting, They are able  to leverage the analysis of the single-sequence SA.  For example, \cite{kaledin2020finite} proves an iteration complexity of $\mathcal{O}(\epsilon^{-1})$ to achieve $\epsilon$-accuracy for the TTS linear SA, which is shown to be tight. With similar choice of step sizes, the TTS nonlinear SA was analyzed in \cite{mokkadem2006convergence,doan2022nonlinear}. In \cite{mokkadem2006convergence}, the finite-time convergence rate of TTS nonlinear SA was established under the assumption that the two sequences converge asymptotically. More recently, \cite{doan2022nonlinear} alleviates this assumption and shows that TTS nonlinear SA achieves an iteration complexity of $\mathcal{O}(\epsilon^{-1.5})$. However, this iteration complexity is larger than $\mathcal{O}(\epsilon^{-1})$ of the TTS linear SA. Our work is closely related to \cite{shen2022STSA_neurips} where the authors assume that  all but the main sequence have strongly
 monotone increments and  provide the $\mc{O}(\epsilon^{-2})$ iteration complexity.  However, neither problems have been explored in a federated setting so far. This is in part because federated MSA/DSA presents fundamental challenges, as we need to account for the heterogeneous nature of the problem and develop unbiased/low-variance hyper gradient estimators.
\subsection{Nested Optimization} 

\paragraph{Bilevel Optimization.} Numerous algorithms have been proposed to address bilevel nonlinear programming problems. Some methods, such as those mentioned in \cite{aiyoshi1984solution,edmunds1991algorithms,al1992global,hansen1992new,shi2005extended,lv2007penalty,moore2010bilevel}, transform the bilevel problem into a single-level optimization problem using approaches like KKT conditions or penalty function methods. Recently, alternating gradient-based approaches have gained attention for bilevel problems. These approaches estimate the hypergradient $\nabla f(\mathbf{x})$ for iterative updates, using approximate implicit differentiation (AID) and iterative differentiation (ITD) techniques. ITD-based methods, such as those proposed in \cite{maclaurin2015gradient,franceschi2017forward,finn2017model,grazzi2020iteration}, estimate $\nabla f(\mathbf{x})$ in a reverse or forward manner using techniques like automatic differentiation. AID-based methods, as in \cite{pedregosa2016hyperparameter,grazzi2020iteration,ghadimi2018approximation}, estimate the hypergradient through implicit differentiation, involving the solution of a linear system. Our algorithms follow the latter approach. Theoretical investigations of bilevel optimization encompass both asymptotic and non-asymptotic analyses. For example, \cite{franceschi2018bilevel} examines the asymptotic convergence of a backpropagation-based ITD algorithm under the assumption of strong convexity in the inner problem. \cite{shaban2019truncated} provides a similar analysis for a truncated backpropagation approach. Non-asymptotic complexity analyses have also been conducted. \cite{ghadimi2018approximation} presents a finite-time convergence analysis for an AID-based algorithm under various loss geometries. \cite{ji2021bilevel} improves the non-asymptotic analysis for AID- and ITD-based algorithms under nonconvex-strongly-convex geometry. Bounds on complexity have been explored by \cite{Ji2021LowerBA}, and stochastic bilevel algorithms have been developed with non-asymptotic analysis by \cite{ghadimi2018approximation,ji2021bilevel,hong2020two} for expected or finite-time objective functions. Further investigations include the analysis of SGD for stochastic bilevel problems \cite{chen2021closing} and online bilevel optimization \cite{tarzanagh2022online} and the study of accelerated SGD, SAGA, momentum, and adaptive-type methods for bilevel optimization \cite{chen2021single,guo2021stochastic,Khanduri2021ANA,Ji2020ProvablyFA,huang2021biadam,dagreou2022framework}.  

\paragraph{Minimax Optimization.}
Minimax optimization, a special case of bilevel optimization, has a long history dating back to \cite{brown1951iterative}. Initial works focused on the deterministic convex-concave regime \cite{nemirovski2004prox,nedic2009subgradient}, but there has been a recent surge of studies on stochastic minimax problems. The alternating version of gradient descent ascent (SGDA) has been explored, incorporating the idea of optimism \cite{daskalakis2018limit}. SGDA has been studied in the nonconvex-strongly concave setting by \cite{rafique2021weakly,lin2020gradient}. Notably, \cite{lin2020gradient} established a sample complexity of ${\cal O}(\epsilon^{-2})$ under an increasing batch size of ${\cal O}(\epsilon^{-1})$, while \cite{chen2021closing} provided the same sample complexity with a constant batch size of ${\cal O}(1)$. Accelerated SGDA algorithms have also been developed in this setting \cite{luo2020stochastic,yan2020optimal,tran2020hybrid}. Moreover, algorithms and convergence analysis have been explored for nonconvex-nonconcave minimax problems with certain benign structure, as studied in \cite{gidel2018variational,liu2019towards,yang2020global,diakonikolas2021efficient,barazandeh2021solving}.

\paragraph{Compositional Optimization.}
Stochastic compositional optimization gradient algorithms \cite{wang2017stochastic,wang2016accelerating} can be viewed as an  alternating SGD for the special compositional problem. However, to ensure convergence, the algorithms in \cite{wang2017stochastic,wang2016accelerating} use two sequences of variables being updated in two different time scales, and thus the iteration complexity of \cite{wang2017stochastic} and \cite{wang2016accelerating} is worse than ${\cal O}(\epsilon^{-2})$ of the standard SGD.  Our work is closely related to MSA \cite{shen2022single}, where an $\mc{O}(\epsilon^{-2})$ sample complexity has been established for a multi-sequence compositional optimization ($N\geq 1$) in a non-FL setting.

\subsection{Federated Nested Optimization}
FL involves training a centralized model using distributed client data, which presents challenges related to generalization, fairness, communication efficiency, and privacy \cite{mohri2019agnostic,stich2018local,yu2019parallel,wang2018cooperative,stich2019error, basu2019qsparse,nazari2019dadam,barazandeh2021decentralized}. While \fedavg~\cite{mcmahan17fedavg} addresses some of these challenges, such as high communication costs, various variants of \fedavg have been proposed to tackle other emerging issues like convergence and client drift. Approaches such as incorporating a regularization term in client objectives \cite{li2020federated}, proximal splitting \cite{pathak2020fedsplit,mitra2021linear}, variance reduction \cite{karimireddy2020scaffold,mitra2021linear}, dynamic regularization \cite{acar2021federated}, and adaptive updates \cite{reddi2020adaptive} have been explored. Several works have analyzed \fedavg in heterogeneous settings \cite{li2020federated, wang2019adaptive, khaled2019first, li2019convergence}, deriving convergence rates that depend on the level of heterogeneity. They show that the convergence rate of \fedavg deteriorates as client heterogeneity increases. The \scaffold method \cite{karimireddy2020scaffold} utilizes control variates to reduce client drift and achieves convergence rates independent of the amount of heterogeneity. Similarly, \textsc{FedNova} \cite{wang2020tackling} and \textsc{FedLin} \cite{mitra2021linear} demonstrate convergence in the presence of arbitrary local objective and system heterogeneity. Our algorithms are based on the federated single-level variance reduction methods developed in \cite{murata2021bias,karimireddy2020mime,mitra2021linear,patel2022towards} and incorporate a client selection approach.

Recent studies have explored federated minimax optimization \cite{rasouli2020fedgan, reisizadeh2020robust,deng2020distributionally,hou2021efficient, xie2021federated}. \cite{deng2021local} studies federated optimization for smooth nonconvex minimax functions, and \cite{hou2021efficient} introduces SCAFFOLD-S, a SCAFFOLD saddle point algorithm for solving strongly convex-concave minimax problems in the federated setting. However, to our knowledge, none of these works provide a rate that depends on the problem heterogeneity.  Recent works, including \cite{tarzanagh2022fednest, xiao2022alternating, huang2022fast, huang2023achieving, xiao2023communication}, have developed federated bilevel optimization approaches for \textit{homogeneous} and general \textit{heterogeneous} federated settings. Additionally, \cite{jiao2022asynchronous} investigated asynchronous distributed bilevel optimization, while \cite{yang2022decentralized, chen2022decentralized, lu2022decentralized, gao2022stochastic, terashita2022personalized} developed bilevel programming over decentralized networks. Further, \cite{huang2021compositional,tarzanagh2022fednest,huang2022faster,zhang2023federated} developed federated algorithms for nested compositional optimization. Our FedMSA not only provides faster rates but also addresses some shortcomings of prior works on bilevel and compositional optimization when approximating local hyper gradients.
% %\input{supp/supp_ntable}
 
\section{Proof for Federated Multi-Sequence Approximation}

\subsection{Auxiliary Lemmas}
We can simplify Lemmas \ref{thm:fedin} and \ref{lem:inn:diff} by removing the subscripts $\tilde{m}$ and $r$ from the definition of model parameters, as they are based on local updates. Additionally, we will use $b_0$ to denote the initial batch size, and $b$ will represent the batch size for $r \geq 2$ throughout the proofs. This notation adjustment allows for a clearer and more concise presentation of the lemmas.

The contraction property of the sequence $\{\m{z}_{k}^{n}\}_{n=1}^N$ is stated in the following lemma:
\begin{lemma}\label{thm:fedin} 
Suppose Assumptions~\ref{assum:lip:y*}  hold, and $\beta_n \leq \lambda_n/(2 L_{s,n})$ for each $n \in [N]$. Then, for each $ k \in [K]$ and $n \in [N]$,  Algorithm~\ref{alg:fedmsa} guarantees: 
\begin{align*}
    \left\|\m{z}_{k+1}^{n}- \m{z}^{n,*}(\m{z}_k^{n-1})\right\|^2& \leq \left(1-\lambda_{n} \beta_n\right) \left\|\m{z}_{k}^n-\m{z}^{n,*}(\m{z}_k^{n-1})\right\|^2\\
    &+ \beta_n \left( \lambda_{n}^{-1}+ 2\beta_n\right)\left\| \m{q}^{n}_{k}-\mb{S}^n(\m{z}_{k}^{n-1},\m{z}_{k}^n)\right\|^2.
\end{align*}
\end{lemma}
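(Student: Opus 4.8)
The plan is to exploit the one-step local update rule $\m{z}_{k+1}^{n} = \m{z}_{k}^{n} - \beta_n \m{q}_{k}^{n}$ (Line 15 of Algorithm~\ref{alg:fedmsa}, with subscripts suppressed as noted) and measure the progress toward the fixed point $\m{z}^{n,*}(\m{z}_k^{n-1})$, which is held at a \emph{fixed} value of the coupling variable $\m{z}_k^{n-1}$ throughout this lemma. First I would expand the squared norm:
\begin{align*}
\left\|\m{z}_{k+1}^{n}- \m{z}^{n,*}(\m{z}_k^{n-1})\right\|^2
&= \left\|\m{z}_{k}^{n}- \m{z}^{n,*}(\m{z}_k^{n-1})\right\|^2
- 2\beta_n \left\langle \m{z}_{k}^{n}- \m{z}^{n,*}(\m{z}_k^{n-1}),\, \m{q}_{k}^{n}\right\rangle
+ \beta_n^2 \left\|\m{q}_{k}^{n}\right\|^2.
\end{align*}

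Next I would split the inner-product term by inserting $\pm\, \mb{S}^n(\m{z}_k^{n-1},\m{z}_k^n)$:
\[
\left\langle \m{z}_{k}^{n}- \m{z}^{n,*},\, \m{q}_{k}^{n}\right\rangle
= \left\langle \m{z}_{k}^{n}- \m{z}^{n,*},\, \mb{S}^n(\m{z}_k^{n-1},\m{z}_k^n)\right\rangle
+ \left\langle \m{z}_{k}^{n}- \m{z}^{n,*},\, \m{q}_{k}^{n}- \mb{S}^n(\m{z}_k^{n-1},\m{z}_k^n)\right\rangle.
\]
For the first bracket I would invoke one-point strong monotonicity (Assumption~\ref{assum:lip:y*}\ref{assum:stmonot:g}) to get a lower bound $\lambda_n\|\m{z}_k^n-\m{z}^{n,*}\|^2$, which after multiplication by $-2\beta_n$ contributes the contraction factor; for the second bracket I would use Young's inequality $2\langle a,b\rangle \le c\|a\|^2 + c^{-1}\|b\|^2$ with a carefully chosen $c$ (e.g. $c=\lambda_n$) so that the resulting $\|\m{z}_k^n-\m{z}^{n,*}\|^2$ term is dominated by the contraction gain. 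For the $\beta_n^2\|\m{q}_k^n\|^2$ term I would write $\m{q}_k^n = \mb{S}^n(\m{z}_k^{n-1},\m{z}_k^n) + (\m{q}_k^n - \mb{S}^n(\m{z}_k^{n-1},\m{z}_k^n))$, bound $\|\m{q}_k^n\|^2 \le 2\|\mb{S}^n\|^2 + 2\|\m{q}_k^n-\mb{S}^n\|^2$, and then use that $\mb{S}^n(\cdot,\m{z}^{n,*}(\cdot))=0$ together with $L_{s,n}$-Lipschitzness (Assumption~\ref{assum:lip:y*}\ref{assum:lipmaps}) to replace $\|\mb{S}^n(\m{z}_k^{n-1},\m{z}_k^n)\|^2 \le L_{s,n}^2\|\m{z}_k^n-\m{z}^{n,*}\|^2$.

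Collecting terms, the coefficient of $\|\m{z}_k^n-\m{z}^{n,*}\|^2$ becomes $1 - 2\beta_n\lambda_n + \beta_n\lambda_n + 2\beta_n^2 L_{s,n}^2 = 1 - \beta_n\lambda_n + 2\beta_n^2 L_{s,n}^2$, and here the step-size restriction $\beta_n \le \lambda_n/(2L_{s,n})$ is exactly what forces $2\beta_n^2 L_{s,n}^2 \le \tfrac12\beta_n\lambda_n$... but the target bound has a clean $(1-\lambda_n\beta_n)$ factor, so I would instead tune the Young's-inequality constant and reabsorb the slack: choosing the split so that the cross term yields $\tfrac{\beta_n}{2}\lambda_n^{-1}\cdot 2\lambda_n$ is wasteful; the cleaner route is to bound $2\beta_n^2 L_{s,n}^2 \le \beta_n\lambda_n\cdot(\beta_n L_{s,n}^2/\lambda_n)\cdot 2$ and note $\beta_n L_{s,n}\le \lambda_n/2$ gives $2\beta_n^2L_{s,n}^2 \le \beta_n\lambda_n\beta_n L_{s,n}^2/(\ldots)$ — I will need to be careful to land precisely on $(1-\lambda_n\beta_n)$ while keeping the noise coefficient as $\beta_n(\lambda_n^{-1}+2\beta_n)$; the $2\beta_n$ piece of that coefficient comes from the $2\beta_n^2\|\m{q}_k^n-\mb{S}^n\|^2$ contribution, and the $\lambda_n^{-1}$ piece from Young's inequality on the cross term.

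The main obstacle I anticipate is purely bookkeeping: choosing the Young's-inequality parameter and grouping the $\beta_n^2$ terms so that the contraction coefficient collapses to exactly $1-\lambda_n\beta_n$ (rather than $1-\tfrac12\lambda_n\beta_n$ or similar) while the error coefficient is exactly $\beta_n(\lambda_n^{-1}+2\beta_n)$. This is a matter of spending the step-size budget $\beta_n \le \lambda_n/(2L_{s,n})$ in the right place — specifically using it to cancel the $2\beta_n^2 L_{s,n}^2$ term against half of the $\beta_n\lambda_n$ gain is the natural move, but matching the stated constants may require instead keeping the Lipschitz bound and absorbing it differently; I would reconcile this by writing out the coefficient inequality $1 - 2\beta_n\lambda_n + \lambda_n\beta_n + 2\beta_n^2L_{s,n}^2 \le 1 - \lambda_n\beta_n$, which holds iff $2\beta_n L_{s,n}^2 \le \lambda_n$, a consequence of $\beta_n \le \lambda_n/(2L_{s,n})$ when $L_{s,n}\ge \ldots$ — or, more robustly, by choosing the Young constant slightly smaller than $\lambda_n$ to leave room. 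No deep idea is needed beyond monotonicity, Lipschitzness, and Young's inequality; the work is in the constants.
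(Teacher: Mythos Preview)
Your proposal is correct and follows essentially the same approach as the paper: expand the square, split the cross term via $\pm\,\mb{S}^n$, apply one-point strong monotonicity to the $\langle \m{z}_k^n-\m{z}^{n,*},\mb{S}^n\rangle$ piece, Young's inequality (with parameter $\lambda_n$) to the $\langle \m{z}_k^n-\m{z}^{n,*},\m{q}_k^n-\mb{S}^n\rangle$ piece, and then bound $\beta_n^2\|\m{q}_k^n\|^2$ by $2\beta_n^2\|\mb{S}^n\|^2+2\beta_n^2\|\m{q}_k^n-\mb{S}^n\|^2$ using $\mb{S}^n(\m{z}_k^{n-1},\m{z}^{n,*})=0$ and $L_{s,n}$-Lipschitzness. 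The bookkeeping difficulty you flag---that the coefficient comes out as $1-\lambda_n\beta_n+2L_{s,n}^2\beta_n^2$ and one must spend the step-size condition $\beta_n\le \lambda_n/(2L_{s,n})$ to absorb the extra $2L_{s,n}^2\beta_n^2$ term---is exactly the point where the paper's own derivation is terse (indeed the paper simply drops this term in its displayed inequality), so your awareness of the issue is appropriate.
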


\begin{proof}
Let $\m{z}^{n,*}_k=\m{z}^{n,*}(\m{z}_k^{n-1})$. For all $n \in [N]$, we have 
\begin{align}\label{eqn-1-1:dec:inner}
    \left\|\m{z}_{k+1}^{n}-\m{z}^{n,*}_k\right\|^2 
    &= \left\|\m{z}_k^n-\m{z}_k^{n,*}\right\|^2 - 2\beta_n \left\langle \m{z}_k^n-\m{z}_k^{n,*},\m{q}_{k}^n\right\rangle + \beta_n^2 \norm{\m{q}_{k}^n}^2.
\end{align}
The second term in \eqref{eqn-1-1:dec:inner} can be bounded as
\begin{align}
    & ~~~-\left\langle \m{z}_k^n-\m{z}_k^{n,*}, \mb{S}^n(\m{z}_k^{n-1},\m{z}_k^n) + \m{q}^{n}_{k}-\mb{S}^n(\m{z}_{k}^{n-1},\m{z}_{k}^n) \right\rangle \nn \\
    &= -\left \langle \m{z}_k^n-\m{z}_k^{n,*},\mb{S}^n(\m{z}_k^{n-1},\m{z}_k^n)\right\rangle - \left\langle \m{z}_k^n-\m{z}_k^{n,*}, \m{q}^{n}_{k}-\mb{S}^n(\m{z}_{k}^{n-1},\m{z}_{k}^n)
    \right\rangle \nonumber \\
    & \leq -\lambda_{n} \|\m{z}_k^n-\m{z}_k^{n,*}\|^2 + \left\|\m{z}_k^n-\m{z}_k^{n,*}\right\|\left\| \m{q}^{n}_{k}-\mb{S}^n(\m{z}_{k}^{n-1},\m{z}_{k}^n)\right\| \nonumber \\
    &\leq -\lambda_{n} \left\|\m{z}_k^n-\m{z}_k^{n,*}\right\|^2 + \frac{\lambda_{n}}{2}\left\|\m{z}_k^n-\m{z}_k^{n,*}\right\|^2 + \frac{1}{2\lambda_{n}} \left\|\m{q}^{n}_{k}-\mb{S}^n(\m{z}_{k}^{n-1},\m{z}_{k}^n)\right\|^2 \nonumber \\%+ \frac{2}{\lambda_{n}} \left\|  \left[\zeta_{k}^n\right]\right\|^2, \\
    &\leq - \frac{\lambda_{n}}{2} \left\|\m{z}_k^n-\m{z}_k^{n,*}\right\|^2 + \frac{1}{2\lambda_{n}} \left\| \m{q}^{n}_{k}-\mb{S}^n(\m{z}_{k}^{n-1},\m{z}_{k}^n)\right\|^2. \label{eqn-1-2:dec:inner}
\end{align}
where the first inequality uses Using Assumption~\ref{assum:stmonot:g} and Cauchy-Schwarz, and the second inequality follows from Young's inequality with $\lambda_n$.

For the third term in \eqref{eqn-1-1:dec:inner}, we have 
\begin{align}
\nonumber 
    \beta_n^2 \norm{\m{q}_{k}^n}^2 & \leq 2 \beta_n^2 \left( \norm{\mb{S}^n(\m{z}_k^{n-1},\m{z}_k^n) - \mb{S}^n \left( \m{z}_k^{n-1}, \m{z}_k^{n,*} (\m{z}_k^{n-1}) \right)}^2 + \left\|\m{q}^{n}_k-\mb{S}^n(\m{z}_k^{n-1},\m{z}_k^n)\right\|^2\right) \\
    \nonumber 
    &\leq 2 L_{s,n}^2 \beta_n^2 \norm{\m{z}_k^n - \m{z}_k^{n,*} (\m{z}_k^{n-1})}^2 + 2  \beta_n^2  \left\| \m{q}^{n}_{k}-\mb{S}^n(\m{z}_{k}^{n-1},\m{z}_{k}^n)\right\|^2  \\
    &\leq \norm{\m{z}_k^n - \m{z}_k^{n,*} (\m{z}_k^{n-1})} + 2  \beta_n^2  \left\| \m{q}^{n}_{k}-\mb{S}^n(\m{z}_{k}^{n-1},\m{z}_{k}^n)\right\|^2. \label{eqn-1-3:dec:inner}
\end{align}
Here, the first inequality uses Young's inequality and the fact that and $\mb{S}^n \left( \m{z}_k^{n-1}, \m{z}_k^{n,*} (\m{z}_k^{n-1}) \right) = 0$; the second inequality follows from  Assumption~\ref{assum:lipmaps}.

Substituting  \eqref{eqn-1-2:dec:inner}  and \eqref{eqn-1-3:dec:inner} into \eqref{eqn-1-1:dec:inner} gives the desired result. 
\end{proof}

\begin{lemma}\label{lem:inn:diff} 
Suppose Assumption~\ref{assum:lip:y*} holds. Then, for each $ k \in [K]$, $\alpha \leq 1/(2L_{\m{z},1}^2)$,  Algorithm~\ref{alg:fedmsa} guarantees:
\begin{align}\label{eqn:thm:deczn00}
%\begin{aligen}
\nonumber
\sum_{n=1}^N \left\|\m{z}_{k+1}^{n,*}-\m{z}_{k+1}^n\right\|^2 &-\left\|\m{z}_{k}^n-\m{z}_{k}^{n,*}\right\|^2 \leq \sum_{n=1}^N C^n \left\|\m{z}_{k}^n-\m{z}_{k}^{n,*}\right\|^2 \\
&+  \sum_{n=1}^N D^n\left\| \m{q}_k^{n}- \mb{S}^n(\m{z}_k^{n-1},\m{z}_k^{n})\right\|^2\\
\nonumber
&+ 2L_{\m{z},1}^2 \alpha^2 \left( \left\|\m{h}_k-\mb{P} (\m{x}_{k})\right\|^2 + \|\mb{P}(\m{x}_{k})\|^2\right)+\frac{\alpha}{8} \|\m{h}_k\|^2.
\end{align}
%\end{equation}
where $ \{C^n,D^n\}_{n=1}^N$ are defined in \eqref{eqn:cndn}.
\end{lemma}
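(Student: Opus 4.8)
## Proof Strategy for Lemma~\ref{lem:inn:diff}

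\textbf{Overall approach.} The plan is to track, for each level $n \in [N]$, how the squared distance $\|\m{z}_{k}^n - \m{z}_{k}^{n,*}\|^2$ evolves in one local step, and to handle the fact that \emph{both} $\m{z}_k^n$ and the target $\m{z}_k^{n,*} = \m{z}^{n,*}(\m{z}_k^{n-1})$ move between iterations $k$ and $k+1$. The movement of $\m{z}_k^n$ is governed by Lemma~\ref{thm:fedin}, which already gives a contraction plus a variance term. The movement of the target $\m{z}_{k+1}^{n,*} = \m{z}^{n,*}(\m{z}_{k+1}^{n-1})$ is controlled by the Lipschitz continuity of $\m{z}^{n,*}(\cdot)$ from Assumption~\ref{assum:lip:y*}~A2, so I would expand
\[
\|\m{z}_{k+1}^{n,*} - \m{z}_{k+1}^n\|^2 \le (1+\eta_n)\|\m{z}_k^{n,*} - \m{z}_{k+1}^n\|^2 + (1+\eta_n^{-1}) L_{\m{z},n}^2 \|\m{z}_{k+1}^{n-1} - \m{z}_k^{n-1}\|^2
\]
for a suitable $\eta_n$, then bound the first piece by Lemma~\ref{thm:fedin} and recurse the second piece down through the levels $n-1, n-2, \dots, 1$.

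\textbf{Key steps in order.} First I would apply Lemma~\ref{thm:fedin} to replace $\|\m{z}_k^{n,*} - \m{z}_{k+1}^n\|^2$ by $(1-\lambda_n\beta_n)\|\m{z}_k^n - \m{z}_k^{n,*}\|^2 + \beta_n(\lambda_n^{-1} + 2\beta_n)\|\m{q}_k^n - \mb{S}^n(\m{z}_k^{n-1},\m{z}_k^n)\|^2$. Second, I would bound the one-step displacement $\|\m{z}_{k+1}^{n-1} - \m{z}_k^{n-1}\|^2$: for $n \ge 2$ this is $\beta_{n-1}^2\|\m{q}_k^{n-1}\|^2$ from the update rule (Line~15), which splits via Young's inequality and Assumption~\ref{assum:lipmaps} into a term proportional to $\|\m{z}_k^{n-1} - \m{z}_k^{n-1,*}\|^2$ and a variance term, exactly as in \eqref{eqn-1-3:dec:inner}; for $n=1$ the displacement is $\|\m{z}_{k+1}^0 - \m{z}_k^0\|^2 = \|\m{x}_{k+1} - \m{x}_k\|^2 = \alpha^2\|\m{h}_k\|^2$, and this is where the $\m{x}$-terms enter. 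Third, I would further decompose $\alpha^2\|\m{h}_k\|^2 \le 2\alpha^2(\|\m{h}_k - \mb{P}(\m{x}_k)\|^2 + \|\mb{P}(\m{x}_k)\|^2)$ to expose the structure appearing on the right-hand side of \eqref{eqn:thm:deczn00}, while separately keeping a small $\tfrac{\alpha}{8}\|\m{h}_k\|^2$ slack (using $\alpha \le 1/(2L_{\m{z},1}^2)$ to absorb one copy of the quadratic into a linear-in-$\alpha$ term). Fourth, I would sum over $n \in [N]$; because level $n$ contributes a term in $\|\m{z}_k^{n-1} - \m{z}_k^{n-1,*}\|^2$ to level $n$'s bound, the telescoping across levels produces constants $C^n$ and $D^n$ that aggregate contributions from level $n$ and from all higher levels that "see" level $n$ through the chain of targets — these are precisely the $\{C^n, D^n\}$ defined in the referenced display \eqref{eqn:cndn}.

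\textbf{Main obstacle.} The delicate part is the cross-level coupling: the target $\m{z}_{k+1}^{n,*}$ depends on $\m{z}_{k+1}^{n-1}$, whose drift depends on $\m{q}_k^{n-1}$, which in turn (after using Assumption~\ref{assum:lipmaps}) reintroduces $\|\m{z}_k^{n-1} - \m{z}_k^{n-1,*}\|^2$; doing this carefully for all $N$ levels without losing track of which $\beta$'s, $L_{s,n}$'s, and $L_{\m{z},n}$'s accumulate into each $C^n, D^n$ is the bookkeeping crux. A secondary subtlety is choosing the Young's-inequality parameters $\eta_n$ so that the coefficient of $\|\m{z}_k^n - \m{z}_k^{n,*}\|^2$ — namely $(1+\eta_n)(1-\lambda_n\beta_n) - 1$ plus the incoming higher-level contributions — stays controlled (ideally negative, or at least $\mathcal{O}(\beta_n)$), which is what ultimately drives the contraction in the main convergence proof; since the lemma statement only asks for the inequality with $C^n, D^n$ as \emph{defined} constants, I would defer the sign/magnitude analysis of $C^n$ to the step where Lemma~\ref{lem:inn:diff} is invoked, and here simply carry the algebra through cleanly.
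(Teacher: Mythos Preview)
Your proposal is correct and follows essentially the same architecture as the paper's proof: invoke Lemma~\ref{thm:fedin} for the contraction of $\|\m{z}_{k+1}^n - \m{z}_k^{n,*}\|^2$, control the target drift $\m{z}_{k+1}^{n,*} - \m{z}_k^{n,*}$ via Lipschitz continuity of $\m{z}^{n,*}(\cdot)$ and the one-step displacement $\|\m{z}_{k+1}^{n-1} - \m{z}_k^{n-1}\|$, distinguish $n=1$ (where the $\m{x}$-update enters) from $n\ge 2$, and then sum over levels to collect the cross-level coupling into $C^n,D^n$.

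The one technical difference is in the opening decomposition. You start from Young's inequality on the triangle $\|\m{z}_{k+1}^{n,*} - \m{z}_{k+1}^n\|^2 \le (1+\eta_n)\|\m{z}_k^{n,*} - \m{z}_{k+1}^n\|^2 + (1+\eta_n^{-1})L_{\m{z},n}^2\|\m{z}_{k+1}^{n-1} - \m{z}_k^{n-1}\|^2$, which forces you to bound $\beta_{n-1}^2\|\m{q}_k^{n-1}\|^2$ \emph{after} squaring. The paper instead uses the exact cosine-law identity
\[
\|\m{z}_{k+1}^n - \m{z}_{k+1}^{n,*}\|^2 = \|\m{z}_{k+1}^n - \m{z}_k^{n,*}\|^2 + \|\m{z}_k^{n,*} - \m{z}_{k+1}^{n,*}\|^2 + 2\langle \m{z}_{k+1}^n - \m{z}_k^{n,*},\, \m{z}_k^{n,*} - \m{z}_{k+1}^{n,*}\rangle,
\]
applies the mean-value theorem to write the cross term as $\langle \m{z}_{k+1}^n - \m{z}_k^{n,*},\, -\beta_{n-1}\nabla \m{z}^{n,*}(\hat{\m{z}}_{k+1}^{n-1})^{\top}\m{q}_k^{n-1}\rangle$, and then splits $\m{q}_k^{n-1} = \mb{S}^{n-1}(\cdot) + (\m{q}_k^{n-1} - \mb{S}^{n-1}(\cdot))$ \emph{before} applying Young. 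This linear-in-$\m{q}$ handling is what produces the first-order-in-$\beta_{n-1}$ pieces such as $\tfrac{\lambda_{n-1}}{4}\beta_{n-1}$ inside the specific constants \eqref{eqn:cndn}, and similarly for $n=1$ it yields the $\tfrac{\alpha}{8}\|\m{h}_k\|^2$ term and the separate $2L_{\m{z},1}^2\alpha^2(\cdot)$ term simultaneously. Your approach gives constants of the same order (with a suitable $\eta_n\sim\beta_{n-1}$), but not the \emph{exact} expressions in \eqref{eqn:cndn}; to recover those verbatim you would need to switch to the exact expansion plus mean-value-theorem treatment of the cross term for each level.
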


 \begin{proof}
Note that 
\begin{align}\label{eqn-1z1:dec:inner}
    \|\m{z}^{n}_{k+1}-\m{z}^{n,*}_{k+1}\|^2 =& \|\m{z}^{n}_{k+1}-\m{z}^{n,*}_{k}\|^2  + \|\m{z}^{n,*}_{k}-\m{z}^{n,*}_{k+1}\|^2+2 \left\langle \m{z}^{n}_{k+1} - \m{z}^{n,*}_{k}, \m{z}^{n,*}_{k} - \m{z}^{n,*}_{k+1} \right\rangle.
\end{align} 
\textbf{Case $\g{n=1}$}: Let   $\hat{\m{x}}_{k+1}=a\m{x}_{k} + (1-a)\m{x}_{k+1}, a \in [0,1]$. We have 
\begin{align}
\nonumber 
\left\langle \m{z}_{k+1}^1-{\m{z}^{1,*}_{k}}, {\m{z}^{1,*}_{k}}-{\m{z}^{1,*}_{k+1}}\right\rangle &= -\left\langle \m{z}_{k+1}^1-{\m{z}^{1,*}_{k}}, \nabla \m{z}^{1,\star}(\hat{\m{x}}_{k+1})(\m{x}_{k+1}-\m{x}_{k})\right\rangle\\
\label{eqn-1z2:dec:inner}
 &\leq  4 L_{\m{z},1}^2 \alpha \left\|\m{z}_{k+1}^1-\m{z}^{1,\star}_k\right\| ^2+\! \frac{1}{16}\alpha\|\m{h}_k\|^2 
\end{align}
Note that  $\m{z}_k^{1,*}=\m{z}^{1,*}(\m{x}_k)$. The last term in \eqref{eqn-1z1:dec:inner} can be bounded as
\begin{equation}\label{eqn-1z4:bound:dec:inner}
\begin{aligned}
\|\m{z}_k^{1,*}-\m{z}_{k+1}^{1,*}\|^2 &\leq L_{\m{z},1}^2 \|\m{x}_k-\m{x}_{k+1}\|^2\\
&\leq L_{\m{z},1}^2 \alpha^2 \left\|\mb{P} (\m{x}_k)+\m{h}_k-\mb{P}(\m{x}_{k})\right\|^2 \\
    &\leq  2 L_{\m{z},1}^2 \alpha^2 \| \mb{P} (\m{x}_k)\|^2 + 2 L_{\m{z},1}^2 \alpha^2 \|\m{h}_k-\mb{P}(\m{x}_{k})\|^2,
\end{aligned}
\end{equation}
where the second inequality uses Assumption~\ref{assum:lip:y*}.
%Substituting \eqref{eqn-1z2:dec:inner}--\eqref{eqn-1z4:bound:dec:inner} into \eqref{eqn-1z1:dec:inner} gives \eqref{eqn:err:fedin2}.
\\
\textbf{Case $\g{n\geq 2}$}: By the mean-value theorem, for some $\hat{\m{z}}_{k+1}^{n-1}=a \m{z}_k^{n-1} + (1-a) \m{z}_{k+1}^{n-1}, a \in [0,1]$, the last term in \eqref{eqn-1z1:dec:inner} can be rewritten as
\begin{subequations}
\begin{align}
\label{eqn:inner:drift:0}
    \langle \m{z}_k^{n,*}-\m{z}_{k+1}^n,\m{z}_{k+1}^{n,*}-\m{z}_k^{n,*} \rangle 
    &= \langle \m{z}_k^{n,*}-\m{z}_{k+1}^n,\nabla \m{z}^{n,*}(\hat{\m{z}}_{k+1}^{n-1})^\top (\m{z}_{k+1}^{n-1}-\m{z}_k^{n-1}) \rangle \\
    &= \left\langle \m{z}_k^{n,*}-\m{z}_{k+1}^n,\beta_{n-1}\nabla \m{z}^{n,*}(\hat{\m{z}}_{k+1}^{n-1})^\top \mb{S}^{n-1}(\m{z}_k^{n-2},\m{z}_k^{n-1}) \right \rangle \label{eqn:inner:drift:1}\\
    &+ \left\langle \m{z}_k^{n,*}-\m{z}_{k+1}^n,\beta_{n-1}\nabla \m{z}^{n,*}(\hat{\m{z}}_{k+1}^{n-1})^\top \left( \m{q}_k^{n-1}- \mb{S}^{n-1}(\m{z}_k^{n-2},\m{z}_k^{n-1})\right)\right\rangle.
    \label{eqn:inner:drift:2}
\end{align}
\end{subequations}

Now, from Assumption~\ref{assum:lip:y*} and Assumption~\ref{assum:lipmaps}, we get
\begin{align}\label{eq:driftI1 n>=2}
     \eqref{eqn:inner:drift:1}
    %\langle \m{z}_k^{n,*}-\m{z}_{k+1}^n,\beta_{n-1}\nabla \m{z}^{n,*}(\hat{\m{z}}_{k+1}^{n-1})^\top \m{q}^{n-1}(\m{z}_k^{n-2},\m{z}_k^{n-1}) \rangle \nonumber\\
    &\leq L_{\m{z},n}\beta_{n-1}\| \m{z}_k^{n,*}-\m{z}_{k+1}^n\| \| \mb{S}^{n-1}(\m{z}_k^{n-2},\m{z}_k^{n-1})\| \nonumber\\
    &\leq L_{\m{z},n}L_{\m{q},n-1}\beta_{n-1}\| \m{z}_k^{n,*}-\m{z}_{k+1}^n\|\|\m{z}_k^{n-1}-\m{z}_k^{n-1,*}\| \nonumber\\
    &\leq \frac{2 L_{\m{z},n}^2 L_{\m{q},n-1}^2}{\lambda_{n-1}}\beta_{n-1}\| \m{z}_k^{n,*}-\m{z}_{k+1}^n\|^2 + \frac{\lambda_{n-1}}{8} \beta_{n-1}\|\m{z}_k^{n-1}-\m{z}_k^{n-1,*}\|^2
\\
\eqref{eqn:inner:drift:2}
    &\leq L_{\m{z},n}\beta_{n-1}\| \m{z}_k^{n,*}-\m{z}_{k+1}^n\|   \left\| \m{q}_k^n- \mb{S}^{n-1}(\m{z}_k^{n-2},\m{z}_k^{n-1})\right\|^2 \nonumber\\
  %  &\leq L_{\m{z},n}L_{\m{q},n-1}\beta_{n-1}\| \m{z}_k^{n,*}-\m{z}_{k+1}^n\|\|\m{z}_k^{n-1}-\m{z}_k^{n-1,*}\| \nonumber\\
    &\leq \frac{2 L_{\m{z},n}^2 L_{\m{q},n-1}^2}{\lambda_{n-1}}\beta_{n-1}\| \m{z}_k^{n,*}-\m{z}_{k+1}^n\|^2 + \frac{\lambda_{n-1}}{8} \beta_{n-1} \left\| \m{q}_k^{n-1}- \mb{S}^{n-1}(\m{z}_k^{n-2},\m{z}_k^{n-1})\right\|^2.
\end{align}
Hence, \eqref{eqn:inner:drift:0} can be bounded as  
\begin{align}\label{eqn-1z4:bound:dec:inner1}
\left\|\m{z}_{k+1}^{n,*}-\m{z}_{k+1}^n\right\|^2 &\leq \left(\frac{ 8 L_{\m{z},n}^2 L_{\m{q},n-1}^2}{\lambda_{n-1}}\beta_{n-1}\right)\| \m{z}_k^{n,*}-\m{z}_{k+1}^n\|^2 \\
 \nonumber
      &+ \frac{\lambda_{n-1}}{4} \beta_{n-1}\|\m{z}_k^{n-1}-\m{z}_k^{n-1,*}\|^2 + \frac{\lambda_{n-1}}{4} \beta_{n-1} \left\| \m{q}_k^{n-1}- \mb{S}^{n-1}(\m{z}_k^{n-2},\m{z}_k^{n-1})\right\|^2. 
\end{align}
Further, the second term in \eqref{eqn-1z1:dec:inner} can be bounded as
\begin{equation}\label{eqn-1z4:bound:dec:inner2}
\begin{aligned}
\|\m{z}_k^{n,*}-\m{z}_{k+1}^{n,*}\|^2 &\leq L_{\m{z},n}^2 \|\m{z}_k^{n-1}-\m{z}_{k+1}^{n-1}\|^2\\
%&\leq L_{\m{z},n-1}^2 \beta^2 \left\| \m{q}_k^n- \m{q}^{n-1}(\m{z}_k^{n-2},\m{z}_k^{n-1})\right\|^2 \\
    &\leq  2  L_{\m{z},n-1}^2 \beta^2_{n-1}  \| \mb{S}^{n-1}(\m{z}_k^{n-2},\m{z}_k^{n-1})\|^2\\
    &+ 2  L_{\m{z},n-1}^2 \beta^2_{n-1} \|\m{q}_k^{n-1}- \mb{S}^{n-1}(\m{z}_k^{n-2},\m{z}_k^{n-1})\|^2,\\
        &\leq  2  L_{\m{z},n-1}^2  L_{\m{q},n-1}^2 \beta^2_{n-1}  \|  \m{z}_k^{n-1,*}-\m{z}_{k+1}^{n-1}\|^2\\
    &+ 2  L_{\m{z},n-1}^2 \beta^2_{n-1} \|\m{q}_k^{n-1}- \mb{S}^{n-1}(\m{z}_k^{n-2},\m{z}_k^{n-1})\|^2,
\end{aligned}
\end{equation}
where the second inequality uses Assumption~\ref{assum:lip:y*}, and the last inequality uses Assumption~\ref{assum:lipmaps} and the fact that $\mb{S}^{n-1}(\m{z}_k^{n-2},\m{z}_k^{n-1,\star})=0$.

Substituting \eqref{eqn-1z4:bound:dec:inner2}  and \eqref{eqn-1z4:bound:dec:inner1} into \eqref{eqn-1z1:dec:inner} gives %\eqref{eqn:err:fedin2}
\begin{align}\label{eqn:err:fedin2}
\nonumber
\left\|\m{z}_{k+1}^{1}\!\!-\m{z}^{1,*}_{k+1}\right\|^2 &\leq 8L_{\m{z},1}^2 \alpha \left\|\m{z}_{k+1}^1-\m{z}^{1,\star}_k\right\| ^2 +\frac{\alpha}{8} \|\m{h}_k\|^2\\
&+2L_{\m{z},1}^2 \alpha^2 \|\nabla f (\m{x}_{k})\|^2+ 2L_{\m{z},1}^2 \alpha^2 \left\|\m{h}_k-\mb{P} (\m{x}_{k})\right\|^2,    
\\
\nonumber
% \nonumber
\left\|\m{z}_{k+1}^{n,*}-\m{z}_{k+1}^n\right\|^2 &\leq 
\left( 2  L_{\m{z},n-1}^2  L_{\m{q},n-1}^2 \beta_{n-1}^2 +\frac{\lambda_{n-1}}{4} \beta_{n-1}\right)\|\m{z}_k^{n-1}-\m{z}_k^{n-1,*}\|^2 \\
\nonumber
%&+ \left\| \m{q}^{n-1}(\m{z}_k^{n-2},\m{z}_k^{n-1})\right\|^2\\
%\nonumber
&+\left(1+ { 8L_{\m{z},n}^2 L_{\m{q},n-1}^2} \lambda_{n-1}^{-1} \beta_{n-1}\right)  \| \m{z}_k^{n,*}-\m{z}_{k+1}^n\|^2
\\
&+ \left( 2  L_{\m{z},n-1}^2 \beta_{n-1} + \frac{\lambda_{n-1}}{4}\right)\beta_{n-1}  \left\| \m{q}_k^{n-1}- \mb{S}^{n-1}(\m{z}_k^{n-2},\m{z}_k^{n-1})\right\|^2.
\end{align}
Subtracting the above inequalities from each other and simplifying, for each $N\geq n \geq 2$, we obtain:
\begin{subequations}
\begin{equation}\label{eqn:thm:deczn}
\begin{aligned}
& \qquad \left\|\m{z}_{k+1}^{n,*}-\m{z}_{k+1}^n\right\|^2-\left\|\m{z}_{k}^{n,*}-\m{z}_{k}^n\right\|^2 
\\
&\leq \left( 2  L_{\m{z},n-1}^2  L_{\m{q},n-1}^2 \beta_{n-1}^2 +\frac{\lambda_{n-1}}{4} \beta_{n-1}\right)\|\m{z}_k^{n-1}-\m{z}_k^{n-1,*}\|^2\\
  %&+2  L_{\m{z},n-1}^2 \beta^2_{n-1}  \left\| \m{q}^{n-1}(\m{z}_k^{n-2},\m{z}_k^{n-1})\right\|^2\\
&+\left(\left(1+ { 8L_{\m{z},n}^2 L_{\m{q},n-1}^2} \lambda_{n-1}^{-1} \beta_{n-1}\right)\left(1-\lambda_{n} \beta_n\right)-1\right) \left\|\m{z}_{k}^n-\m{z}_{k}^{n,*}\right\|^2\\
&+ \left(1+ { 8L_{\m{z},n}^2 L_{\m{q},n-1}^2} \lambda_{n-1}^{-1} \beta_{n-1}\right)
\left( \lambda_{n}^{-1}+ 2\beta_n\right) \beta_n \left\| \m{q}^{n}_{k}-\mb{S}^n(\m{z}_{k}^{n-1},\m{z}_{k}^n)\right\|^2
\\
&+ \left( 2  L_{\m{z},n-1}^2 \beta_{n-1} + \frac{\lambda_{n-1}}{4}\right)\beta_{n-1}  \left\| \m{q}_k^{n-1}- \mb{S}^{n-1}(\m{z}_k^{n-2},\m{z}_k^{n-1})\right\|^2.
\end{aligned}
\end{equation}
Similarly, for the case $n=1$, we get:
\begin{equation}\label{eqn:thm:decz1}
\begin{aligned}
& \qquad \left\|\m{z}_{k+1}^{1}\!\!-\m{z}^{1,*}_{k+1}\right\|^2 - \left\|\m{z}_{k}^1-\m{z}_{k}^{1,*}\right\|^2\\
&\leq \left(8L_{\m{z},1} \alpha\left(1-\lambda_{1} \beta_1\right) -1\right)\left\|\m{z}_{k}^1-\m{z}_{k}^{1,*}\right\|^2\\
&+
8L_{\m{z},1} \alpha\beta_1 \left( \lambda_{1}^{-1}+ 2\beta_1\right)\left\| \m{q}^{1}_{k}-\mb{S}^1(\m{z}_{k}^{0},\m{z}_{k}^1)\right\|^2 \\
&+\frac{\alpha}{8} \|\m{h}_k\|^2+2L_{\m{z},1}^2 \alpha^2 \|\mb{P} (\m{x}_{k})\|^2+ 2L_{\m{z},1}^2 \alpha^2 \left\|\m{h}_k-\mb{P} (\m{x}_{k})\right\|^2.
\end{aligned}
\end{equation}
\end{subequations}
To simplify our next proofs, we define:
\begin{equation}\label{eqn:cndn}
    \begin{split}
         C^1(\alpha, \beta)&:=  \left(8L_{\m{z},1} \alpha\left(1-\lambda_{1} \beta_1\right) -1\right)+ \left( 2  L_{\m{z},1}^2  L_{\m{q},1}^2 \beta_{1}^2 +\frac{\lambda_{1}}{4} \beta_{1}\right) %\leq - \lambda_1  \alpha  
 \\
C^n(\alpha, \beta)&:= \left(\left(1+ { 8L_{\m{z},n}^2 L_{\m{q},n-1}^2} \lambda_{n-1}^{-1} \beta_{n-1}\right)\left(1-\lambda_{n} \beta_n\right)-1\right) \\
&+ \left( 2  L_{\m{z},n-1}^2  L_{\m{q},n-1}^2 \beta_{n-1}^2 +\frac{\lambda_{n-1}}{4} \beta_{n-1}\right) , ~~\textnormal{for}~~n=2, \ldots, N-1,\\
C^N(\alpha, \beta)&:= \left(\left(1+ { 8L_{\m{z},N}^2 L_{\m{q},N-1}^2} \lambda_{N-1}^{-1} \beta_{N-1}\right)\left(1-\lambda_N \beta_N\right)-1\right).\\
 D^1(\alpha, \beta)&:= 8L_{\m{z},1} \alpha\beta_1 \left( \lambda_{1}^{-1}+ 2\beta_1\right) +  \left( 2  L_{\m{z},1}^2 \beta_{1} + \frac{\lambda_{1}}{4}\right)\beta_{1}, ~~\textnormal{for}~~n=2, \ldots, N,\\
 D^n(\alpha, \beta)&:= \left(1+ { 8L_{\m{z},n}^2 L_{\m{q},n-1}^2} \lambda_{n-1}^{-1} \beta_{n-1}\right)
\left( \lambda_{n}^{-1}+ 2\beta_n\right) \beta_n \\
&+ \left( 2  L_{\m{z},n-1}^2 \beta_{n-1} + \frac{\lambda_{n-1}}{4}\right)\beta_{n-1} ,~~\textnormal{for}~~~n=2, \ldots, N-1,\\
 D^N(\alpha, \beta)&:= \left(1+ { 8L_{\m{z},N}^2 L_{\m{q},N-1}^2} \lambda_{N-1}^{-1} \beta_{N-1}\right)
\left( \lambda_{N}^{-1}+ 2\beta_N\right) \beta_N.
    \end{split}
\end{equation}
Combining \eqref{eqn:thm:deczn} with \eqref{eqn:thm:decz1} and using our choice of $\alpha$ gives \eqref{eqn:thm:deczn00}. 
\end{proof}

\begin{lemma}\label{lem:drift:fedmsa}
For each $m \in [M]$ and $ k \in [K]$,  Algorithm~\ref{alg:fedmsa} guarantees:
\begin{subequations}
\begin{equation}
\begin{aligned}\label{eqn:lem:drift:fedmsav1}
\left\|\m{x}^{m}_{r+1,k}-\m{x}_r\right\|^2 & \leq   2 e \alpha^2 (1+K) \sum\limits_{k=1}^{K-1} \left\| \mb{P}(\m{x}_{r+1,k}^{m})\right\|^2\\
&+2 e \alpha^2 (1+K) \sum\limits_{k=1}^{K-1}  \left\|\mb{P}(\m{x}_{r+1,k}^{m})-\m{h}_{r,k}^{m} \right\|^2,
\end{aligned} 
\end{equation}
\begin{equation}
\begin{aligned}\label{eqn:lem:drift:fedmsav2}
\left\|\m{z}^{m,n}_{r+1,k}-\m{z}_r^n\right\|^2 & \leq   2 e \beta^2_n (1+K) \sum\limits_{k=1}^{K-1} \left\| \mb{S}(\m{x}_{r+1,k}^{m})\right\|^2\\
&+2 e \beta^2_n (1+K) \sum\limits_{k=1}^{K-1}  \left\| \mb{S}^n(\m{z}_{r+1,k}^{m,n-1},\m{z}_{r+1,k}^{m,n})-\m{q}^{m,n}_{r+1,k}\right\|^2,
\end{aligned} 
\end{equation}
\end{subequations}
where $e:=\exp(1)$.
\end{lemma}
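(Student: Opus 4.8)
The plan is to bound the client drift---the deviation of the local iterates $\m{x}^{m}_{r+1,k}$ and $\m{z}^{m,n}_{r+1,k}$ from the round-$r$ server models $\m{x}_r$ and $\m{z}_r^n$---by unrolling the local recursions and then applying a discrete Gr\"onwall-type argument. Both inequalities \eqref{eqn:lem:drift:fedmsav1} and \eqref{eqn:lem:drift:fedmsav2} have the same structure, so I would prove the $\m{x}$-version in detail and note that the $\m{z}^n$-version follows by an identical argument with $\alpha$ replaced by $\beta_n$, $\m{h}$ by $\m{q}^{m,n}$, and $\mb{P}$ by $\mb{S}^n$.

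First I would write $\m{x}^{m}_{r+1,k} - \m{x}_r = -\alpha\sum_{j=1}^{k-1}\m{h}^{m}_{r,j}$ by telescoping Line~14 of Algorithm~\ref{alg:fedmsa} (recalling $\m{x}^{m}_{r+1,1}=\m{x}_r$). Taking norms, applying the Cauchy--Schwarz / power-mean inequality $\|\sum_{j=1}^{k-1} a_j\|^2 \le (k-1)\sum_{j=1}^{k-1}\|a_j\|^2 \le K\sum_{j=1}^{K-1}\|a_j\|^2$, and then splitting each term via $\|\m{h}^m_{r,j}\|^2 \le 2\|\mb{P}(\m{x}^m_{r+1,j})\|^2 + 2\|\mb{P}(\m{x}^m_{r+1,j}) - \m{h}^m_{r,j}\|^2$, yields a bound of the form
\begin{align*}
\left\|\m{x}^{m}_{r+1,k}-\m{x}_r\right\|^2 \le 2\alpha^2 K \sum_{j=1}^{K-1}\left(\left\|\mb{P}(\m{x}^m_{r+1,j})\right\|^2 + \left\|\mb{P}(\m{x}^m_{r+1,j}) - \m{h}^m_{r,j}\right\|^2\right).
\end{align*}
This is already almost the claim, modulo the factors $e$ and $(1+K)$ versus $K$. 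The appearance of $e=\exp(1)$ suggests that the intended route is slightly different: rather than telescoping to $\m{x}_r$ directly, one bounds $\|\m{x}^m_{r+1,k+1}-\m{x}_r\|^2$ recursively in terms of $\|\m{x}^m_{r+1,k}-\m{x}_r\|^2$ using Young's inequality with a parameter like $1/K$, producing a recursion $u_{k+1} \le (1+1/K)u_k + (\text{step term})_k$; iterating this and using $(1+1/K)^K \le e$ absorbs the geometric factor into the constant $e$ and naturally gives the $(1+K)$ prefactor. I would present this second form, since it matches the stated constants, and invoke the bound $(1+1/K)^{k} \le (1+1/K)^{K} \le e$ for all $k\le K$.

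The only genuine subtlety---and hence the main obstacle---is getting the bookkeeping of the cross term right when passing from the one-step recursion to the summed form: one must be careful that the $\|\mb{P}(\m{x}^m_{r+1,k})\|^2$ and $\|\mb{P}(\m{x}^m_{r+1,k})-\m{h}^m_{r,k}\|^2$ terms that arise at each inner step $k$ are collected correctly over $k=1,\ldots,K-1$ and that the Young's-inequality constant used to peel off $\|\m{x}^m_{r+1,k}-\m{x}_r\|^2$ is compatible with the $(1+1/K)$ contraction used in the Gr\"onwall step. Everything else is a mechanical application of Cauchy--Schwarz, Young's inequality, and the triangle-inequality split $\m{h}^m_{r,k} = \mb{P}(\m{x}^m_{r+1,k}) + (\m{h}^m_{r,k} - \mb{P}(\m{x}^m_{r+1,k}))$; no structural assumption (Lipschitzness, monotonicity) is needed here, which is consistent with the lemma statement requiring no hypotheses. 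I would close by repeating the argument verbatim for each sequence $\m{z}^{m,n}$ using Line~15 of the algorithm to obtain \eqref{eqn:lem:drift:fedmsav2}.
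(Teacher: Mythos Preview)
Your proposal is correct and matches the paper's approach: the paper proves \eqref{eqn:lem:drift:fedmsav1} by exactly the one-step Young recursion you describe, namely $\|\m{x}^{m}_{r+1,k}-\m{x}_r\|^2 \le (1+1/K)\|\m{x}^{m}_{r+1,k-1}-\m{x}_r\|^2 + (K+1)\alpha^2\|\m{h}^m_{r,k-1}\|^2$, then splits $\|\m{h}^m_{r,k-1}\|^2$ via the triangle inequality and iterates using $(1+1/K)^k\le e$. Your observation that the direct telescoping route already yields the slightly sharper constant $2\alpha^2 K$ (without the factor $e(1+K)/K$) is also correct, though the paper does not take that shortcut.
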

\begin{proof}
From the local updates in  Algorithm~\ref{alg:fedmsa}, we have that $\forall m\in [M]$, $k \in [K]$
\begin{equation}
\label{eqn2:lem:drift:fedmsav2}
\begin{aligned}
 \left\|\m{x}^{m}_{r+1,k}-\m{x}_r\right\|^2 &\leq  \left(1+\frac{1}{K}\right)\|\m{x}_{r+1,k-1}^m-\m{x}_r\|^2+ (K+1)\alpha^2 \left\|\m{h}_{r,k-1}^m\right\|^2\\
        & \leq  \left(1+\frac{1}{K}\right)\left\|\m{x}_{r+1,k-1}^m-\m{x}_r\right\|^2 +  2\alpha^2 (1+K)  \left\|\mb{P}(\m{x}_{r+1,k-1}^m ) \right\|^2
        %+ 3\alpha^2 (1+K)  L_{\m{h},\m{z}}^2 N \sum_{n=1}^N L^2_z(n)\|\m{z}_{r+1,k-1}^{m,n}-\m{z}_{r}^{n,*}\|^2
        \\
%        &+ 2\alpha^2 (1+K)  \left\| \m{h}(\m{u}_{r+1,k-1}^m) - \mb{P}(\m{x}_{r+1,k-1}^m) \right\|^2
% \\
%                % &+  2\alpha^2 (1+K)  \left\|\m{h}(\m{x}_{r+1,k-1}^m, \m{Z}_{r+1,k-1}^m ) \right\|^2
%         %+ 3\alpha^2 (1+K)  L_{\m{h},\m{z}}^2 N \sum_{n=1}^N L^2_z(n)\|\m{z}_{r+1,k-1}^{m,n}-\m{z}_{r}^{n,*}\|^2
%         \\
       &+ 2\alpha^2 (1+K)  \left\| \m{h}^m_{r,k-1}-\mb{P}(\m{x}_{r+1,k-1}^m ) \right\|^2.
    \end{aligned}
\end{equation}
%where the last inequality uses \eqref{eqn3:lem:dec}.
%
%

Now, iterating equation \eqref{eqn2:lem:drift:fedmsav2} and using $  \m{x}^{m}_{r+1,1}=\m{x}^{m}_{r+1,0}=\m{x}_r ~~\forall m\in [M]$, we obtain 
\begin{equation}
\label{eqn3:lem:drift:fedmsa}
\begin{aligned}
\eqref{eqn2:lem:drift:fedmsav2}
% leq  \left(1+\frac{1}{K}\right)\|\m{x}_{r+1,k-1}^m-\m{x}_r\|^2+ (K+1)\alpha^2 \left\|\m{v}_{r,k-1}^m\right\|^2\\
%&\leq  \left(1+\frac{1}{K}\right)\|\m{x}_{r+1,k}^m-\m{x}_r\|^2\\
%&+ 3 \alpha^2 (1+K) \left\|\m{h} (\m{x}_{r+1,k}^m, \m{Z}_{r+1,k}^m )- \m{h}  (\m{x}_{r+1,k}^m) \right\|^2\\ 
      %  & \leq  %3\alpha^2 (1+K) \sum\limits_{j=2}^{k} \left(1+\frac{1}{K+1}\right)^j  L_{\m{h},\m{z}}^2 N \sum_{n=1}^N L^2_z(n)\|\m{z}_{r+1,j-1}^{m,n}-\m{z}^{n,*}\|^2
       % \\
       &\leq  2\alpha^2 (1+K) \sum\limits_{j=2}^{k} \left(1+\frac{1}{K+1}\right)^j \left[ \left\| \m{h}^m_{r,j-1}- \mb{P} (\m{x}_{r+1,j-1}^m) \right\|^2 + \left\| \mb{P}(\m{x}_{r+1,j-1}^m)  
       \right\|^2 \right]\\
       & \leq 2 e \alpha^2 (1+K) \sum\limits_{k=2}^{K} %\left(1+\frac{1}{K+1}\right)^j 
       \left[ \left\| \m{h}^m_{r,k-1}- \mb{P} (\m{x}_{r+1,k-1}^m) \right\|^2 + \left\| \mb{P}(\m{x}_{r+1,k-1}^m)  
       \right\|^2 \right]\\
        & \leq 2 e \alpha^2 (1+K) \sum\limits_{k=1}^{K-1} %\left(1+\frac{1}{K+1}\right)^j 
        \left[ \left\| \m{h}^m_{r,k}- \mb{P} (\m{x}_{r+1,k}^m) \right\|^2 + \left\| \mb{P}(\m{x}_{r+1,k}^m)  
       \right\|^2 \right].
\end{aligned}
\end{equation}
\end{proof}
The following lemma bounds the deviation of $\m{h}^m_{r,k}$
from $\mb{P} (\m{x}_{r+1,k}^m )$.

\begin{lemma}\label{lem:drift:fedmsa2}
Suppose Assumptions~ \ref{assum:lip:y*}--\ref{assum:heter:h} hold. Further, assume $K \geq 1$ and $\alpha \leq \min ( CK \nu_0, C'\sqrt{K}\bar{L}_p)$ and $\beta_n \leq \min (DK \nu_n, D'\sqrt{K}\bar{L}_{s,n})$ for all $n \in [N]$ and some constants $C,C',D,D'$. Then, for each $m \in [M]$ and $ k \in [K]$,  Algorithm~\ref{alg:fedmsa} guarantees:
\begin{equation}\label{eqn7:lem:drift:fedmsa}
\begin{aligned}
\frac{1}{K}\sum_{k=1}^K\mb{E}\left\| \m{q}_{r,k}^{m,n} - \mb{S}^{n}(\m{z}_{r+1,k}^{m,n-1},\m{z}_{r+1,k}^{m,n})\right\|^2&\leq  4 e \mb{E}\left\|\m{q}_{r}^{n} - \mb{S}^{n}(\m{z}_{r}^{n-1},\m{z}_{r}^{n})\right\|^2\\
&  +\frac{1}{K} \sum_{k=1}^K \mb{E}  \left\|\m{z}_{r+1,k}^{m,n} - \m{z}_{r+1,k}^{m,n,*} \right\|^2,
\\
\frac{1}{K}\sum_{k=1}^K\mb{E}\left\|\m{h}^m_{r,k}- \mb{P} (\m{x}_{r+1,k}^m)\right\|^2 &\leq  4 e \mb{E}\left\|\m{h}_{r}- \mb{P} (\m{x}_{r})\right\|^2+\frac{1}{K} \sum_{k=1}^K \mb{E} \left\| \mb{P}(\m{x}_{r+1,k}^m)\right\|^2\\
&+ 2\sum_{k=1}^K L_{\m{h},\m{z}} \sum_{n=1}^N L_\m{z} (n)\|\m{z}_{r+1,k}^{m,n} - \m{z}_{r+1,k}^{m,n,*}\|.
\end{aligned}
\end{equation}
\end{lemma}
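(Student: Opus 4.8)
The plan is to exploit the recursive SARAH/SPIDER structure of the local estimators \eqref{eqn:moment:local:maps}. Fix the active client $m=\tilde m$, a round $r$, and a level $n$, abbreviate the two consecutive argument pairs by $\cdot_k := (\m{z}_{r+1,k}^{m,n-1},\m{z}_{r+1,k}^{m,n})$, and introduce the error sequences $\m{e}_k^n := \m{q}_{r,k}^{m,n} - \mb{S}^n(\cdot_k)$ and $\m{e}_k := \m{h}_{r,k}^m - \mb{P}(\m{x}_{r+1,k}^m)$ for $k=0,\ldots,K$. The recursion in \eqref{eqn:moment:local:maps} gives $\m{e}_k^n = \m{e}_{k-1}^n + \big(\m{s}^{m,n}(\cdot_k;\zeta_{r,k}^{m,n})-\m{s}^{m,n}(\cdot_{k-1};\zeta_{r,k}^{m,n})\big) - \big(\mb{S}^n(\cdot_k)-\mb{S}^n(\cdot_{k-1})\big)$, where the bracketed increment has zero conditional mean; hence, by $\mb{E}\|X-\mb{E}X\|^2\le\mb{E}\|X\|^2$ and the mean-Lipschitz Assumption~\ref{assum:lipmaps:mean}, $\mb{E}\|\m{e}_k^n\|^2 \le \mb{E}\|\m{e}_{k-1}^n\|^2 + \bar{L}_{s,n}^2\,\mb{E}\big(\|\m{z}_{r+1,k}^{m,n-1}-\m{z}_{r+1,k-1}^{m,n-1}\|^2 + \|\m{z}_{r+1,k}^{m,n}-\m{z}_{r+1,k-1}^{m,n}\|^2\big)$, and analogously for $\m{e}_k$ with $\bar{L}_z,\bar{L}_p$ in place of $\bar{L}_{s,n}$ and the $\m{x}$-displacement adjoined.

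Next I would turn the one-step displacements into the quantities appearing on the right-hand side. By Lines~14--15, $\m{z}_{r+1,k}^{m,n}-\m{z}_{r+1,k-1}^{m,n}=-\beta_n\m{q}_{r,k-1}^{m,n}$ and $\m{x}_{r+1,k}^m-\m{x}_{r+1,k-1}^m=-\alpha\m{h}_{r,k-1}^m$, and $\|\m{q}_{r,k-1}^{m,n}\|^2\le 2\|\m{e}_{k-1}^n\|^2 + 2\|\mb{S}^n(\cdot_{k-1})\|^2 \le 2\|\m{e}_{k-1}^n\|^2 + 2L_{s,n}^2\|\m{z}_{r+1,k-1}^{m,n}-\m{z}_{r+1,k-1}^{m,n,*}\|^2$, using $\mb{S}^n(\m{z}^{n-1},\m{z}^{n,*}(\m{z}^{n-1}))=\m{0}$ together with Assumption~\ref{assum:lipmaps}; the analogous bound for $\|\m{h}_{r,k-1}^m\|^2$ produces a $\|\mb{P}(\m{x}_{r+1,k-1}^m)\|^2$ term plus, after inserting the intermediate value $\mb{P}^m(\m{x}_{r+1,k-1}^m,\m{Z}_{r+1,k-1}^m)$ and comparing it to $\mb{P}(\m{x}_{r+1,k-1}^m)=\mb{P}^m(\m{x}_{r+1,k-1}^m,\m{z}^{1,*},\ldots)$, the coupling term $\sum_n L_{\m{h},\m{z}}L_{\m{z}}(n)\|\m{z}_{r+1,k-1}^{m,n}-\m{z}_{r+1,k-1}^{m,n,*}\|$. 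I would also need the one-step displacement of $\m{z}^{n-1}$, which is $-\beta_{n-1}\m{q}_{r,k-1}^{m,n-1}$ and is bounded the same way at level $n-1$.

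Finally, unroll the recursion down to $k=0$. The initialization in Line~10 gives $\m{e}_0^n=\m{q}_r^n-\mb{S}^n(\m{z}_r^{n-1},\m{z}_r^n)$ and $\m{e}_0=\m{h}_r-\mb{P}(\m{x}_r)$, precisely the leading terms in \eqref{eqn7:lem:drift:fedmsa}. Summing over $k$, dividing by $K$, and using $\tfrac1K\sum_{k=1}^K\sum_{j=1}^k a_{j-1}\le\sum_{j=0}^{K-1}a_j$ converts the double sums into single averages; the hypotheses on $\alpha$ and $\beta_n$ guarantee that the scalar coefficients $\bar{L}_{s,n}^2\beta_n^2K$, $\bar{L}_z^2\beta_n^2K$, $\bar{L}_p^2\alpha^2K$ (and the cross-level analogues) are bounded by a universal constant small enough that a discrete Grönwall estimate $\mb{E}\|\m{e}_k^n\|^2\le(1+c/K)^k(\cdots)\le e^{c}(\cdots)$ applies, and the bookkeeping of the $\|\m{q}\|^2\le 2(\cdots)$ splits turns $e^{c}$ into the stated prefactor $4e$. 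The main obstacle is the cross-sequence cascade: the error at level $n$ is fed by the displacement of $\m{z}^{n-1}$, hence by $\m{q}^{n-1}$ and thence $\m{e}^{n-1}$, so the constants $C,C',D,D'$ must be chosen \emph{uniformly} over all $N$ levels so that each level-$(n-1)$ contribution is dominated by the $\|\m{z}^{n-1}-\m{z}^{n-1,*}\|$ term already present on the right-hand side and no new, uncontrolled quantity is generated; keeping every constant explicit and polynomial in $N$ through this cascade is the delicate part of the argument.
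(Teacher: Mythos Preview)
Your recursion for the error $\m{e}_k^n=\m{q}_{r,k}^{m,n}-\mb{S}^n(\cdot_k)$ hides a genuine mistake: you assert that the increment
\[
\big(\m{s}^{m,n}(\cdot_k;\zeta)-\m{s}^{m,n}(\cdot_{k-1};\zeta)\big)-\big(\mb{S}^n(\cdot_k)-\mb{S}^n(\cdot_{k-1})\big)
\]
has zero conditional mean. It does not: $\mb{E}_\zeta[\m{s}^{m,n}(\cdot;\zeta)]=\mb{S}^{m,n}(\cdot)$, the \emph{local} mapping, whereas $\m{e}_k^n$ is measured against the \emph{global} $\mb{S}^n$. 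The conditional expectation of the increment is therefore $(\mb{S}^{m,n}-\mb{S}^n)(\cdot_k)-(\mb{S}^{m,n}-\mb{S}^n)(\cdot_{k-1})$, which is nonzero in the heterogeneous setting and is exactly the client-drift contribution the lemma is supposed to control. Without isolating it you cannot apply $\mb{E}\|X-\mb{E}X\|^2\le\mb{E}\|X\|^2$, and your subsequent bound via Assumption~\ref{assum:lipmaps:mean} alone is invalid.

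The paper's proof does not try to absorb this into a martingale term. It splits the one-step error increment into three pieces: a genuine mean-zero part $\m{b}^{m,n}_{r,k}$ (the stochastic increment minus its \emph{local} expectation), a heterogeneity part $\m{c}^{m,n}_{r,k}=(\mb{S}^{m,n}-\mb{S}^n)(\cdot_k)-(\mb{S}^{m,n}-\mb{S}^n)(\cdot_{k-1})$, and the lagged error $\m{d}^{m,n}_{r,k-1}$. The $\m{c}$-term is bounded via Assumption~\ref{assum:heter:h} (Jacobian heterogeneity $\le\tau_n$) by $\tau_n^2\beta_n^2\|\m{q}_{r,k-1}^{m,n}\|^2$, and because it is \emph{not} mean-zero it must be paired with $\m{d}_{r,k-1}$ through Young's inequality with parameter $K$, producing the recursion coefficient $(1+1/K)$ and, after $K$ unrolls, the factor $e$. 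This is precisely why the step-size condition carries a $\tau_n K$ (your $\nu_n$) scaling in addition to the $\sqrt{K}\bar L_{s,n}$ variance scaling; your plan never invokes Assumption~\ref{assum:heter:h} and so cannot explain that dependence. The same decomposition, with an extra piece $\m{a}^m_{r,k}$ accounting for $\m{Z}_{r+1,k}$ versus $\m{Z}^*$, handles the $\m{h}$-case. Finally, the paper treats each level $n$ separately at this stage (the coupling across levels is absorbed later via the $\m{a}$-term and Lemma~\ref{lem:inn:diff}), so the cross-level Gr\"onwall cascade you flag as the ``main obstacle'' is not actually needed here.
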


\begin{proof}

Note that $\mb{P} (\m{x})=\mb{P} (\m{x}, \m{Z}^\star)$  and $\m{p} (\m{x})=\m{p} (\m{x}, \m{Z}^\star)$.
Let
\begin{equation}\label{eqn5:lem:drift:fedmsa}
\begin{aligned}
\m{a}_{r,k}^m &= \m{p}^m(\m{x}_{r+1,k}^m, \m{Z}_{r+1,k}^m; \xi^m_r) - \m{p}^m(\m{x}_{r+1,k}^m;\xi_{r+1,k}^m)\\
&-\m{p}^m  (\m{x}_{r+1,k-1}^m, \m{Z}_{r+1,k-1}^m;\xi^m_{r,k} ) + \m{p}^m(\m{x}_{r+1,k-1}^m,\xi_{r+1,k}^m),
\\
 \m{b}_{r,k}^m &= \m{p}^m(\m{x}_{r+1,k}^m,\xi_{r+1,k}^m) -\m{p}^m(\m{x}_{r+1,k-1}^m,\xi_{r+1,k}^m)- \mb{P}^m(\m{x}_{r+1,k}^m) +\mb{P}^m (\m{x}_{r+1,k-1}^m),\\
\m{c}^{m}_{r,k}&= \mb{P}^m(\m{x}_{r+1,k}^m) -  \mb{P}(\m{x}_{r+1,k}^m)- \mb{P}^m (\m{x}_{r+1,k-1}^m) + \mb{P}(\m{x}_{r+1,k-1}^m), \\
\m{d}^{m}_{r,k}&=\m{h}^m_{r,k} -\mb{P} (\m{x}_{r+1,k}^m).
\end{aligned}
\end{equation}

%Recall the definition of  
One will notice that 
\begin{equation}\label{eqn6:lem:drift:fedmsa:p}
\begin{aligned}
&\mb{E}\left\|\m{h}^m_{r,k}- \mb{P} (\m{x}_{r+1,k}^m)\right\|^2\\
% &=\m{h}^m_{r,k-1} - \nabla f (\m{x}_{r+1,k-1}^m, \m{Z}_{r+1,k-1}^m )\\
% &+ \m{h}^m(\m{x}^{m}_{r+1,k}, \m{Z}_{r+1,k}^m)- \nabla f^m(\m{x}^{m}_{r+1,k}, \m{Z}_{r+1,k}^m) +\m{h}^m(\m{x}^{m}_{r+1,k-1}, \m{Z}_{r+1,k-1}^m)-  \nabla f^m(\m{x}^{m}_{r+1,k-}, \m{Z}_{r+1,k-1}^m)  \\
% & + \nabla f^m(\m{x}_{r+1,k}^m, \m{Z}_{r+1,k}^m) - \nabla f  (\m{x}_{r+1,k}^m, \m{Z}_{r+1,k}^m ) -\nabla f^m  (\m{x}_{r+1,k-1}^m,  \m{Z}_{r+1,k-1}^m) + \nabla f  (\m{x}_{r+1,k-1}^m,\m{Z}_{r+1,k-1}^m)\\
&= 2\mb{E}\left\|\m{a}^{m}_{r,k} \right\|^2+  2  \mb{E}\left\|\m{b}^{m}_{r,k}  + \m{c}^{m}_{r,k} + \m{d}^{m}_{r,k-1}  \right\|^2 \\
& \leq 2\left( \mb{E}\left\|\m{a}^{m}_{r,k} \right\|^2+     \mb{E}\left\|\m{b}^{m}_{r,k} \right\|^2+ \mb{E}\left\|\m{c}^{m}_{r,k} + \m{d}^{m}_{r,k-1} \right\|^2\right)\\
&\leq 2\left( \mb{E}\left\|\m{a}^{m}_{r,k} \right\|^2+     \mb{E}\left\|\m{b}^{m}_{r,k} \right\|^2+ (1+K) \mb{E}\left\|\m{c}^{m}_{r,k} \right\|^2 + \left(1+\frac{1}{K} \right) \mb{E}\left\|\m{d}^{m}_{r,k-1} \right\|^2  \right).
% &=\m{h}^m_{r,k-1} - \nabla f (\m{x}_{r+1,k-1}^m, \m{Z}_{r+1,k-1}^m )\\
% &+ \left\|\m{h}^m(\m{x}^{m}_{r+1,k}, \m{Z}_{r+1,k}^m)- \nabla f^m(\m{x}^{m}_{r+1,k}, \m{Z}_{r+1,k}^m) +\m{h}^m(\m{x}^{m}_{r+1,k-1}, \m{Z}_{r+1,k-1}^m)-  \nabla f^m(\m{x}^{m}_{r+1,k-}, \m{Z}_{r+1,k-1}^m)\right\|  \\
% & + \nabla f^m(\m{x}_{r+1,k}^m, \m{Z}_{r+1,k}^m) - \nabla f  (\m{x}_{r+1,k}^m, \m{Z}_{r+1,k}^m ) -\nabla f^m  (\m{x}_{r+1,k-1}^m,  \m{Z}_{r+1,k-1}^m) + \nabla f  (\m{x}_{r+1,k-1}^m,\m{Z}_{r+1,k-1}^m)
    \end{aligned}
\end{equation}
Here, the last equality uses Lemma~\ref{lem:rand:zer} since  $\mb{E}[\m{b}_{r,k}^m|\mathcal{F}_{r,k}^m]=0$, {by definition}.

%From Lemmas~\ref{lem:Jens}, \ref{lem:neum:bias}, and \ref{lem:lips},  for $\m{a}_{r,k}^m$ and $\m{b}_{r,k}^m$ defined in \eqref{eqn2:lem:drift:fedmsa},
From Assumption~\ref{assum:bias}, we have 
\begin{subequations}    
\label{eqn7:lem:drift:fedmsa}
\begin{align}
\left\|\m{b}_{r,k}^m \right\|^2 &\leq  \frac{\bar{L}_p^2}{b} \left\|\m{x}^{m}_{r+1,k}-\m{x}^{m}_{r+1,k-1}\right\|^2=  \frac{\bar{L}_p^2\alpha^2}{b} \left\|\m{h}^{m}_{r+1,k-1}\right\|^2.
    \end{align}
 \begin{align}
\left\|\m{c}_{r,k}^m \right\|^2 &\leq \tau^2_0 \left\|\m{x}^{m}_{r+1,k}-\m{x}^{m}_{r+1,k-1}\right\|^2 \leq   \tau^2_0 \alpha^2 \left\|\m{h}^{m}_{r+1,k-1}\right\|^2.
    \end{align}   
Further, 
\begin{equation}
\begin{aligned}
\left\|\m{d}_{r,k}^m \right\|^2 %&\leq  \left(1+ \frac{1}{K}\right) \left\|\m{d}_{r,k}^m \right\|^2 + \left(1+K\right) \big\|\m{c}^{m}_{r,k}  \big\|^2\\
% &  \leq \left(1+ \frac{1}{K}\right) \left\|\m{d}_{r,k}^m \right\|^2 + \left( \frac{L_f^2}{b} +\tau^2 \left(1+K\right)\right)  \left\|\m{x}^{m}_{r+1,k}-\m{x}^{m}_{r+1,k-1}\right\|^2+  2 \mb{E}\left\|\m{a}^{m}_{r,k} \right\|^2\\
  & \leq  \left(1+ \frac{1}{K}\right) \left\|\m{d}_{r,k-1}^m \right\|^2 + \alpha^2\left( \frac{\bar{L}_p^2}{b} +\tau^2_p \left(1+K\right)\right)  \left\|\m{h}^{m}_{r+1,k-1}\right\|^2+  2 \mb{E}\left\|\m{a}^{m}_{r,k} \right\|^2.\\
%\m{b}^{m}_{r,k}&:=\m{h}^m(\m{x}^{m}_{r+1,k}, \m{Z}_{r+1,k}^m)- \m{h}^m(\m{x}^{m}_{r+1,k}, \m{Z}_{r+1,k}^m) 
    \end{aligned}
\end{equation}
\end{subequations}
Let $\kappa_p= K \left( \frac{\bar{L}_p^2}{b} +\tau_0^2 \left(1+K\right)\right) $.  Now, iterating equation \eqref{eqn6:lem:drift:fedmsa:p} and using $  \m{h}^{m}_{r+1,0}=\m{h}_{r}, ~~\forall m\in [M]$, we obtain
\begin{equation}\label{eqn6:lem:drift:fedmsa}
\begin{aligned}
\left\|\m{d}_{r,k}^m \right\|^2& \leq  2 e \left\|\m{d}_{r,0}^m \right\|^2 + 2 e \kappa_p \alpha^2 \frac{1}{K} \sum_{k=1}^K \left\|\m{h}^{m}_{r+1,k-1}\right\|^2+  2\sum_{k=1}^K\mb{E}\left\|\m{a}^{m}_{r,k} \right\|^2\\
 & \leq 2 e \left\|\m{d}_{r,0}^m \right\|^2 + \frac{2 \kappa_p e \alpha^2}{K} \sum_{k=1}^K \left\|\m{d}_{r,k-1}^m\right\|^2 + \frac{2 \kappa_p e \alpha^2 }{K}\sum_{k=1}^K \left\|\mb{P}(\m{x}_{r,k-1}^m)\right\|^2 +  2 \sum_{k=1}^K\mb{E}\left\|\m{a}^{m}_{r,k} \right\|^2.
% \left\|\m{d}_{r,k+1}^m \right\|^2= \exp(1) \mb{E} \left\|\m{d}_{r+1,0}^m \right\|^2  \leq \exp(1) \mb{E} \left\|\m{h}^m_{r,0}-  \m{h} (\m{x}_{r+1,0}^m, \m{Z}_{r+1,0}^m )\right\|^2 \\
% %&+    \frac{  \exp(1) \bar{L}_p^2}{b} \left\|\m{x}^{m}_{r+1,k}-\m{x}^{m}_{r+1,k-1}\right\|^2\\
% &+\left( \frac{  \exp(1) M_p^2}{b}+2 \tau^2 \left(1+K\right) \right) \sum_{k=1}^K \mb{E} \left\|(\m{x}^{m}_{r+1,k}, \m{Z}_{r+1,k}^m)-(\m{x}^{m}_{r+1,k-1},\m{Z}_{r+1,k-1}^m)\right\|^2\\
% & \leq \exp(1) \mb{E} \left\|\m{h}^m_{r,0}-  \m{h} (\m{x}_{r+1,0}^m, \m{Z}_{r+1,0}^m )\right\|^2 \\
% %&+    \frac{  \exp(1) M_p^2}{b} \left\|\m{x}^{m}_{r+1,k}-\m{x}^{m}_{r+1,k-1}\right\|^2\\
% &+2\alpha^2\left( \frac{  \exp(1) M_p^2}{b}+2 \tau^2 \left(1+K\right) \right) \sum_{k=1}^K \mb{E} \left\|\m{h} (\m{x}_{r+1,k-1}^m, \m{Z}_{r+1,k-1}^m)\right\|^2\\
% &+ 2\alpha^2 \left( \frac{  \exp(1) M_p^2}{b}+2 \tau^2 \left(1+K\right) \right) \sum_{k=1}^K \mb{E} \left\|\m{h}^m_{r,k-1}-\m{h} (\m{x}_{r+1,k-1}^m, \m{Z}_{r+1,k-1}^m)\right\|^2
\end{aligned}
\end{equation}
Averaging this inequality from $k=1$ to $K$ and choosing sufficiently small $\alpha$ such that $2 \kappa_p e \alpha^2 \leq 1/2$. This gives the desired result.  

Similarly, let 
\begin{equation}\label{eqn5:lem:drift:fedmsa}
\begin{aligned}
% \m{a}_{r,k}^m &=\m{q}^{m,n}(\m{z}_{r+1,k}^{m,n-1},\m{z}_{r+1,k}^{m,n})- \m{q}^{m,n}(\m{z}_{r+1,k-1}^{m,n-1},\m{z}_{r+1,k-1}^{m,n})\\
%  &- \mb{S}^{m,n}(\m{z}_{r+1,k }^{m,n-1},\m{z}_{r+1,k }^{m,n})+ \mb{S}^{m,n}(\m{z}_{r+1,k -1}^{m,n-1},\m{z}_{r+1,k -1}^{m,n}),\\
 \m{b}_{r,k}^{m,n} &=\m{q}^{m,n}(\m{z}_{r+1,k}^{m,n-1},\m{z}_{r+1,k}^{m,n})- \m{q}^{m,n}(\m{z}_{r+1,k-1}^{m,n-1},\m{z}_{r+1,k-1}^{m,n})\\
 &- \mb{S}^{m,n}(\m{z}_{r+1,k }^{m,n-1},\m{z}_{r+1,k }^{m,n})+ \mb{S}^{m,n}(\m{z}_{r+1,k -1}^{m,n-1},\m{z}_{r+1,k -1}^{m,n}),\\
\m{c}^{m,n}_{r,k}&=  \mb{S}^{n}(\m{z}_{r+1,k }^{m,n-1},\m{z}_{r+1,k }^{m,n})  -  \mb{S}^{m,n}(\m{z}_{r+1,k }^{m,n-1},\m{z}_{r+1,k }^{m,n})\\
&-  \mb{S}^{m,n}(\m{z}_{r+1,k-1 }^{m,n-1},\m{z}_{r+1,k-1 }^{m,n})+\mb{S}^{n}(\m{z}_{r+1,k-1 }^{m,n-1},\m{z}_{r+1,k-1 }^{m,n}), \\
\m{d}^{m,n}_{r,k}&= \m{q}_{r,k}^{m,n} - \mb{S}^{n}(\m{z}_{r+1,k}^{m,n-1},\m{z}_{r+1,k}^{m,n}).
\end{aligned}
\end{equation}

\begin{equation}\label{eqn6:lem:drift:fedmsa:in}
\begin{aligned}
\mb{E}\left\|\m{d}^{m,n}_{r,k}\right\|^2 &\leq 2\left(     \mb{E}\left\|\m{b}^{m,n}_{r,k} \right\|^2+ (1+K) \mb{E}\left\|\m{c}^{m,n}_{r,k} \right\|^2 + \left(1+\frac{1}{K} \right) \mb{E}\left\|\m{d}^{m,n}_{r,k-1} \right\|^2  \right),
    \end{aligned}
\end{equation}
From Assumption~\ref{assum:bias}, we have 
\begin{subequations}    
\label{eqn7:lem:drift:fedmsa}
\begin{align}
\left\|\m{b}_{r,k}^m \right\|^2 &\leq  \frac{\bar{L}_{s,n}^2}{b} \left\|\m{z}^{m,n}_{r+1,k}-\m{z}^{m,n}_{r+1,k-1}\right\|^2=  \frac{\bar{L}_{s,n}^2\beta_n^2}{b} \left\| \m{q}_{r,k-1}^{m,n}\right\|^2.
    \end{align}
 \begin{align}
\left\|\m{c}_{r,k}^{m,n} \right\|^2 &\leq  \tau^2_{n}  \left\|\m{z}^{m,n}_{r+1,k}-\m{z}^{m,n}_{r+1,k-1}\right\|^2 \leq \tau^2_{n}   \beta_n^2 \left\|\m{q}^{m,n}_{r,k-1}\right\|^2.
    \end{align}   
\end{subequations}
Let $\kappa_{s,n}= K \left(\bar{L}_{s,n}^2/b +\tau^2_n \left(1+K\right)\right)$. Now, iterating equation \eqref{eqn6:lem:drift:fedmsa:in} and using $  \m{q}^{m}_{r+1,0}=\m{q}_{r}, ~~\forall m\in [M]$, we obtain
\begin{equation}\label{eqn6:lem:drift:fedmsa}
\begin{aligned}
\left\|\m{d}_{r,k}^{m,n} \right\|^2
 & \leq 2 e \left\|\m{d}_{r,0}^{m,n} \right\|^2 + \frac{2 \kappa_{s,n} e \beta_n^2}{K} \sum_{k=1}^K \left\|\m{d}_{r,k-1}^{m,n}\right\|^2 + \frac{2 \kappa_{s,n} e \beta_n^2 }{K}\sum_{k=1}^K \left\|\mb{S}^{n}(\m{z}_{r+1,k-1}^{m,n-1},\m{z}_{r+1,k-1}^{m,n})\right\|^2\\
 & \leq 2 e \left\|\m{d}_{r,0}^{m,n} \right\|^2 + \frac{2 \kappa_{s,n} e \beta_n^2}{K} \sum_{k=1}^K \left\|\m{d}_{r,k-1}^{m,n}\right\|^2 + \frac{2 \kappa_{s,n} e \beta_n^2 L_{\m{z}}}{K}\sum_{k=1}^K \left\|\m{z}_{k-1}^{m,n} - \m{z}_{k-1}^{m,n,*} \right\|^2.
\end{aligned}
\end{equation}
Here, the second inequality holds by Assumption \ref{assum:lip:y*}. Averaging this inequality from $k=1$ to $K$ and choosing sufficiently small $\beta_n$ such that $2 \kappa_{s,n} e \beta^2_n \leq 1/2$. This gives the desired result.  
\end{proof}

\begin{lemma}\label{lem:drift2:fedmsa}
Suppose Assumptions~ \ref{assum:lip:y*}--\ref{assum:heter:h} hold.  Further, assume $K \geq 1$ and $ \alpha\leq  \sqrt{\rho M Kb}/(C''\bar{L}_pK)$ and $ \beta_n\leq  \sqrt{\rho M Kb}/(D''\bar{L}_{s,n}K)$ for some constants $C''$ and $D''$. Then,  Algorithm~\ref{alg:fedmsa} guarantees:
%
% \begin{enumerate}[label={\textnormal{\bf{L}\arabic*.}}]
% \item 
\begin{subequations}
% \begin{align}\label{eqn1:lem:drift:Zseq}
% \nonumber
%\item 
\begin{align}\label{eqn1:lem:drift:h}
\frac{1}{R} \sum_{r=0}^{R-1} \mb{E} \left\| \m{h}_r- \mb{P}(\m{x}_r)\right\|^2  \leq  \frac{1}{R} \sum_{r=0}^{R-1}  \mb{E}\|\tilde{\m{Z}}_{r} - \tilde{\m{Z}}^*_r\|^2+ \frac{1}{R} \sum_{r=0}^{R-1} \frac{1}{K} \sum_{k=1}^{K-1}  \mb{E} \left\| \mb{P}(\tilde{\m{x}}_{r+1,k}^{\tilde{m}})\right\|+\frac{E_1(\rho)}{MK} \sigma^2_h, &\\
\label{eqn1:lem:drift:q}
 \frac{1}{R}\sum_{r=0}^{R-1}\mb{E}\left\|\m{q}_r-\mb{S}\left(\m{z}^{n-1}_{r},\m{z}^{n}_{r}\right)\right\| \leq 
% \left( \frac{L_{\m{q}}^2 2(1-\rho)^2}{\rho MKb }\right) \mb{E}\|\tilde{\m{Z}}_{r} - \tilde{\m{Z}}^*_r\|^2
\frac{1}{R} \sum_{r=0}^{R-1} \frac{1}{K} \sum_{k=1}^{K-1}  \mb{E} \left\|\mb{S}^{n}(\m{z}_{r+1,k}^{\tilde{m},n-1},\m{z}_{r+1,k}^{\tilde{m},n})\right\|
 +  \frac{E_2(\rho)}{MK} \sigma^2_q,&
\end{align}
% \end{enumerate}
where $\rho$ is the momentum parameter and
\begin{align}
  E_1(\rho):= \frac{48 e  \rho}{ b}  +  \frac{24 e }{ \rho R b_0},~~~  E_2(\rho):= \frac{2\rho}{b}+\frac{1}{\rho R b_0}.
\end{align}
\end{subequations}

\end{lemma}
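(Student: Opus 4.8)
The plan is to read \eqref{eqn1:lem:drift:h} as the unrolling of a one-step contraction for the \emph{global} momentum error, in the spirit of STORM / federated variance-reduction analyses. Set $\m{e}_r := \m{h}_r - \mb{P}(\m{x}_r)$ and write $\bar{\mb{P}}_r := \tfrac1M\sum_m \mb{P}^m(\m{x}_r,\m{Z}_r)$ for the population outer map at the \emph{current} (not fixed-point) iterates. Subtracting $\bar{\mb{P}}_r$ from the averaged momentum rule \eqref{eqn:moment:maps} and regrouping gives
\[
  \m{h}_r-\bar{\mb{P}}_r = (1-\rho)(\m{h}_{r-1}-\bar{\mb{P}}_{r-1}) + \rho\,\m{n}_r + (1-\rho)\,\m{u}_r,
\]
where $\m{n}_r := \tfrac1M\sum_m\big(\m{p}^m(\m{x}_r,\m{Z}_r;\xi^m_r)-\mb{P}^m(\m{x}_r,\m{Z}_r)\big)$ and $\m{u}_r := \tfrac1M\sum_m\big(\m{p}^m(\m{x}_r,\m{Z}_r;\xi^m_r)-\m{p}^m(\m{x}_{r-1},\m{Z}_{r-1};\xi^m_r)\big)-(\bar{\mb{P}}_r-\bar{\mb{P}}_{r-1})$ are conditionally mean-zero given the round-$r$ iterates (fixed before the fresh samples $\xi^m_r$). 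Squaring, the cross terms vanish in expectation, $(1-\rho)^2\le 1-\rho$, and $\|\rho\m{n}_r+(1-\rho)\m{u}_r\|^2\le 2\rho^2\|\m{n}_r\|^2+2\|\m{u}_r\|^2$, so
\[
  \mb{E}\|\m{h}_r-\bar{\mb{P}}_r\|^2 \le (1-\rho)\,\mb{E}\|\m{h}_{r-1}-\bar{\mb{P}}_{r-1}\|^2 + 2\rho^2\,\mb{E}\|\m{n}_r\|^2 + 2\,\mb{E}\|\m{u}_r\|^2 .
\]
By Assumption~\ref{assum:bias} and independence across clients, $\mb{E}\|\m{n}_r\|^2\le \sigma_h^2/(Mb)$ (and $\le\sigma_h^2/(Mb_0)$ at $r=0$, where the algorithm sets $\rho=1$, $K=1$), while the mean-Lipschitz Assumption~\ref{assum:lipmaps:mean} gives $\mb{E}\|\m{u}_r\|^2\le (\bar L_p^2/(Mb))\,\mb{E}\|(\m{x}_r,\m{Z}_r)-(\m{x}_{r-1},\m{Z}_{r-1})\|^2$.

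Next I would pass from $\bar{\mb{P}}_r$ to the target $\mb{P}(\m{x}_r)=\mb{P}(\m{x}_r,\m{Z}^*_r)$ via the Lipschitz continuity of $\mb{P}^m$ in $\m{Z}$ (Assumption~\ref{assum:lipmaps}), $\|\bar{\mb{P}}_r-\mb{P}(\m{x}_r)\|\le L_z\|\m{Z}_r-\m{Z}^*_r\|$, which produces the $\tfrac1R\sum_r\mb{E}\|\tilde{\m{Z}}_r-\tilde{\m{Z}}^*_r\|^2$ term. The across-round movement $\|(\m{x}_r,\m{Z}_r)-(\m{x}_{r-1},\m{Z}_{r-1})\|^2$ is the client drift accumulated over the $K$ local steps of the selected client, so Lemma~\ref{lem:drift:fedmsa} bounds it by $\mc{O}(\alpha^2K^2)\sum_k\big(\|\mb{P}(\m{x}^{\tilde m}_{r,k})\|^2+\|\m{h}^{\tilde m}_{r-1,k}-\mb{P}(\m{x}^{\tilde m}_{r,k})\|^2\big)$ (and analogously with $\beta_n,\mb{S}^n$ for the sequence part), and Lemma~\ref{lem:drift:fedmsa2} controls the local-deviation sum by $\mc{O}(1)\big(\mb{E}\|\m{e}_{r-1}\|^2+\tfrac1K\sum_k\mb{E}\|\mb{P}(\m{x}^{\tilde m}_{r,k})\|^2+\tfrac1K\sum_k\mb{E}\|\m{z}^{\tilde m,n}_{r,k}-\m{z}^{\tilde m,n,*}_{r,k}\|^2\big)$. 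Substituting turns the recursion into
\[
  \mb{E}\|\m{e}_r\|^2 \le \Big(1-\rho+c\,\tfrac{\bar L_p^2\alpha^2K^2}{Mb}\Big)\mb{E}\|\m{e}_{r-1}\|^2 + \tfrac{2\rho^2\sigma_h^2}{Mb} + c'\tfrac{\bar L_p^2\alpha^2K^2}{Mb}\Big(\tfrac1K\textstyle\sum_k\mb{E}\|\mb{P}(\m{x}^{\tilde m}_{r,k})\|^2+\mb{E}\|\tilde{\m{Z}}_{r-1}-\tilde{\m{Z}}^*_{r-1}\|^2\Big) + (\text{terms in }\|\m{Z}_r-\m{Z}^*_r\|).
\]
The step-size hypothesis $\alpha\le\sqrt{\rho MKb}/(C''\bar L_pK)$ is exactly what forces $c\,\bar L_p^2\alpha^2K^2/(Mb)\le\rho/2$, so the feedback coefficient collapses to $1-\rho/2$ and the $\mb{E}\|\m{e}_{r-1}\|^2$ loop closes.

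Finally I would unroll from $r=0$, where $\mb{E}\|\m{e}_0\|^2\le\sigma_h^2/(Mb_0)$, using $\sum_{r=0}^{R-1}(1-\rho/2)^r\le 2/\rho$ and $\sum_r\sum_{j\le r}(1-\rho/2)^{r-j}a_j\le(2/\rho)\sum_j a_j$, then divide by $R$: the initial error contributes $\tfrac2{\rho R}\cdot\tfrac{\sigma_h^2}{Mb_0}$ and the steady-state variance contributes $\tfrac2\rho\cdot\tfrac{2\rho^2\sigma_h^2}{Mb}$, and carrying the $\tfrac1K$ factor from the drift constants together with the numerical factors ($2e$ from Lemma~\ref{lem:drift:fedmsa}, $4e$ from Lemma~\ref{lem:drift:fedmsa2}, the $2\times$ from the split) yields $E_1(\rho)\sigma_h^2/(MK)$ with $E_1(\rho)=48e\rho/b+24e/(\rho Rb_0)$; the leftover terms reproduce the $\tfrac1R\sum_r\mb{E}\|\tilde{\m{Z}}_r-\tilde{\m{Z}}^*_r\|^2$ and $\tfrac1R\sum_r\tfrac1K\sum_k\mb{E}\|\mb{P}(\tilde{\m{x}}^{\tilde m}_{r+1,k})\|$ terms. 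The bound \eqref{eqn1:lem:drift:q} follows from the identical argument with $(\alpha,\bar L_p,\sigma_h)$ replaced by $(\beta_n,\bar L_{s,n},\sigma_q)$, the one-point strong monotonicity of $\mb{S}^{m,n}$ (Assumption~\ref{assum:stmonot:g}) supplying the contraction that Lemma~\ref{lem:drift:fedmsa} used for the sequence drift. The main obstacle is precisely this closing-the-loop step: $\m{u}_r$ couples the momentum error to the movement of the global iterates, which via the two drift lemmas feeds $\mb{E}\|\m{e}_{r-1}\|^2$ back into the recursion, so one must verify that the prescribed step sizes — equivalently that $\bar L_p^2\alpha^2K^2/(Mb)$ and $\bar L_{s,n}^2\beta_n^2K^2/(Mb)$ are $\mc{O}(\rho)$ with a small enough constant — shrink that feedback strictly below the $\rho$-contraction; everything else is careful bookkeeping of the constants.
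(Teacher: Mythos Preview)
Your proposal is correct and follows essentially the same route as the paper: decompose the averaged momentum error into a $(1-\rho)$-contraction plus a pure-noise piece and a movement piece, bound the movement by the accumulated local drift via Lemma~\ref{lem:drift:fedmsa} (and Lemma~\ref{lem:drift:fedmsa2}), and use the step-size hypothesis to make the resulting feedback coefficient strictly smaller than the $\rho$-contraction.

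Two small stylistic differences are worth noting. First, the paper works directly with $\m{d}_r:=\m{h}_r-\mb{P}(\m{x}_r)$ and absorbs the gap between $\m{Z}_r$ and $\m{Z}^*_r$ into a separate term $\m{a}_r$ in the decomposition, whereas you first run the argument at $\bar{\mb{P}}_r=\mb{P}(\m{x}_r,\m{Z}_r)$ and transfer to $\mb{P}(\m{x}_r)$ afterwards via Lipschitzness; the two orderings are equivalent. Second, and more notably, instead of unrolling the one-step recursion and summing the geometric series as you propose, the paper simply telescopes: it writes $\rho\sum_{r}\mb{E}\|\m{d}_r\|^2=\sum_r\mb{E}\|\m{d}_r\|^2-(1-\rho)\sum_r\mb{E}\|\m{d}_{r-1}\|^2$, bounds the right-hand side by the one-step estimate, and divides by $\rho R$. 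This sidesteps the $\sum_r(1-\rho/2)^r\le 2/\rho$ bookkeeping you set up, but of course lands on the same $E_1(\rho)$, $E_2(\rho)$.
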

\begin{proof}
It follows from Algorithm~\ref{alg:fedmsa} that 
\begin{equation}
 \m{h}^m_r =  \m{p}^{m} \left(\m{x}_r,\m{Z}_r;\xi^m_r\right) + (1-\rho) \left(\m{h}_{r-1} - \m{p}^{m}\left(\m{x}_{r-1},  \m{Z}_{r-1} ,\xi^m_r\right)\right).
\end{equation}
Note that $\mb{P} (\m{x})=\mb{P} (\m{x}, \m{Z}^\star)$  and $\m{p} (\m{x})=\m{p} (\m{x}, \m{Z}^\star)$. Let
\begin{equation}\label{eqn5:lem:drift:fedmsa}
\begin{aligned}
\m{a}_{r}^m &= (1-\rho)(-\m{p}^{m} \left(\m{x}_{r-1},\m{Z}_{r-1};\xi^m_r\right)+\m{p}^{m} \left(\m{x}_{r-1};\xi^m_r\right)) \\
&+\m{p}^{m} \left(\m{x}_{r},\m{Z}_r;\xi^m_r\right) - \m{p}^{m} \left(\m{x}_{r};\xi^m_r\right),
\\
 \m{b}_{r}^m &=  \m{p}^{m} \left(\m{x}_{r};\xi^m_r\right) -\m{p}^{m} \left(\m{x}_{r-1};\xi^m_r\right)\\
 &- \mb{P}^m(\m{x}_{r}) +\mb{P}^m (\m{x}_{r-1}),\\
\m{c}^{m}_{r}&= \m{p}^{m} \left(\m{x}_{r};\xi^m_r\right)- \mb{P} (\m{x}_{r}), \\
\m{d}_{r}^m&=\m{h}_{r}^m - \mb{P}(\m{x}_{r}).
\end{aligned}
\end{equation}
Then, it follows from the update rule of ALgorithm~\ref{alg:fedmsa} that 
\begin{align*}
%\nonumber
\m{d}_{r} =\m{a}_{r}+ (1 -\rho)\m{d}_{r-1}+  \rho \m{c}_{r}+(1-\rho) \m{b}_{r}.
%\\
%=\rho\m{a}_{r} + (1-\rho)\m{a}_{r-1}+ (1 -\rho)\m{d}_{r-1}+  \rho \m{c}_{r}+(1-\rho) \m{b}_{r}. 
\end{align*}
% Further, 
% \begin{subequations}
% \begin{align}
%      % \sum_{r=0}^{R-1} \|\m{a}_{r}\|+      \sum_{r=1}^{R} (1-\rho) \|\m{a}_{r-1}\|\leq    2  \sum_{r=0}^{R-1} \|\m{Z}_{r} - \m{Z}^*(\m{x}_{r})\|^2. 
%           \sum_{r=0}^{R-1} \|\m{a}_{r}\|+      \sum_{r=1}^{R}  \|\m{a}_{r-1}\|\leq    2  \sum_{r=0}^{R-1} \|\m{Z}_{r} - \m{Z}^*(\m{x}_{r})\|^2. 
% \end{align}
% \end{subequations}
Hence, 
\begin{align*}
\nonumber
\mb{E}_r\left\|\m{d}_{r}\right\|^2
 &= 2\left\|\m{a}_{r}\right\|^2 
+2(1 -\rho)^2  \left\|\m{d}_{r-1}\right\|^2 + 2\rho^2 \left\| \m{c}_r\right\|^2+2(1-\rho)\left\|
 \m{b}_r\right\|^2\\
 & \leq 2  \|\m{Z}_{r} - \m{Z}^*(\m{x}_{r})\|^2+ 2 (1-\rho)^2\|\m{Z}_{r-1} - \m{Z}^*(\m{x}_{r-1})\|^2\\
 &+ (1 -\rho)^2  \left\| \m{d}_{r-1}\right\| + 2 \rho^2 \frac{\sigma^2}{MKb}+ 2(1-\rho)^2 \frac{\bar{L}_p^2}{MKb} \left\| \m{x}_r - \m{x}_{r-1} \right\|^2.
 %&+2(1-\rho)\left(  \frac{1}{M}\sum_{m=1}^M \m{h}^{m} \left(\m{x}_r,\m{Z}_r\right) -\frac{1}{M}\sum_{m=1}^M \m{h}^{m}\left(\m{x}_{r-1},\m{Z}_{r-1}\right) +  \m{h} (\m{x}_{r-1} ) +  \m{h} (\m{x}_{r} ) \right) 
\end{align*}
Hence,
\begin{align*}
\nonumber
\rho \sum_{r=0}^{R-1}\mb{E}\left\|\m{d}_r\right\|
 &=  \sum_{r=0}^{R-1}\mb{E}\left\|\m{d}_r\right\|-(1-\rho) \sum_{r=0}^{R-1} \left\|\m{d}_{r-1}\right\| \\
%  &+ \left\|\m{h}_{R}- \m{h}(\m{x}_{R}, \m{Z}_R)\right\|  +\left\|\m{h}_{0}- \m{h}(\m{x}_0, \m{Z}_0)\right\|   \\
% \nonumber 
 &= \sum_{r=1}^{R}\mb{E}\left\|\m{d}_r\right\|^2-(1-\rho) \sum_{r=0}^{R-1} \left\|\m{d}_{r-1}\right\|^2  - \left\|\m{d}_{R}\right\|^2  +\left\|\m{d}_{0}\right\|^2  \\
 &\leq 2\sum_{r=0}^{R-1} \mb{E}\|\m{Z}_{r} - \m{Z}^*(\m{x}_{r})\|^2+  \frac{2(1-\rho)^2 \bar{L}_p^2}{ MKb} \sum_{r=0}^{R-1}\mb{E} \left\|\m{x}_r-\m{x}_{r-1}\right\|^2 + 2\rho^2 R \frac{\sigma^2}{MKb} + \mb{E}\left\|\m{d}_{0}\right\|^2 \\
 & \leq2 \sum_{r=0}^{R-1} \mb{E}\|\m{Z}_{r} - \m{Z}^*(\m{x}_{r})\|^2+  \frac{ 2(1-\rho)^2\bar{L}_p^2}{ MKb} \sum_{r=0}^{R-1}\mb{E} \left\|\m{x}_r-\m{x}_{r-1}\right\|^2 + 2\rho^2 R \frac{\sigma^2}{MKb} +
\frac{\sigma^2}{ Mb_0}.
\end{align*}
Now, from Lemma~\ref{lem:drift:fedmsa}, we have 
\begin{align*}
\nonumber 
\frac{1}{R}\sum_{r=0}^{R-1}\mb{E}\left\|\m{d}_r\right\| &\leq  \frac{2}{R}\sum_{r=0}^{R-1} \mb{E} \|\m{Z}_{r} - \m{Z}^*(\m{x}_{r})\|^2+ 
\frac{160\bar{L}_p^2 K^2 \alpha^2}{\rho  MKb } \frac{1}{R} \sum_{r=0}^{R-1}\left(\mb{E} \left\| \m{d}_r\right\|^2 + \frac{1}{K} \sum_{k=1}^{K-1}\mb{E} \left\|\mb{P}  (\m{x}_{r+1,k}^{\tilde{m}})\right\|^2 \right)\\
& + \frac{2\rho  \sigma^2}{MKb}+\frac{\sigma^2}{\rho RMKb_0}\\
&\leq \frac{2}{R}\sum_{r=0}^{R-1} \mb{E} \|\m{Z}_{r} - \m{Z}^*(\m{x}_{r})\|^2+\frac{ 1}{ (24 e +1)R} \sum_{r=0}^{R-1} \left(\mb{E} \left\| \m{d}_r\right\|^2 +\frac{1}{K} \sum_{k=1}^{K-1}\mb{E} \left\|\mb{P}  (\m{x}_{r+1,k}^{\tilde{m}})\right\|^2  \right)\\
& +  \frac{2\rho \sigma^2}{MKb}+\frac{\sigma^2}{\rho R MKb_0}.
\end{align*}
Here the last inequality uses our assumption on $\alpha$. 

The proof for \eqref{eqn1:lem:drift:q} follows similarly.
\end{proof}

\subsection{Proof of Theorem~\ref{thm:fedmsa}}
\begin{proof}
Let $f$ be a function with gradient $\mb{P}$.  From  Algorithm~\ref{alg:fedmsa} and the Lipschitz property of $ \mb{P}(\m{x})$, we have
\begin{equation} \label{eqn:thm:decf}
\begin{aligned}
f(\m{x}_{r+1,k+1}^{\tilde{m}}) - f(\m{x}_{r+1,k}^{\tilde{m}})
    & \leq \left\langle \m{x}_{r+1,k+1}^{\tilde{m}} - \m{x}_{r+1,k}^{\tilde{m}}, \mb{P}(\m{x}_{r+1,k}^{\tilde{m}}) \right \rangle + \frac{\alpha^2 L^{\m{h}}}{2} \left\|\m{h}_{r,k}^{\tilde{m}}\right\|^2\\
    & \leq  \left\langle \m{h}_{r+1,k}^{\tilde{m}},  \mb{P}(\m{x}_{r+1,k}^{\tilde{m}}) \right \rangle + \frac{\alpha^2 L^{\m{h}}}{2} \left\|\m{h}_{r,k}^{\tilde{m}}\right\|^2 \\
&\leq -\frac\alpha2\|\mb{P}(\m{x}_{r+1,k}^{\tilde{m}})\|^2 - \frac\alpha2\|\m{h}_{r,k}^{\tilde{m}}\|^2\\
&+ \frac\alpha2 \|\mb{P}(\m{x}_{r+1,k}^{\tilde{m}})-\m{h}_{r,k}^{\tilde{m}}\|^2 + \alpha^2 \frac{L^h}2\|\m{h}_{r,k}^{\tilde{m}}\|^2.
\end{aligned}
\end{equation}
The proof uses the idea of non-FL MSA \cite{shen2022single}. Specifically,  following \cite{shen2022single}, we consider the following Lyapunov function 
\begin{equation}\label{eqn:lyapfunc}
 \mb{W}_{r+1,k}^{\tilde{m}}:= f(\m{x}_{r+1,k}^{\tilde{m}}) + \sum_{n=1}^N \left\|\m{z}_{r+1,k}^{n,\tilde{m}} -\m{z}_{r+1}^{n,*} (\m{z}_{r+1,k}^{n-1,\tilde{m}} )\right\|^2.  
\end{equation}
We bound the difference between two Lyapunov functions. It follows from Lemma~\eqref{lem:inn:diff} and  \eqref{eqn:thm:decf}-\eqref{eqn:lyapfunc} that
\begin{align*}
\nonumber 
 \mb{W}_{r+1,k+1}^{\tilde{m}}- \mb{W}_{r+1,k}^{\tilde{m}} 
  & \leq - A \|\mb{P}(\m{x}_{r+1,k}^{\tilde{m}})\|^2+ B \|\mb{P}(\m{x}_{r+1,k}^{\tilde{m}})-\m{h}_{r,k}^{\tilde{m}}\|^2
  \label{eqn:lyap1}
  \\
  \nonumber
 & + C\sum_{n=1}^N  \left\| \m{z}_{r+1,k}^{\tilde{m},n}- \m{z}^{n,*} (\m{z}_{r+1,k}^{\tilde{m},n-1})\right\|^2\\
 &+ D \sum_{n=1}^N  \left\| \m{q}^{\tilde{m},n}_{r+1,k}-\mb{S}^n(\m{z}_{r+1,k}^{\tilde{m},n-1},\m{z}_{r+1,k}^{\tilde{m},n})\right\|^2,
% \label{eqn:lyap12}
\end{align*}
%\end{subequations}
 where
% \begin{align*}
 $A:= \alpha\left(-1/2+2L_{\m{z},1}^2 \alpha\right)$, ~and $B:= 2/2\alpha (1+ L_{\m{z},1}^2 \alpha)$.
 %\\
%  C^1(\alpha, \beta)&:=  \left(8L_{\m{z},1} \alpha\left(1-\lambda_{1} \beta_1\right) -1\right)+ \left( 2  L_{\m{z},1}^2  L_{\m{q},1}^2 \beta_{1}^2 +\frac{\lambda_{1}}{4} \beta_{1}\right)\\
% C^n(\alpha, \beta)&:= \left(\left(1+ { 8L_{\m{z},n}^2 L_{\m{q},n-1}^2} \lambda_{n-1}^{-1} \beta_{n-1}\right)\left(1-\lambda_{n} \beta_n\right)-1\right) + \left( 2  L_{\m{z},n-1}^2  L_{\m{q},n-1}^2 \beta_{n-1}^2 +\frac{\lambda_{n-1}}{4} \beta_{n-1}\right) .\\
%  D^1(\alpha, \beta)&:= 8L_{\m{z},1} \alpha\beta_1 \left( \lambda_{1}^{-1}+ 2\beta_1\right) +  \left( 2  L_{\m{z},1}^2 \beta_{1} + \frac{\lambda_{1}}{4}\right)\beta_{1},\\
%  D^n(\alpha, \beta)&=a(\beta_{n-1})
% \beta_n \left( \lambda_{n}^{-1}+ \beta_n\right)+ \frac{\lambda_{n-1}}{8} \beta_{n-1} .
% \end{align*}
% Choose $\alpha$ and $\beta$ such that 
% \begin{align*}
% A(\alpha)\leq -\frac{\alpha}{4}, ~~ B(\alpha)\leq  \frac{\alpha}{2},~~
%  C^n(\alpha, \beta) \leq -\lamba \alpha~~ D^n(\alpha, \beta)\leq \alpha \beta.

Note that $\m{x}^{\tilde{m}}_{r+1,1} =\m{x}_r$ and $\m{x}^{\tilde{m}}_{r+1,k+1} =\m{x}_{r+1}$. Hence, taking expectation and using Lemma~\ref{lem:drift:fedmsa2}, we obtain
%\begin{subequations}\label{eqn:lyap12}
\begin{equation}
\begin{aligned}
  \qquad \frac{1}{K} \left(\mb{W}_{r+1}^{\tilde{m}}- \mb{W}_{r}^{\tilde{m}}\right)
 & \leq - \frac{A (\alpha)+B(\alpha)}{K}\sum_{k=1}^K \mb{E}\|\mb{P}(\m{x}_{r+1,k}^{\tilde{m}})\|^2\\
 &+  B(\alpha) \mb{E}\left\|\m{h}_{r}- \mb{P} (\m{x}_{r})\right\|^2 %\\
%&+\frac{B(\alpha)}{K} \sum_{k=1}^K \mb{E} \left\| \mb{P}(\m{x}_{r,k}^m)\right\|^2  %{K} \sum_{k=1}^K \sum_{n=1}^N L_\m{z} (n)\|\m{z}_k^{m,n} - \m{z}_k^{m,n,*}\|
  %\label{eqn:lyap11}
  \\
 &+   D (\alpha) \sum_{n=1}^N 
  \mb{E}\left\|\m{q}_{r}^{n} -\mb{S}^{n}(\m{z}_{r}^{n-1},\m{z}_{r}^{n})\right\|^2\\
   & +\frac{C (\alpha)+B(\alpha) L_{\m{h},\m{z}}K+ D (\alpha)}{K}  \sum_{k=1}^K \sum_{n=1}^N  \mb{E}\left\| \m{z}_{r+1,k}^{\tilde{m},n}- \m{z}^{n,*} (\m{z}_{r+1,k}^{\tilde{m},n-1})\right\|^2.
 \label{eqn:lyap12}
\end{aligned}
\end{equation}
%\end{subequations}
Here the inequality uses \eqref{lem:drift:fedmsa}.

Then, it follows from Lemma~\ref{lem:drift2:fedmsa} that
\begin{subequations}\label{eqn:lyap1}
\begin{align}
  \qquad \frac{1}{K} \left(\mb{W}_{r+1}^{\tilde{m}}- \mb{W}_{r+1}^{\tilde{m}}\right)
 & \leq - \frac{A (\alpha)+3/2*B(\alpha)}{K}\sum_{k=1}^K\|\mb{P}(\tilde{\m{x}})\|^2\\
   &+\frac{ C (\alpha)+B(\alpha) L_{\m{h},\m{z}}K+ D (\alpha)  +D (\alpha)}{K}\|\m{Z}_{r} - \m{Z}^*_r\|^2 \\
&+  \frac{B(\alpha)   E_1(\rho) \sigma^2_h+  D (\alpha)  E_2(\rho)}{K} \sigma^2_q.
 \label{eqn:lyap12}
\end{align}
\end{subequations}
Consider the following choices of stepsize 
\begin{align}\label{eqn:al:cho}
\nonumber 
    - A (\alpha)+3/2B(\alpha)&\leq \frac{1}{8} \alpha, \\
    \nonumber 
     C (\alpha)+B(\alpha) L_{\m{h},\m{z}}K+ D (\alpha)  +D (\alpha) &\leq -\lambda \alpha,      
     \\
    \max \left( B(\alpha),  D (\alpha)\right) &\leq \alpha. 
\end{align}
Averaging \eqref{eqn:lyap1} from $\kappa =0, \ldots, R-1$ and using \eqref{eqn:al:cho}, we get 
\begin{align}\label{eq3:thm:msa}
\|\mb{P}(\tilde{\m{x}})\|^2 + \lambda \|\tilde{\m{Z}}_{r} - \tilde{\m{Z}}^*_r\|^2 \leq  \mc{O} \left(\frac{4}{KR\alpha}(\mb{W}_{0} - \mb{W}_{R})+  \left(\frac{48 e  \rho}{ MKb}  +  \frac{24 e }{\ \rho MKb_0}\right) \sigma^2\right).
    %+
    %B_5(\alpha, \beta_k) \left\|\m{q}^{\tilde{m}}_{r,k}- \m{q} (\m{Z}_{r+1,k}^{\tilde{m}} ) \right\|^2\right)
\end{align}
Let $b_0= KR$, $\Delta_W=\mb{W}_{0} - \mb{W}_{R}$, $\rho_1 =1/R$, $\rho_2 = \left((\Delta_W L_{\m{h}})^{2/3} (MKb)^{1/3}\right)/ (\sigma^{4/3} R^{2/3})$, and $\rho := \max(\rho_1, \rho_2)$ .  Using the conditions on the stepsize, we obtain 
\begin{align}
\nonumber 
\|\mb{P}(\tilde{\m{x}})\|^2 + \lambda \|\tilde{\m{Z}}_{r} - \tilde{\m{Z}}^*_r\|^2  & \leq \mc{O} \Big(\frac{\tau \Delta_W}{R}+ \frac{\Delta_W }{KR} + \frac{ \Delta_W }{ R\sqrt{Kb}} + \frac{\Delta_W }{R \sqrt{\rho_2 MKb}}\\
\nonumber 
&+ (\rho_1+\rho_2)  \frac{\sigma^2}{MKb} +  \frac{\sigma^2}{\rho RMKb_0} \Big)\\
& \leq \mc{O} \left(\frac{\tau}{R} +\frac{1}{\sqrt{K}R} + \frac{\sigma^2}{MKR} + \left(\frac{\sigma}{MKR}\right)^{2/3}\right).
\end{align}

\end{proof}

\section{Federated Bilevel Optimization}
Recall the federated bilevel  optimization problem
\begin{subequations}\label{fedblo:prob:sup}
\begin{align}%\tag{Fed-BLO}%\label{fedblo_main}
\begin{array}{ll}
\underset{\m{x} \in \mb{R}^{{d}_1}}{\min} &
\begin{array}{c}
f(\m{x}):=\frac{1}{M} \sum_{m=1}^{M} f^m\left(\m{x},\m{w}^*(\m{x})\right) 
\end{array}\\
\text{~s.t.} & \begin{array}[t]{l} \m{w}^*(\m{\m{x}})
%=
\in \underset{ \m{w}\in \mb{R}^{{d}_2}}{\textnormal{argmin}}~~g\left(\m{x},\m{w}\right):=\frac{1}{M}\sum_{m=1}^{M} g^m\left(\m{x},\m{w}\right). 
\end{array}
\end{array}
\end{align}
\end{subequations}
We use a stochastic oracle model, where access to local functions $(f_m, g_m)$ is obtained through stochastic sampling:
\begin{align*}
    \nonumber 
    f^m(\m{x},\m{w}) := \mb{E}_{\xi \sim \mc{A}^m}\left[f^m(\m{x}, \m{w}; \xi)\right],~~
    g^m(\m{x},\m{w}) := \mb{E}_{\zeta \sim \mc{B}^m}\left[g^m(\m{x}, \m{w}; \zeta)\right],
\end{align*}
where $(\xi, \zeta)\sim(\mc{A}^m, \mc{B}^m)$ are stochastic samples at the $m^{\text{th}}$ client. 

We make the following assumptions on the local inner and  outer objectives $(f^m, g^m)$. 
\begin{assumption}\label{assu:fedblo:sup}
 For any $m \in [M]$:
%\vspace{-.3cm}
\begin{enumerate}[label={\textnormal{\textbf{D\arabic*.}}}, wide, labelwidth=!, labelindent=0pt]
    \item\label{item:strong:convex} For and $\m{x} \in \mb{R}^{d_1}$, $g^m(\m{x},\cdot)$ is strongly convex with modulus $\lambda_1 > 0$.
    \item\label{item:g:lip2} There exist constants $L_{\m{xw}}, l_{\m{xw}}, l_{\m{ww}}$ such that $\nabla_\m{w} g^m(\m{x},\m{w})$ is $L_{\m{xw}}$-Lipschitz continuous w.r.t. $\m{x}$; $\nabla_\m{w} g^m(\m{x},\m{w})$ is $L_g$-Lipschitz continuous w.r.t. $\m{w}$. $\nabla_{\m{xw}}g^m(\m{x},\m{w})$, $\nabla_{\m{ww}} g^m(\m{x},\m{w})$ are respectively $l_{\m{xw}}$-Lipschitz and $l_{\m{ww}}$-Lipschitz continuous w.r.t. $(\m{x},\m{w})$. These functions have mean Lipschitz continuous property with constants $\bar{L}_{\m{xw}}, \bar{l}_{\m{xw}}, \bar{l}_{\m{ww}}, \bar{L}_{\m{xw}}, \bar{L}_g, \bar{l}_{\m{xw}}, \bar{l}_{\m{ww}}$.
    \item\label{item:f:lip} There exist constants $l_{f\m{x}}, l_{f\m{w}}, l_{f\m{w}}', l_\m{w}$ such that $\nabla_\m{x} f^m(\m{x},\m{w})$ and $\nabla_\m{w} f^m(\m{x},\m{w})$ are respectively $l_{f\m{x}}$ and $l_{f\m{w}}$ Lipschitz continuous w.r.t. $\m{y}$; $\nabla_{w} f^m(\m{x},\m{w})$ is $l_{f\m{w}}'$-Lipschitz continuous w.r.t. $\m{x}$; $f^m(\m{x},\m{w})$ is $l_\m{w}$-Lipschitz continuous w.r.t. $\m{w}$. These functions have mean Lipschitz continuous property with constants $\bar{l}_{f\m{x}}, \bar{l}_{f\m{w}}, \bar{l}_{f\m{w}}', \bar{l}_\m{w}, \bar{l}_{f\m{x}}, \bar{l}_\m{w}$.
%     \item\label{item:f} There exists a constant $C_f$ such that $f(\m{x}) \leq C_f$.
%     \item \label{assum:heter:f}
%The set of $\{\m{f}^{m}\}$  and  $\{\m{g}^{m} \}$ are $(\tau_0$ and $\tau_1$ Heterogeneous, respectively.
%, i.e., 
% \begin{subequations}
% \begin{align}
%   \sup_{(\m{x}, \m{w})} \frac{1}{M}\sum_{m=1}^M\|\nabla \m{f}^{m}(\m{x}, \m{w})-\nabla f(\m{x}, \m{w}) \|  &\leq \tau,   \\
%   \sup_{(\m{x}, \m{w})} \sup_{m \in [M]}\|\nabla^2 f^{m}(\m{x}, \m{w})-\nabla^2 f(\m{x}, \m{w}) &\| \leq 	\upsilon,
% \end{align}
% \end{subequations}
%for some $(\tau, 	\upsilon)$.
\end{enumerate}
\end{assumption}

\begin{assumption}[Bias and variance]\label{assum:bias:blo} 
%For all $ (n,m,k) \in [N] \times [M] \times [K]$, the random variables $\xi_{r,k}^m$ and $\{\zeta_{r,k}^{m,n}\}$, are independent of each other with zero mean and common variances $\sigma_0$ and $\sigma_n^2$, respectively. That is, 
For all $ m \in  [M]$
\begin{align*}
    \mb{E}_{\xi \sim \mc A^{m}} \left\| \m{p}^{m} \left(\m{x},\m{Z}; \xi\right) - {\mb{P}}^{m}\left(\m{x}, \m{Z}\right) \right\|^2 & \leq \sigma^2,\\
    \mb{E}_{\zeta \sim \mc B^{m}} \left\|
    \m{s}^{m}(\m{x},\m{z}; \zeta) - {\mb{S}}^{m}(\m{x},\m{z}) \right\|^2 & \leq \sigma_1^2%, \text{ for } n \in [N]
\end{align*}
\end{assumption}
\begin{assumption}[Heterogeneity]
\label{assum:heter:h:blo}
For all $ m \in  [M]$, the set of mappings $ \{\mb{P}^{m}\}$ and $ \{\mb{S}^{m}\}$ are $\tau_0$ and $\tau_1$--Heterogeneous, respectively. 
\end{assumption}

\begin{lemma}\label{prop:cond:bilevel}
Assumptions~\ref{item:strong:convex}--\ref{item:f:lip},  and \ref{assum:bias:blo} imply Assumptions~\ref{assum:lip:y*}--\ref{assum:bias}.%implies Assumptions~\ref{assum:lip:y*}--\ref{assum:stmonot:g}
\end{lemma}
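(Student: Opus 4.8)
The plan is to specialize the abstract problem \eqref{fedmsa:prob} to the single-sequence case $N=1$ with $\m{z}^0=\m{x}$ and $\m{z}^1=\m{z}=[\m{w}~\m{v}]$, with the local maps $(\mb{S}^m,\mb{P}^m)$ of \eqref{eq:map:bilevel} (reading the outer gradient in the second block of $\mb{S}^m$ as $\nabla_\m{w} f^m$, consistently with \eqref{eq:v_star_def}), and then to verify A1--A5 of Assumption~\ref{assum:lip:y*} together with the bias/variance bound of Assumption~\ref{assum:bias}, one item at a time against \ref{item:strong:convex}--\ref{item:f:lip} and Assumption~\ref{assum:bias:blo}. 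The bias/variance part needs no work: Assumption~\ref{assum:bias:blo} is verbatim the $N=1$ instance of Assumption~\ref{assum:bias} for the stochastic oracles $\m{p}^m,\m{s}^m$. For A1, note that $\mb{S}^m(\m{x},\m{z})=\m{0}$ splits into $\nabla_\m{w} g^m(\m{x},\m{w})=\m{0}$ and $\nabla^2_\m{w} g^m(\m{x},\m{w})\m{v}=\nabla_\m{w} f^m(\m{x},\m{w})$; by \ref{item:strong:convex} the first has the unique root $\m{w}^*(\m{x})=\argmin_\m{w} g^m(\m{x},\m{w})$, and since $\nabla^2_\m{w} g^m(\m{x},\m{w}^*(\m{x}))\succeq\lambda_1\m{I}$ the second is uniquely solved by $\m{v}^*(\m{x})=[\nabla^2_\m{w} g^m(\m{x},\m{w}^*(\m{x}))]^{-1}\nabla_\m{w} f^m(\m{x},\m{w}^*(\m{x}))$, so $\m{z}^{1,*}(\m{x}):=[\m{w}^*(\m{x})~\m{v}^*(\m{x})]$ is well-defined and unique (the same argument applies to the aggregate $\mb{S}=\sum_m\mb{S}^m$, which is $\lambda_1$-strongly convex in $\m{w}$).

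For A2 I would use the implicit function theorem: differentiating $\nabla_\m{w} g(\m{x},\m{w}^*(\m{x}))=\m{0}$ gives $\nabla\m{w}^*(\m{x})=-[\nabla^2_\m{w} g(\m{x},\m{w}^*(\m{x}))]^{-1}\nabla^2_{\m{xw}} g(\m{x},\m{w}^*(\m{x}))$, of norm at most $L_{\m{xw}}/\lambda_1$ by \ref{item:strong:convex}--\ref{item:g:lip2}; differentiating once more and using that $A\mapsto A^{-1}$ is Lipschitz on $\{A\succeq\lambda_1\m{I}\}$ together with the $l_{\m{xw}}$- and $l_{\m{ww}}$-Lipschitz continuity of $\nabla^2_{\m{xw}} g,\nabla^2_{\m{ww}} g$ (\ref{item:g:lip2}) gives Lipschitzness of $\nabla\m{w}^*$; since $\|\nabla_\m{w} f^m\|\le l_\m{w}$ (\ref{item:f:lip}) one has $\|\m{v}^*(\m{x})\|\le l_\m{w}/\lambda_1$, and writing $\m{v}^*$ as a product of a bounded Lipschitz matrix field and a bounded Lipschitz vector field yields the constants $L_{\m{z},1}$ and $L^{'}_{\m{z},1}$. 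For \ref{assum:lipmaps} and \ref{assum:lipmaps:mean} I would expand each map and use the triangle inequality: e.g.\ in $\mb{P}^m(\m{x},\m{z})-\mb{P}^m(\m{x}',\m{z}')$ the $\nabla_\m{x} f^m$ difference is Lipschitz by \ref{item:f:lip}, and $\nabla^2_{\m{xw}} g^m(\m{x},\m{w})\m{v}-\nabla^2_{\m{xw}} g^m(\m{x}',\m{w}')\m{v}'$, split as $\nabla^2_{\m{xw}} g^m(\m{x},\m{w})(\m{v}-\m{v}')+\big(\nabla^2_{\m{xw}} g^m(\m{x},\m{w})-\nabla^2_{\m{xw}} g^m(\m{x}',\m{w}')\big)\m{v}'$, is Lipschitz once $\|\m{v}'\|$ is bounded (via $\|\m{v}^*\|\le l_\m{w}/\lambda_1$ on the ball the iterates occupy) using $\|\nabla^2_{\m{xw}} g^m\|\le L_{\m{xw}}$ and its $l_{\m{xw}}$-Lipschitzness; composing with $\m{x}\mapsto\m{z}^{1,*}(\m{x})$ from A2 gives $L_p$, the two blocks of $\mb{S}^m(\cdot,\m{z})$ give $L_{s,1}$ and $L_z$ for $\mb{P}^m(\cdot,\m{Z})$, and repeating the identical estimates with the mean-Lipschitz constants $\bar{L}_{\m{xw}},\bar{L}_g,\bar{l}_{\m{xw}},\bar{l}_{\m{ww}},\bar{l}_{f\m{w}}'$ of \ref{item:g:lip2}--\ref{item:f:lip} and taking expectations gives \ref{assum:lipmaps:mean}.

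The main obstacle is \ref{assum:stmonot:g}, the one-point monotonicity of the local map $\mb{S}^m$ at the aggregate fixed point $\m{z}^{1,*}(\m{x})$: I must show $\langle\m{z}-\m{z}^*,\mb{S}^m(\m{x},\m{z})\rangle\ge\lambda_1'\|\m{z}-\m{z}^*\|^2$ for some $\lambda_1'>0$ (the inequality actually used in the proof of Lemma~\ref{thm:fedin}). Decompose the inner product into the $\m{w}$-block $\langle\m{w}-\m{w}^*,\nabla_\m{w} g^m(\m{x},\m{w})\rangle$ and the $\m{v}$-block $\langle\m{v}-\m{v}^*,\nabla^2_\m{w} g^m(\m{x},\m{w})\m{v}-\nabla_\m{w} f^m(\m{x},\m{w})\rangle$: strong convexity of $g^m(\m{x},\cdot)$ (\ref{item:strong:convex}) lower-bounds the diagonal part of each block by $\lambda_1\|\cdot\|^2$, and in the $\m{v}$-block, adding and subtracting $\nabla^2_\m{w} g^m(\m{x},\m{w})\m{v}^*$ and using $\nabla^2_\m{w} g^m(\m{x},\m{w}^*)\m{v}^*-\nabla_\m{w} f^m(\m{x},\m{w}^*)=\m{0}$ produces a cross term of order $(l_{\m{ww}}\|\m{v}^*\|+l_{f\m{w}}')\|\m{w}-\m{w}^*\|\,\|\m{v}-\m{v}^*\|$, absorbed by Young's inequality. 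Two points make this the crux: the absorption leaves the effective modulus $\lambda_1'=\lambda_1-(l_{\m{ww}}l_\m{w}/\lambda_1+l_{f\m{w}}')^2/(2\lambda_1)$, which is positive only when $g$ is sufficiently strongly convex relative to the Lipschitz moduli of its second derivatives; and, since the per-client root of $\mb{S}^m$ differs in general from the aggregate $\m{z}^{1,*}$, a residual $\langle\m{z}-\m{z}^*,\mb{S}^m(\m{x},\m{z}^*)\rangle$ remains that must be bounded through the heterogeneity assumption (Assumption~\ref{assum:heter:h:blo}, which is precisely how the $\tau$ factor of Theorem~\ref{thm:fedmsa} enters) or a homogeneous inner-problem hypothesis. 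I expect to state A5 with the effective modulus $\lambda_1'$ in the regime where these residuals are dominated, in line with \cite{shen2022single,dagreou2022framework}; once A5 holds with a positive modulus, every constant required by Lemmas~\ref{thm:fedin}--\ref{lem:drift2:fedmsa} is among those exhibited above, and the lemma follows by collecting them.
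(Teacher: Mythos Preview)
Your proposal is substantially more detailed than the paper's own proof, which consists of a single sentence deferring to \cite[Lemma~1]{shen2022single}. The route you outline---implicit function theorem for A1--A2, block-by-block Lipschitz bounds for A3--A4, and a strong-convexity-plus-Young's-inequality argument for A5---is exactly the approach that the cited reference carries out, so in that sense your plan is correct and aligned with what the paper intends.

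That said, you have correctly put your finger on two genuine soft spots that the paper (and its one-line proof) glosses over. First, the effective modulus $\lambda_1'$ you derive for A5 is positive only under an implicit relation among $\lambda_1,l_{\m{ww}},l_\m{w},l_{f\m{w}}'$; this restriction is real and is present in \cite{shen2022single} as well, so it is not a flaw in your argument but rather an unstated hypothesis of the lemma. Second, your observation that A5 as written concerns the \emph{per-client} map $\mb{S}^{m}$ at the \emph{global} fixed point $\m{z}^{1,*}$, and hence leaves a residual $\langle\m{z}-\m{z}^*,\mb{S}^m(\m{x},\m{z}^*)\rangle$ that only the heterogeneity assumption can control, is sharp: the lemma as stated does not list Assumption~\ref{assum:heter:h:blo} among its hypotheses, yet your argument (and any honest verification of A5 for the local maps) needs it. In practice the paper only ever invokes A5 through Lemma~\ref{thm:fedin}, which uses the aggregate $\mb{S}^n$ rather than $\mb{S}^{m,n}$; for the aggregate map that residual vanishes and your argument goes through without heterogeneity. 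So the cleanest fix is to verify A5 for $\mb{S}^n$ rather than $\mb{S}^{m,n}$, noting this suffices for the downstream lemmas---which is presumably what \cite{shen2022single} does in the non-federated setting.
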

\begin{proof}
The proof follows a similar structure to that presented in \cite[Lemma~1]{shen2022single}.
%, with some additional steps and expansions.  
\end{proof}

\begin{corollary}[\textbf{Bilevel}]\label{thm:fedmsa:bilevel}
 Suppose Assumptions~\ref{item:strong:convex}--\ref{item:f:lip}, \ref{assum:bias:blo}, and \ref{assum:heter:h:blo} hold for the gradient mappings defined in \eqref{eq:map:bilevel}.
%
% Suppose Assumptions~\ref{item:strong:convex}--\ref{assum:heter:f} hold. 
%
Further, assume $\alpha=\mc{O}(\frac{1}{\tau K})$, and $\beta=\mc{O}(\frac{1}{\tau K})$, $\rho=\Theta(\frac{1}{R})$, then
\begin{align*}
\E \left\|\nabla\m{f}(\tilde{\m{x}})\right\|^2 +\E \left\|\tilde{\m{w}}-\m{w}^{*}(\tilde{\m{x}})\right\|^2   \leq  \mc{O}\left(\frac{\tau}{R} +\frac{1}{\sqrt{K}R} + \frac{\sigma^2}{MKR} + \left(\frac{\sigma}{MKR}\right)^{2/3}\right)
%\+ \left(\frac{\sigma \Delta L_f}{MKR}\right)^{3/2},
\end{align*}
for $\sigma:= \max (\sigma_0, \sigma_1)$ and $\tau=\max(\tau_0, \tau_1)$. %and $\Delta =f(\tilde{\m{x}})-f(\m{x}^*)$.
\end{corollary}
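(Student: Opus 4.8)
The statement is a specialization of Theorem~\ref{thm:fedmsa} to the single-sequence case $N=1$, applied to the bilevel mappings $(\mb{P}^m,\mb{S}^m)$ of \eqref{eq:map:bilevel} with the stacked variable $\m{z}=[\m{w}~~\m{v}]$. The first step is therefore purely a matter of checking that the structural hypotheses of Theorem~\ref{thm:fedmsa}, namely Assumptions~\ref{assum:lip:y*}--\ref{assum:heter:h} with $N=1$, are implied by the bilevel-specific Assumptions~\ref{item:strong:convex}--\ref{item:f:lip} together with \ref{assum:bias:blo} and \ref{assum:heter:h:blo}. This is precisely the content of Lemma~\ref{prop:cond:bilevel} (proved along the lines of \cite[Lemma~1]{shen2022single}), and I would invoke it directly; for completeness the verification proceeds as follows. \textbf{A1}: strong convexity of $g^m(\m{x},\cdot)$ (Assumption~\ref{item:strong:convex}) makes $\nabla_\m{w} g(\m{x},\cdot)=\m{0}$ uniquely solvable, giving $\m{w}^\star(\m{x})$ from the first block of $\mb{S}$, and then the second block gives a unique $\m{v}^\star(\m{x})$ as the solution of a positive-definite linear system; hence $\mb{S}^1(\m{z}^0,\m{z}^{1,*}(\m{z}^0))=\m{0}$ has a unique root. \textbf{A2}: Lipschitzness of $\m{z}^{1,*}$ and $\nabla\m{z}^{1,*}$ follows from the implicit function theorem applied to $\nabla_\m{w} g(\m{x},\cdot)=\m0$, combined with the smoothness and boundedness bounds in Assumptions~\ref{item:g:lip2}--\ref{item:f:lip}. \textbf{A3--A4}: the (mean-)Lipschitz estimates for $\mb{P}^m,\mb{S}^m$ follow by expanding the maps as sums and products of $\nabla f^m,\nabla^2 g^m,\dots$ and bounding each factor using Assumptions~\ref{item:g:lip2}--\ref{item:f:lip}. \textbf{A5}: the $\m{w}$-block of $\mb{S}^m$ is $\lambda_1$-strongly monotone in $\m{w}$ by strong convexity of $g^m$, and the $\m{v}$-block $\nabla^2_\m{w} g^m(\m{x},\m{w})\m{v}-\nabla_\m{w} f^m(\m{x},\m{w})$ is handled by controlling its nonlinear dependence on $\m{w}$ through the Lipschitz continuity of $\nabla^2_\m{w} g^m$ and a local boundedness argument for $\m{v}$. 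Finally, Assumptions~\ref{assum:bias:blo} and \ref{assum:heter:h:blo} are literally Assumptions~\ref{assum:bias} and \ref{assum:heter:h} for $N=1$.

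Second, I would record the two identifications that turn the abstract conclusion of Theorem~\ref{thm:fedmsa} into the stated bilevel bound. On the one hand, by the chain rule and the implicit function theorem (cf. the discussion around \eqref{eq:hgrad}--\eqref{eq:v_star_def}, using linearity of $\mb{P}^m$ in $\m{v}$ and of the linear system \eqref{eq:v_star_def} in the right-hand side), one has
\[
\mb{P}(\m{x}) = \mb{P}\big(\m{x},\m{w}^\star(\m{x}),\m{v}^\star(\m{x})\big) = \nabla f(\m{x}),
\]
so the term $\E\|\mb{P}(\tilde{\m{x}})\|^2$ in Theorem~\ref{thm:fedmsa} becomes $\E\|\nabla f(\tilde{\m{x}})\|^2$. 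On the other hand, with $N=1$ and $\m{z}^1=[\m{w}~~\m{v}]$, the fixed point is $\m{z}^{1,*}(\tilde{\m{z}}^0)=[\m{w}^\star(\tilde{\m{x}})~~\m{v}^\star(\tilde{\m{x}})]$, whence $\|\tilde{\m{w}}-\m{w}^\star(\tilde{\m{x}})\|^2 \le \|\tilde{\m{z}}^1-\m{z}^{1,*}(\tilde{\m{z}}^0)\|^2$; dropping the $\m{v}$-component only weakens the inequality, so the inner-error term of Theorem~\ref{thm:fedmsa} upper bounds $\E\|\tilde{\m{w}}-\m{w}^\star(\tilde{\m{x}})\|^2$.

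Third, I would substitute these identifications into the conclusion of Theorem~\ref{thm:fedmsa} with $N=1$. The hypotheses $\alpha=\mc{O}(1/(\tau K))$, $\beta=\mc{O}(1/(\tau K))$, $\rho=\Theta(1/R)$ are exactly those of Theorem~\ref{thm:fedmsa} restricted to one sequence, with $\tau=\max(\tau_0,\tau_1)$ and $\sigma=\max(\sigma_0,\sigma_1)$ (the outer variance $\sigma_0$ of Assumption~\ref{assum:bias:blo} plays the role of the outer-map variance in Theorem~\ref{thm:fedmsa}, and $\sigma_1$ that of the single inner $\sigma_n$). This yields
\[
\E\left\|\nabla f(\tilde{\m{x}})\right\|^2 + \E\left\|\tilde{\m{w}}-\m{w}^\star(\tilde{\m{x}})\right\|^2 \le \mc{O}\!\left(\frac{\tau}{R} + \frac{1}{\sqrt{K}R} + \frac{\sigma^2}{MKR} + \left(\frac{\sigma}{MKR}\right)^{2/3}\right),
\]
which is the claim.

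\textbf{Main obstacle.} The only nontrivial work is the first step, and within it the verification of \textbf{A5} (one-point strong monotonicity of the stacked map $\mb{S}^m$): although the $\m{w}$-block inherits strong monotonicity from strong convexity of $g^m$, the $\m{v}$-block depends on $\m{w}$ genuinely nonlinearly through $\nabla^2_\m{w} g^m(\m{x},\m{w})\m{v}$, so the inner product $\langle \m{z}-\m{z}^{1,*}(\m{z}^0), \mb{S}^m(\m{x},\m{z})\rangle$ produces cross terms that must be absorbed using the Lipschitz bounds on $\nabla^2 g^m$ and a local boundedness estimate for $\m{v}$ near $\m{v}^\star$; this is the technical heart of Lemma~\ref{prop:cond:bilevel} and the reason it reduces to \cite[Lemma~1]{shen2022single}. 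A secondary, more routine difficulty is bookkeeping the mean-Lipschitz constants of the stochastic bilevel maps $\m{p}^m,\m{s}^m$, which are products of several stochastic quantities and hence require the full list of mean-Lipschitz hypotheses in Assumptions~\ref{item:g:lip2}--\ref{item:f:lip}.
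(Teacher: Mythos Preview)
Your proposal is correct and follows essentially the same route as the paper: invoke Lemma~\ref{prop:cond:bilevel} to transfer the bilevel-specific Assumptions~\ref{item:strong:convex}--\ref{item:f:lip}, \ref{assum:bias:blo}, \ref{assum:heter:h:blo} into Assumptions~\ref{assum:lip:y*}--\ref{assum:heter:h} with $N=1$, then read off the conclusion of Theorem~\ref{thm:fedmsa} under the identifications $\mb{P}(\tilde{\m{x}})=\nabla f(\tilde{\m{x}})$ and $\|\tilde{\m{w}}-\m{w}^\star(\tilde{\m{x}})\|^2\le\|\tilde{\m{z}}^1-\m{z}^{1,*}(\tilde{\m{z}}^0)\|^2$. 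The paper does not spell out a separate proof of the corollary beyond stating Lemma~\ref{prop:cond:bilevel} and deferring its verification to \cite[Lemma~1]{shen2022single}, so your write-up is, if anything, more explicit than the paper's own treatment; in particular your flagging of the cross-term issue in verifying \textbf{A5} for the stacked $[\m{w}~~\m{v}]$ map is exactly the technical point that is hidden inside that reference.
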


\textbf{Comparison with previous federated/non-federated BLO results}:
Theorem~\ref{thm:fedmsa:bilevel} shows that the communication complexity of 
\textsc{FedBLO} to obtain the $\epsilon$-stationary point is upper bounded by  $\mc{O} ( \tau \epsilon^{-1})$. The key feature of \textsc{FedBLO} is updating the local indirect gradient $- \nabla^2_{\m{x}\m{w}} g(\m{x},\m{w}^*(\m{x}))\m{v}^{m,*}(\m{x})$. The prior methods \cite{tarzanagh2022fednest,xiao2022alternating,huang2022fast} was unable to update this indirect gradient in their federated setting. Further,  our algorithm not only achieves linear speedup similar to \cite{huang2023achieving}, but also significantly enhances the convergence rate and sample complexity of federated bilevel methods \cite{tarzanagh2022fednest,xiao2022alternating,huang2022fast}. For detailed comparisons, please refer to Table~\ref{table:fedmsa:results}.

\section{Federated Compositional Optimization}
In this section, we consider the federated multi-level compositional optimization problem 
\begin{equation*}%\label{eq:sc}%\tag{Fed-MCO}
    \min_{\m{x}\in \mathbb{R}^{d_0}}f(\m{x}) :=  \frac{1}{M}\sum_{i=1}^M f^{N}_i( \sum_{i=1}^M f^{N-1}_i(\dots \sum_{i=1}^M f^0_i(\m{x})\dots).     
\end{equation*}
where $f^{m,n} : \mathbb{R}^{d_n} \mapsto \mathbb{R}^{d_{n+1}}$ for $m \in [M]$, $n=0,1,\ldots,N$ with $d_{N+1}=1$. Only stochastic evaluations of each layer function are accessible, i.e.,
\begin{equation*}
    f^{m,n} (\m{x}) := \E_{\zeta^{m,n}}[f^{m,n} (\m{x};\zeta^{m,n})],~ m \in [M], n=0,1,\ldots,N.
\end{equation*}
where $\{ \zeta^{m,n} \}_{m,n}$ are random variables.  Here, we slightly overload the notation and use $f^{m,n} (\m{x};\zeta^{m,n})$ to represent the stochastic version of the mapping.

\begin{assumption}
For any $(m,n) \in [M]\times (N]$:
%\vspace{-.3cm}
%\begin{enumerate}%[label={\textnormal{\textbf{G\arabic*.}}}, wide, labelwidth=!, labelindent=0pt]
%\setlength\itemsep{-0.15em}
%\item 
\label{item:f:sc}  $f^{m,n}(\cdot)$ is $L_{\m{z},n}$-Lipschitz continuous and $L_{\m{z},n}'$-smooth. These functions have mean Lipschitz property with $\bar{L}_{\m{z},n}$ and $\bar{L}_{\m{z},n}'$, respectively.
%\item \label{item:noise:sc} $f^{m,n} (\m{z}_k^{n-1};\zeta_k^n)$ and $\nabla f^{m,n} (\m{z}_k^{n-1};\zeta_k^n)$ are respectively the unbiased estimators of $f^{m,n} (\m{z}_k^{n-1})$ and $\nabla f^{m,n}(\m{z}_k^{n-1})$ with bounded variance; $ f^{m,0} (\m{x}_k;\hat{\zeta}_k^0)$ and $\nabla f^{m,0} (\m{x}_k;\hat{\zeta}_k^0)$ are respectively the unbiased estimators of $f^{m,0} (\m{x}_k)$ and $\nabla f^{m,0}(\m{x}_k)$ with bounded variance.
%\item\label{item:ind} At each iteration $k$, $\hat{\zeta}_k^0,\zeta_k^0,\zeta_k^1,\dots,\zeta_k^N$ are conditionally independent of each other given $\mathcal{F}_k$. 
%\item \label{item:v monotone sc} Function $F(x)$ satisfies the restricted secant inequality: There exists a constant $\lambda_0 > 0$ such that $\langle \nabla F(x), x-x^*\rangle \geq \lambda_0 \|x-x^*\|^2$, where $x^*\coloneqq \arg\max_{x\in\mathbb{R}^{d_1}}F(x)$.
%\item\label{item:f:bound} There exists a constant $C_f$ such that $f(\m{x}) \leq C_f$.
\end{assumption}
\begin{assumption}\label{assum:heter:mf}
The set of functions $\{{f}^{n,m}\}$ are  $\tau_n$--Heterogeneous for $n=0, \ldots, N$.
\end{assumption}
\begin{assumption}[Bias and variance]\label{assum:bias:comp}  
% For all $ (n,m) \in [N] \times [M]$
% \begin{align*}
%     \mb{E}_{\xi \sim \mc A^{m}} \left\| \m{p}^{m} \left(\m{x},\m{Z}; \xi\right) - {\mb{P}}^{m}\left(\m{x}, \m{Z}\right) \right\|^2 & \leq \sigma^2,\\
%     \mb{E}_{\zeta \sim \mc B^{m,n}} \left\|
%     \m{s}^{m,n}(\m{z}^{n-1},\m{z}^{n}; \zeta) - {\mb{S}}^{m,n}(\m{z}^{n-1},\m{z}^{n}) \right\|^2 & \leq \sigma_n^2, \text{ for } n \in [N]
% \end{align*}
% Here, ${\mb{P}}^{m}
 $f^{m,n} (\m{z}^{n-1};\zeta^n)$ and $\nabla f^{m,n} (\m{z}^{n-1};\zeta^n)$ are respectively the unbiased estimators of $f^{m,n} (\m{z}^{n-1})$ and $\nabla f^{m,n}(\m{z}^{n-1})$ with bounded variance $\sigma_n$; $ f^{m,0} (\m{x};\hat{\zeta}^0)$ and $\nabla f^{m,0} (\m{x};\hat{\zeta}^0)$ are respectively the unbiased estimators of $f^{m,0} (\m{x})$ and $\nabla f^{m,0}(\m{x})$ with bounded variance $\sigma_0$.
%\item\label{item:ind} At each iteration $k$, $\hat{\zeta}_k^0,\zeta_k^0,\zeta_k^1,\dots,\zeta_k^N$ are conditionally independent of each other given $\mathcal{F}_k$. 

\end{assumption}

\begin{lemma}\label{prop:cond:bilevel}
Assumptions~\ref{item:f:sc}, \ref{assum:bias:comp}, and \ref{assum:heter:mf} imply Assumptions~\ref{assum:lip:y*}--\ref{assum:heter:h}.%implies Assumptions~\ref{assum:lip:y*}--\ref{assum:stmonot:g}
\end{lemma}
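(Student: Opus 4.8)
The plan is to verify the clauses of Assumptions~\ref{assum:lip:y*}--\ref{assum:heter:h} one at a time for the compositional mappings in \eqref{eq:fed:map:mcp}, leaning on two structural facts: the inner map $\mb{S}^n(\m{z}^{n-1},\m{z}^n)=\m{z}^n-f^{n-1}(\m{z}^{n-1})$ (with $f^{n-1}:=\frac1M\sum_m f^{m,n-1}$) is affine in $\m{z}^n$ with the identity as leading coefficient, while the outer map $\mb{P}^m(\m{x},\m{Z})=\nabla f^{m,0}(\m{x})\nabla f^{m,1}(\m{z}^1)\cdots\nabla f^{m,N}(\m{z}^N)$ is a product of $N+1$ layer Jacobians. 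This is the same reduction carried out in the non-federated setting in \cite[Lemma~1]{shen2022single}; the only new ingredient is that the client index has to be tracked through Assumption~\ref{assum:heter:mf}.

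The inner-map clauses are immediate. Solving $\mb{S}^n(\m{z}^{n-1},\m{z}^{n,*})=\m{0}$ gives the explicit, unique fixed point $\m{z}^{n,*}(\m{z}^{n-1})=f^{n-1}(\m{z}^{n-1})$ (A1), with $\nabla\m{z}^{n,*}=\nabla f^{n-1}$, so Assumption~\ref{item:f:sc} supplies the constants $L_{\m{z},n},L'_{\m{z},n}$ of A2 (and their mean versions) directly. Since $\mb{S}^n(\m{z}^{n-1},\m{z}^n)=\m{z}^n-\m{z}^{n,*}(\m{z}^{n-1})$ we get $\langle\m{z}^n-\m{z}^{n,*},\mb{S}^n(\m{z}^{n-1},\m{z}^n)\rangle=\|\m{z}^n-\m{z}^{n,*}\|^2$, i.e.\ one-point strong monotonicity with $\lambda_n=1$ (this is the form in which A5 is used in Lemma~\ref{thm:fedin}); the gap between $\mb{S}^{m,n}$ and this ideal increment is the term $f^{n-1}(\m{z}^{n-1})-f^{m,n-1}(\m{z}^{n-1})$, which is exactly what the heterogeneity bound of Assumption~\ref{assum:heter:mf} governs. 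The Lipschitz clauses for $\mb{S}^{m,n}(\cdot,\m{z}^n)$ in A3--A4 are the (mean-)Lipschitzness of $f^{m,n-1}$; the bias/variance bound of Assumption~\ref{assum:bias} for the $\m{s}$-estimators is Assumption~\ref{assum:bias:comp} verbatim; and the $\mb{S}^{m,n}$-heterogeneity of Assumption~\ref{assum:heter:h} is Assumption~\ref{assum:heter:mf} because $\nabla_{\m{z}^{n-1}}\mb{S}^{m,n}=-\nabla f^{m,n-1}$.

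The real work is the outer map, which I would handle uniformly through the product-telescoping identity $\prod_jA_j-\prod_jB_j=\sum_j(A_0\cdots A_{j-1})(A_j-B_j)(B_{j+1}\cdots B_N)$ and submultiplicativity of the spectral norm. Taking $A_j,B_j=\nabla f^{m,j}$ at two arguments, bounding the surviving factors by $\|\nabla f^{m,j}\|\le L_{\m{z},j}$ and the central factor by the smoothness $L'_{\m{z},j}$ times the displacement, and summing over the coordinates of $(\m{x},\m{Z})$ that each $\nabla f^{m,j}$ depends on, produces the Lipschitz constants $L_p,L_z$ of A3 as polynomials in $N$ with coefficients built from products $\prod_j L_{\m{z},j}$ --- these are the ``problem dependent constants of a polynomial of $N$'' mentioned after Theorem~\ref{thm:fedmsa}; for $\mb{P}^m(\m{x})$ one further composes through $\m{z}^{\ell,*}(\cdot)=f^{\ell-1}(\cdots f^0(\m{x})\cdots)$, which stays Lipschitz. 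The mean-Lipschitz clause A4 uses the \emph{same} expansion for the stochastic product: the per-layer samples $\zeta^{m,j}$ are independent across $j$, each $\nabla f^{m,j}(\cdot;\zeta^{m,j})$ is unbiased with $\mb{E}\|\nabla f^{m,j}(\cdot;\zeta^{m,j})\|^2\le 2L_{\m{z},j}^2+2\sigma_j^2$ by Assumption~\ref{assum:bias:comp}, so after taking expectation the product factors and only the mean-smoothness constant $\bar{L}'_{\m{z},j}$ of the central factor multiplies the displacement. The variance clause of Assumption~\ref{assum:bias} for the $\m{p}$-estimator follows from the same ``mean-plus-noise'' decomposition: terms containing exactly one isolated mean-zero noise factor vanish in expectation, and what is left is bounded by a sum of products of per-layer second moments, hence $\mathcal{O}(\sigma^2)$ with an $N$-dependent constant. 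Finally, the $\mb{P}^m$-heterogeneity of Assumption~\ref{assum:heter:h} comes from the telescoping identity with $A_j=\nabla f^{m,j}$, $B_j=\nabla f^j$, $\|A_j-B_j\|\le\tau_j$ (and, for $\nabla\mb{P}^m$, differentiating the product, which also brings in the smoothness constants).

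The main obstacle is purely bookkeeping. Because $\mb{P}^m$ couples $\m{x},\m{z}^1,\ldots,\m{z}^N$ through a single product, every estimate for it is a double sum --- over the $N+1$ factors and over the coordinates of $(\m{x},\m{Z})$ --- and one has to track which layer's displacement each surviving term carries so the final constants come out as honest polynomials in $N$. There is no conceptual difficulty beyond this: the affine structure of $\mb{S}$ and the multilinear structure of $\mb{P}$ force all of Assumptions~\ref{assum:lip:y*}--\ref{assum:heter:h} to reduce to the layer-wise hypotheses of Assumptions~\ref{item:f:sc}, \ref{assum:bias:comp}, and~\ref{assum:heter:mf}, exactly as in \cite[Lemma~1]{shen2022single}.
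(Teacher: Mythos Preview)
Your proposal is correct and is precisely the approach the paper has in mind: the paper's own proof is a one-line deferral to \cite{shen2022single}, and your clause-by-clause verification---the affine structure of $\mb{S}^n$ giving A1, A2, A5 with $\lambda_n=1$, and the product-telescoping identity handling the Lipschitz, variance, and heterogeneity clauses for $\mb{P}^m$---is exactly that argument spelled out. One small point: for the compositional case the paper cites \cite[Lemma~3]{shen2022single} rather than Lemma~1 (Lemma~1 is the bilevel analogue), so you may want to adjust the reference.
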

\begin{proof}
The proof follows a similar structure to that presented in \cite[Lemma~3]{shen2022single}.
%, with some additional steps and expansions.  
\end{proof}

\begin{corollary}[\textbf{Multi-Level Compositional}]\label{thm:fedmsa:mco}
%Suppose Assumptions~\ref{item:f:sc}--\ref{assum:heter:mf} hold. 
 Suppose Assumptions~\ref{item:f:sc}, \ref{assum:bias:comp}, and \ref{assum:heter:mf} hold. Further, assume $\alpha=\mc{O}(\frac{1}{\tau K})$, and $\beta=\mc{O}(\frac{1}{\tau K})$, $\rho=\mc{O}(\frac{1}{R})$, then 
\begin{align*}
\E \left\|\nabla\m{f}(\tilde{\m{x}})\right\|^2 + \sum_{n=1}^N\E \left\|\tilde{\m{z}}^n-\m{z}^{n,*}(\tilde{\m{z}}^{n-1})\right\|^2 \leq \mc{O}\left(\frac{\tau}{R} +\frac{1}{\sqrt{K}R} + \frac{\sigma^2}{MKR} + \left(\frac{\sigma}{MKR}\right)^{2/3}\right).
\end{align*}
Here, $\sigma:= \max (\sigma_0, \sigma_1,  \cdots, \sigma_N)$,  $\tau=\max(\tau_0, \cdots, \tau_N)$,  %$\Delta =f(\m{x})-\inf_{\m{x}} f(\m{x})$, 
and $\mc{O}$ hides problem dependent constants of a polynomial of $N$.
\end{corollary}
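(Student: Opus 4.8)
The plan is to obtain Corollary~\ref{thm:fedmsa:mco} as a specialization of Theorem~\ref{thm:fedmsa} to the compositional mappings \eqref{eq:fed:map:mcp}. Only two things must be checked: (i) that these mappings satisfy the structural Assumptions~\ref{assum:lip:y*}--\ref{assum:heter:h} under the compositional-level hypotheses \ref{item:f:sc}, \ref{assum:bias:comp}, \ref{assum:heter:mf} (this is Lemma~\ref{prop:cond:bilevel}); and (ii) that the outer map $\mb{P}$, evaluated at the nested fixed points, equals $\nabla f$, so that the left-hand side of Theorem~\ref{thm:fedmsa} becomes the quantity appearing in the corollary.

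For (ii): since $\mb{S}^{m,n}(\m{z}^{n-1},\m{z}^n)=\m{z}^n-f^{m,n-1}(\m{z}^{n-1})$, the aggregate $\mb{S}^n=\frac1M\sum_m\mb{S}^{m,n}$ vanishes precisely at $\m{z}^{n,*}(\m{z}^{n-1})=f^{n-1}(\m{z}^{n-1})$, where $f^{n-1}:=\frac1M\sum_m f^{m,n-1}$. Unrolling the nested definition of $\mb{P}(\m{x})$ gives $\m{z}^{n,*}(\cdots\m{z}^{1,*}(\m{x})\cdots)=f^{n-1}\circ\cdots\circ f^{0}(\m{x})$, hence by the chain rule
\[
\mb{P}(\m{x})=\nabla f^{0}(\m{x})\,\nabla f^{1}(f^{0}(\m{x}))\cdots\nabla f^{N}\big(f^{N-1}(\cdots f^{0}(\m{x})\cdots)\big)=\nabla\big(f^{N}\!\circ\cdots\circ f^{0}\big)(\m{x})=\nabla f(\m{x}).
\]
Thus $\|\mb{P}(\tilde{\m{x}})\|^2=\|\nabla f(\tilde{\m{x}})\|^2$, and the two error terms of Theorem~\ref{thm:fedmsa} are exactly those in the statement.

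Step (i), which I expect to carry the real work, I would prove by following \cite[Lemma~3]{shen2022single}. Existence and uniqueness of $\m{z}^{n,*}$ is immediate from the previous paragraph, and its Lipschitz/smoothness bounds are inherited from the corresponding properties of $f^{n-1}$ in \ref{item:f:sc}. The increments $\mb{S}^{m,n}$ are affine, so their Lipschitz and mean-Lipschitz constants are read off \ref{item:f:sc} and \ref{assum:bias:comp} directly; the one-point monotonicity \ref{assum:stmonot:g} holds with $\lambda_n=1$ since $\mb{S}^{m,n}$ is affine with the identity as leading coefficient in $\m{z}^n$ — precisely the observation used for the compositional case in \cite{shen2022single}. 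The harder ingredient is the (mean-)Lipschitz continuity of the outer map $\mb{P}^m$, which is the product of the $N+1$ matrix-valued factors $\nabla f^{m,0},\ldots,\nabla f^{m,N}$: here one uses the telescoping identity for products, together with the facts that each $\nabla f^{m,n}$ is bounded by $L_{\m{z},n}$ (from Lipschitzness of $f^{m,n}$) and is $L'_{\m{z},n}$-Lipschitz (from smoothness); this is also where the constants that grow polynomially in $N$ enter. Finally, Assumptions~\ref{assum:bias} and \ref{assum:heter:h} follow from \ref{assum:bias:comp} and \ref{assum:heter:mf} by propagating the per-layer variances $\sigma_n$ and heterogeneities $\tau_n$ through the same products.

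With (i) in hand, all hypotheses of Theorem~\ref{thm:fedmsa} hold for \eqref{eq:fed:map:mcp}; the step-size prescriptions $\alpha=\mc{O}(1/(\tau K))$, $\beta=\mc{O}(1/(\tau K))$, $\rho=\Theta(1/R)$ are those of the theorem, and the auxiliary conditions ($\beta_n\le\lambda_n/(2L_{s,n})$, $\alpha\le 1/(2L_{\m{z},1}^2)$, and the conditions in Lemmas~\ref{thm:fedin}, \ref{lem:drift:fedmsa2}, \ref{lem:drift2:fedmsa}) are satisfied by choosing the hidden constants small enough. Invoking Theorem~\ref{thm:fedmsa} with $\sigma=\max_n\sigma_n$ and $\tau=\max_n\tau_n$, and substituting $\|\mb{P}(\tilde{\m{x}})\|^2=\|\nabla f(\tilde{\m{x}})\|^2$, yields the claimed rate. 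The main obstacle lies entirely inside (i): controlling the Lipschitz/smoothness constants of the length-$(N+1)$ matrix product defining $\mb{P}^m$ and tracking how the layerwise noise and heterogeneity combine through it.
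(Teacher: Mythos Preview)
Your proposal is correct and matches the paper's approach exactly: the paper offers no explicit proof of the corollary, instead relying on Lemma~\ref{prop:cond:bilevel} (itself deferred to \cite[Lemma~3]{shen2022single}) to transport Assumptions~\ref{item:f:sc}, \ref{assum:bias:comp}, \ref{assum:heter:mf} to the MSA Assumptions~\ref{assum:lip:y*}--\ref{assum:heter:h}, after which Theorem~\ref{thm:fedmsa} is invoked verbatim. Your write-up is in fact more detailed than the paper's, since you also spell out why $\mb{P}(\tilde{\m{x}})=\nabla f(\tilde{\m{x}})$ via the chain rule and why $\lambda_n=1$ in the affine case.
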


\textbf{Comparison with previous federated/non-federated MCO results}: Corollary \ref{thm:fedmsa:mco} implies a sample complexity of $\mc{O} ( \epsilon^{-1.5})$ and communication complexity of $\mc{O} ( \tau \epsilon^{-1})$. The $\tau$ factor controls the benefit we can obtain from small heterogeneity.  Moreover,  existing federated methods for MCO only consider the double sequence case, i.e., $N=1$ \cite{huang2021compositional,tarzanagh2022fednest,huang2022faster}. Our approach also provides the first communication complexity analysis; please refer to Table~\ref{table:fedmsa:results} for further comparisons.

\section{Other Technical Lemmas}\label{sec:techn}
 We collect additional technical lemmas in this section.

\begin{lemma}\label{lem:Jens}
For any set of vectors $\{\m{x}_i\}_{i=1}^m$ with $\m{x}_i\in\mathbb{R}^d$, we have
 \begin{equation}
     \norm[\bigg]{ \sum\limits_{i=1}^{m} \m{x}_i}^2 \leq m \sum\limits_{i=1}^{m} {\Vert \m{x}_i \Vert}^2.
\label{eqn:Jens}
 \end{equation} 
 \end{lemma}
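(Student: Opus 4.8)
The statement is a standard consequence of the convexity of the squared Euclidean norm (equivalently, of the Cauchy–Schwarz inequality), so the plan is short. I would give the argument via Jensen's inequality: since $t \mapsto \norm{t}^2$ is convex on $\mathbb{R}^d$, applying it to the uniform average of the $\m{x}_i$ yields
\begin{equation*}
\norm[\bigg]{\frac{1}{m}\sum_{i=1}^m \m{x}_i}^2 \leq \frac{1}{m}\sum_{i=1}^m \norm{\m{x}_i}^2,
\end{equation*}
and multiplying both sides by $m^2$ gives exactly \eqref{eqn:Jens}.

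Alternatively — and this is the version I would actually write out since it is self-contained — expand the left-hand side and bound the cross terms. We have
\begin{equation*}
\norm[\bigg]{\sum_{i=1}^m \m{x}_i}^2 = \sum_{i=1}^m \norm{\m{x}_i}^2 + \sum_{i \neq j} \inner{\m{x}_i}{\m{x}_j},
\end{equation*}
and by Young's inequality $\inner{\m{x}_i}{\m{x}_j} \leq \tfrac{1}{2}\norm{\m{x}_i}^2 + \tfrac{1}{2}\norm{\m{x}_j}^2$, so the double sum is at most $\tfrac{1}{2}\sum_{i\neq j}(\norm{\m{x}_i}^2 + \norm{\m{x}_j}^2) = (m-1)\sum_{i=1}^m \norm{\m{x}_i}^2$. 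Adding the diagonal term $\sum_i \norm{\m{x}_i}^2$ gives the claimed bound $m\sum_{i=1}^m \norm{\m{x}_i}^2$.

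There is essentially no obstacle here; the only thing to be careful about is that the constant $m$ is tight (achieved when all $\m{x}_i$ are equal), so no sharper constant should be claimed. I would keep the proof to two or three lines using whichever of the two routes above is most consistent with the notational conventions already in the paper.
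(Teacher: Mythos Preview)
Your proposal is correct: both routes you outline (Jensen's inequality applied to the convex map $t\mapsto\norm{t}^2$, or direct expansion plus Young's inequality on the cross terms) are valid and standard, and your remark about tightness when all $\m{x}_i$ coincide is accurate. The paper itself states this lemma without proof (it is collected in the ``Other Technical Lemmas'' section as a known fact), so there is no approach in the paper to compare against; either of your arguments would serve perfectly well as the missing justification.
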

\begin{lemma}\label{lem:trig}
For any $\m{x},\m{y}\in\mathbb{R}^d$, the following holds for any $c >0$:
\begin{equation}
    {\Vert \m{x}+\m{y} \Vert}^2 \leq (1+c){\Vert \m{x} \Vert}^2 + \left(1+\frac{1}{c}\right){\Vert \m{y} \Vert}^2.
\label{eqn:triang}
\end{equation}
\end{lemma}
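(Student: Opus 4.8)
The plan is to expand the squared Euclidean norm and control the cross term with Young's inequality. First I would write, using bilinearity of the inner product,
\[
\|\m{x}+\m{y}\|^2 = \|\m{x}\|^2 + 2\inner{\m{x}}{\m{y}} + \|\m{y}\|^2 .
\]
The only nontrivial ingredient is the elementary bound $2\inner{\m{x}}{\m{y}} \le c\|\m{x}\|^2 + c^{-1}\|\m{y}\|^2$, valid for every $c>0$; this is immediate from expanding the nonnegative quantity $\bigl\|\sqrt{c}\,\m{x} - c^{-1/2}\m{y}\bigr\|^2 \ge 0$ (equivalently, AM--GM applied to the two nonnegative numbers $c\|\m{x}\|^2$ and $c^{-1}\|\m{y}\|^2$, whose geometric mean is $\|\m{x}\|\|\m{y}\| \ge \inner{\m{x}}{\m{y}}$ by Cauchy--Schwarz).

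Substituting this into the expansion yields
\[
\|\m{x}+\m{y}\|^2 \le \|\m{x}\|^2 + c\|\m{x}\|^2 + c^{-1}\|\m{y}\|^2 + \|\m{y}\|^2 = (1+c)\|\m{x}\|^2 + \Bigl(1+\tfrac{1}{c}\Bigr)\|\m{y}\|^2 ,
\]
which is exactly the claimed inequality. There is essentially no obstacle here: the statement is a one-line consequence of Young's inequality, and the only real "choice" is which form of that inequality to invoke. If one wishes to avoid any reliance on inner-product (Hilbert-space) structure — so that the same argument applies verbatim in a general normed space, where the conclusion still holds — one can instead start from the triangle inequality $\|\m{x}+\m{y}\| \le \|\m{x}\| + \|\m{y}\|$, square both sides to get $\|\m{x}+\m{y}\|^2 \le \|\m{x}\|^2 + 2\|\m{x}\|\|\m{y}\| + \|\m{y}\|^2$, and apply $2ab \le c\,a^2 + c^{-1}b^2$ with $a=\|\m{x}\|$ and $b=\|\m{y}\|$; this routes around the cross term entirely and gives the same bound.
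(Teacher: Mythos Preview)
Your proof is correct; the paper itself states this lemma without proof, treating it as a standard technical fact, and your argument via the expansion $\|\m{x}+\m{y}\|^2 = \|\m{x}\|^2 + 2\inner{\m{x}}{\m{y}} + \|\m{y}\|^2$ together with Young's inequality on the cross term is precisely the canonical justification one would supply.
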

\begin{lemma}\label{lem:rand:zer}
For any set of independent, mean zero random variables $\{\m{x}_i\}_{i=1}^m$ with $\m{x}_i\in\mathbb{R}^d$, we have
 \begin{equation}
     \mb{E}\left[\norm[\bigg]{ \sum\limits_{i=1}^{m} \m{x}_i}^2\right] = \sum\limits_{i=1}^{m} \mb{E}\left[{\Vert \m{x}_i \Vert}^2\right].
\label{eqn:randJens}
 \end{equation} 
\end{lemma}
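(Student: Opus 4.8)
The plan is to expand the squared Euclidean norm of the sum via bilinearity of the inner product and then use independence together with the zero-mean assumption to annihilate every cross term. First I would write
\[
\norm[\bigg]{\sum_{i=1}^m \m{x}_i}^2 = \sum_{i=1}^m \norm{\m{x}_i}^2 + \sum_{\substack{i,j=1\\ i\neq j}}^m \inner{\m{x}_i}{\m{x}_j},
\]
which is just the identity $\inner{\sum_i \m{x}_i}{\sum_j \m{x}_j}=\sum_{i,j}\inner{\m{x}_i}{\m{x}_j}$ for the standard inner product on $\mathbb{R}^d$, with the diagonal terms collected into $\sum_i\norm{\m{x}_i}^2$. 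Taking expectations of both sides and using linearity of $\mb{E}$ reduces the claim to showing that $\mb{E}[\inner{\m{x}_i}{\m{x}_j}]=0$ for every pair $i\neq j$.

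For that step I would invoke the independence of $\m{x}_i$ and $\m{x}_j$. Writing the inner product coordinate-wise, $\inner{\m{x}_i}{\m{x}_j}=\sum_{\ell=1}^d (\m{x}_i)_\ell (\m{x}_j)_\ell$, and since the independence of the two vectors implies independence of any coordinate of $\m{x}_i$ from any coordinate of $\m{x}_j$, each expectation factorizes as $\mb{E}[(\m{x}_i)_\ell (\m{x}_j)_\ell]=\mb{E}[(\m{x}_i)_\ell]\,\mb{E}[(\m{x}_j)_\ell]$. The zero-mean hypothesis gives $\mb{E}[(\m{x}_j)_\ell]=0$ for every $\ell$, so every summand vanishes and hence $\mb{E}[\inner{\m{x}_i}{\m{x}_j}]=\inner{\mb{E}[\m{x}_i]}{\mb{E}[\m{x}_j]}=0$.

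Substituting this back, all off-diagonal terms disappear in expectation, leaving $\mb{E}\big[\norm{\sum_i \m{x}_i}^2\big]=\sum_{i=1}^m \mb{E}\big[\norm{\m{x}_i}^2\big]$, which is exactly \eqref{eqn:randJens}. There is no substantive obstacle here; the only point requiring a word of care is the factorization $\mb{E}[\inner{\m{x}_i}{\m{x}_j}]=\inner{\mb{E}[\m{x}_i]}{\mb{E}[\m{x}_j]}$, which is licensed by independence together with integrability. Implicitly one assumes $\mb{E}\norm{\m{x}_i}^2<\infty$ for each $i$, so that all first and second moments (and the cross expectations via Cauchy--Schwarz) are finite and the manipulations above are justified.
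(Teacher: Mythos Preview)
Your argument is correct and is exactly the standard proof of this identity: expand the squared norm by bilinearity, then kill the cross terms using independence and the zero-mean assumption. The paper itself states this lemma without proof (it is listed among auxiliary technical lemmas), so there is nothing to compare against; your write-up would serve perfectly well as the omitted justification.
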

\end{document}